\icmltitlerunning{Maximizing Monotone DR-submodular Continuous Functions by Derivative-free Optimization}
\newenvironment{myproof}[1]
{\par\noindent\textbf{Proof of #1.}\ \enspace\ignorespaces\begin{allowdisplaybreaks}}
	{\end{allowdisplaybreaks}\hspace{\stretch{1}}$\square$}
\newcommand{\norm}[1]{\left\lVert#1\right\rVert}
\newtheorem{definition}{Definition}
\newtheorem{theorem}{Theorem}
\newtheorem{lemma}{Lemma}
\newtheorem{assumption}{Assumption}
\newtheorem{remark}{Remark}
\begin{document}
	
	\twocolumn[
	\icmltitle{Maximizing Monotone DR-submodular Continuous Functions \\
		by Derivative-free Optimization}
	\icmlsetsymbol{equal}{*}
	
	\begin{icmlauthorlist}
		\icmlauthor{Yibo Zhang}{ustc}
		\icmlauthor{Chao Qian}{ustc}
		\icmlauthor{Ke Tang}{sust}
	\end{icmlauthorlist}
	
	\icmlaffiliation{ustc}{Anhui Province Key Lab of Big Data Analysis and Application, USTC, China}
	\icmlaffiliation{sust}{Shenzhen Key Lab of Computational Intelligence, SUSTech, China}
	
	\icmlcorrespondingauthor{Yibo Zhang}{zyb233@mail.ustc.edu.cn}
	\icmlcorrespondingauthor{Chao Qian}{chaoqian@ustc.edu.cn}
	\icmlcorrespondingauthor{Ke Tang}{tangk3@sustc.edu.cn}
	
	\icmlkeywords{Submodular optimization, non-convex optimization}
	
	\vskip 0.3in
	]
	
	
	
	\printAffiliationsAndNotice{}  

	\begin{abstract}
		In this paper, we study the problem of monotone (weakly) DR-submodular continuous maximization. While previous methods require the gradient information of the objective function, we propose a derivative-free algorithm LDGM for the first time. We define $\beta$ and $\alpha$ to characterize how close a function is to continuous DR-submodulr and submodular, respectively. Under a convex polytope constraint, we prove that LDGM can achieve an $(1-e^{-\beta}-\epsilon)$-approximation guarantee after $O(1/\epsilon)$ iterations, which is the same as the best gradient-based algorithm. Moreover, in some special cases, a variant of LDGM can achieve an $((\alpha/2)(1-e^{-\alpha})-\epsilon)$-approximation guarantee for (weakly) submodular functions. We also give theoretical results showing that a generalization of LDGM is robust under additive noise or in stochastic settings. Empirical results on budget allocation and maximum coverage verify the effectiveness of LDGM.
	\end{abstract}

	\section{Introduction}
	
	Submodularity, which implies the diminishing return property, is usually defined on set functions. Submodular set function maximization arises in many applications, such as maximum coverage~\cite{feige1998threshold} and influence maximization~\cite{kempe2003maximizing}, to name a few. It is NP-hard in general, and has received a lot of attentions~\cite{krause2014submodular}. A well-known result is that for maximizing monotone submodular set functions with a size constraint, the greedy algorithm, which iteratively selects one element with the largest marginal gain, can achieve the optimal approximation guarantee of $(1-1/e)$~\cite{nemhauser1978best,nemhauser1978analysis}.
	
	Meanwhile, many practical applications involve objective functions defined over the integer lattice instead of subsets, e.g., budget allocation~\cite{alon2012optimizing} and welfare maximization~\cite{kapralov2013online}. Submodularity is thus extended to functions over the integer lattice. In this case, submodularity does not imply the diminishing return property, which is called DR-submodularity~\cite{soma2016maximizing}. The latter is stronger, though they are equivalent for set functions. For monotone DR-submodular maximization with a size constraint, the greedy algorithm can achieve an $(1-1/e)$-approximation guarantee~\cite{soma2014optimal}; while for submodular functions, the generalized greedy algorithm, which can select multiple copies of the same element simultaneously in one iteration, achieves an $(1/2)(1-1/e)$-approximation guarantee~\cite{alon2012optimizing}.
	
	Recently, submodularity has been further extended to continuous domains. Submodular continuous functions are a class of generally non-convex and non-concave functions, which also appear in many applications~\cite{bian2017guaranteed}. For maximizing monotone DR-submodular continuous functions with a convex polytope constraint,~\citet{chekuri2015multiplicative} proposed a multiplicative weight update method which can achieve an $(1-1/e-\epsilon)$-approximation guarantee after $O(n^2/\epsilon^2)$ steps.~\citet{bian2017guaranteed} considered a down-closed convex constraint, and proposed a Frank-Wolfe (FW) variant algorithm which can achieve an $(1-1/e-\epsilon)$-approximation guarantee after $O(1/\epsilon)$ iterations. Later, stochastic monotone DR-submodular function maximization under a general convex constraint was studied.~\citet{hassani2017gradient} showed that the stochastic gradient ascent (SGA) method can achieve an $(1/2-\epsilon)$-approximation guarantee after $O(1/\epsilon^2)$ iterations, and~\citet{mokhtari2017conditional} proposed the stochastic continuous greedy (SCG) algorithm which can achieve an $(1-1/e-\epsilon)$-approximation guarantee after $O(1/\epsilon^3)$ iterations. Note that there were also some works focusing on submodular continuous minimization~\cite{bach2016submodular,bach2018efficient,staib2017robust} and non-monotone submodular continuous maximization~\cite{bian2017continuous,bian2017guaranteed,NIPS2018_8168}.
	
	All the above-mentioned algorithms require access to the gradients of objective functions or their unbiased estimates. Thus, they cannot be directly applied to non-differentiable functions, which arise in many natural applications~\cite{bian2017guaranteed}. For the generalized maximum coverage problem, each subset $C_i \; (1\leq i \leq n)$ has a confidence $x_i \in [0,1]$ and a monotone covering function $p_i: \mathbb{R}_{+} \rightarrow 2^{C_i}$, and the objective function is $|\cup^n_{i=1} p_i(x_i)|$. For the extended text summarization problem, each sentence $i$ has a confidence $x_i \in [0,1]$ and a monotone covering function $p_i: \mathbb{R}_{+} \rightarrow 2^C$ where $C$ denotes the set of concepts, and the objective function is $\sum_{j \in \cup_i p_i(x_i)} c_j$, where $c_j$ denotes the credit of concept $j$. These two objective functions are obviously non-differentiable. Moreover, in noisy or black-box environments where only polluted objective values can be obtained, numerical differentiation is known to be ill-posed \cite{engl1996regularization}, and thus it can be difficult to acquire good gradient estimates.
	
	In this paper, we propose the first derivative-free algorithm for the problem of maximizing monotone (weakly) DR-submodular continuous functions subject to a convex polytope constraint. The idea is to discretize the original continuous optimization problem into an optimization problem over the integer lattice by utilizing the frontier of the vertex set of the polytope, and then apply the greedy algorithm. This approach only requires the oracle access to the function value, and we call it Lattice Discretization Greedy Method (LDGM).  We introduce the notion of the submodularity ratio $\alpha \in [0,1]$ and the DR-submodularity ratio $\beta \in [0,1]$ to characterize how close a general continuous function $f$ is to submodular and DR-submodular, respectively. Our main theoretical results can be summarized as follows:\\[5pt]
	$\bullet$ For monotone (weakly) DR-submodular continuous maximization with a convex polytope constraint, we prove that LDGM can achieve an $(1-e^{-\beta}-\epsilon)$-approximation guarantee after $O(1/\epsilon)$ iterations.\\[5pt]
	$\bullet$ For monotone (weakly) submodular continuous maximization with a convex polytope constraint, we prove that in some special situations, LDGM using the generalized greedy algorithm can achieve an $((\alpha/2)(1-e^{-\alpha})-\epsilon)$-approximation guarantee after $O(1/\epsilon)$ iterations.\\[5pt]
	$\bullet$ By introducing the look-ahead and averaging techniques into LDGM, we prove that the resulting algorithm, generalized LDGM, is robust in noisy environments, including additive noise or stochastic settings.\\[5pt]
	We perform experiments on the applications of budget allocation and maximum coverage. Empirical results on real-world data sets show the superior performance of LDGM.
	
	The rest of the paper is organized as follows. Section~2 introduces the concerned problem and gives some preliminaries. In Section~3, we propose the LDGM algorithm and give its approximation guarantee. Section~4 presents the analysis of generalized LDGM under noise, and Section~5 gives the empirical studies. In Section~6, we conclude this paper.

	\section{Monotone DR-submodular Continuous Maximization}
	
	\textbf{Notation.} Let $\mathbb{R}$, $\mathbb{R}_{+}$ and $\mathbb{Z}_+$ denote the set of reals, non-negative reals and non-negative integers, respectively. For two vectors $\bm{x},\bm{y} \in \mathbb{R}^n$, let $\bm{x} \wedge \bm{y}$ and $\bm{x} \vee \bm{y}$ denote the coordinate-wise minimum and maximum, respectively, that is, $\bm{x} \wedge \bm{y}=(\min\{x_1,y_1\},\ldots,\min\{x_n,y_n\})$ and $\bm{x} \vee \bm{y}=(\max\{x_1,y_1\},\ldots,\max\{x_n,y_n\})$. We use $\norm{\cdot}$ to denote the Euclidean norm of a vector. The $i$-th unit vector is denoted by $\bm{\chi}_i$, that is, the $i$-th entry of $\bm{\chi}_i$ is 1 and others are 0; the all-zeros and all-ones vectors are denoted by $\bm{0}$ and $\bm{1}$, respectively. Let $[n]$ denote the set $\{1,2,\ldots,n\}$. We denote $conv(\cdot)$ as the convex hull of a set. For $\bm{x},\bm{y}\in \mathbb{R}^n$, we say $\bm{x}\leq \bm{y}$ if $x_i \leq y_i$ for every $i$; $\bm{x}<\bm{y}$ if $\bm{x}\leq \bm{y}$ and $x_i < y_i$ for some $i$.
	
	We study the continuous functions $f:\mathcal{X}=\prod^n_{i=1} \mathcal{X}_i \rightarrow \mathbb{R}$, where $\mathcal{X}_i$ is a compact subset of $\mathbb{R}_{+}$. A function $f:\mathcal{X} \rightarrow \mathbb{R}$ is monotone if for any $\bm{x} \leq \bm{y}$, $f(\bm{x}) \leq f(\bm{y})$. Without loss of generality, we assume that monotone functions are normalized, i.e., $f(\bm{0})=0$. For a function $f:\mathcal{X} \rightarrow \mathbb{R}$, submodularity, as presented in Definition~\ref{def-sub}, does not imply the diminishing return property, called DR-submodularity as presented in Definition~\ref{def-dr-sub}. DR-submodularity is stronger than submodularity, that is, a DR-submodular function is submodular, but not vice versa. In~\cite{bian2017guaranteed}, it has been proved that submodularity is equivalent to a weak version of DR-submodularity, as presented in Definition~\ref{def-weak-dr-sub}.
	
	\begin{definition}[Submodular~\cite{bach2016submodular}]\label{def-sub}
		A function $f:\mathcal{X} \rightarrow \mathbb{R}$ is submodular if for any $\bm{x},\bm{y} \in \mathcal{X}$,
		\begin{align}\label{eq:sub}
		&f(\bm{x})+f(\bm{y}) \geq f(\bm{x} \wedge \bm{y})+f(\bm{x} \vee \bm{y}).
		\end{align}
	\end{definition}
	
	\begin{definition}[DR-Submodular~\cite{bian2017guaranteed}]\label{def-dr-sub}
		A function $f:\mathcal{X} \rightarrow \mathbb{R}$ is DR-submodular if for any $\bm{x} \leq \bm{y}$, $k \in \mathbb{R}_+$ and $i \in [n]$,
		\begin{align}\label{eq:dr-sub}
		&f(\bm{x}+k\bm{\chi}_i)-f(\bm{x}) \geq f(\bm{y}+k\bm{\chi}_i)-f(\bm{y}).
		\end{align}
	\end{definition}
	
	\begin{definition}[Weak DR-Submodular~\cite{bian2017guaranteed}]\label{def-weak-dr-sub}
		A function $f:\mathcal{X} \rightarrow \mathbb{R}$ is weak DR-submodular if for any $\bm{x} \leq \bm{y}$, $k \in \mathbb{R}_+$ and $i \in [n]$ with $x_i=y_i$,
		\begin{align}\label{eq:dr-sub}
		&f(\bm{x}+k\bm{\chi}_i)-f(\bm{x}) \geq f(\bm{y}+k\bm{\chi}_i)-f(\bm{y}).
		\end{align}
	\end{definition}
	
By Definitions~\ref{def-dr-sub}-\ref{def-weak-dr-sub} and using the equivalence between submodularity and weak DR-submodularity, we define the submodularity ratio $\alpha$ as well as the DR-submodularity ratio $\beta$, which measure to what extent a general continuous function $f$ has submodular and DR-submodular properties, respectively. They are generalizations of that for functions over the integer lattice~\cite{qian2018multiset}.
	
	\begin{definition}[Submodularity Ratio]\label{sub_ratio} The submodularity ratio of a continuous function $f:\mathcal{X} \to \mathbb{R}$ is defined as
		\begin{align}
		\alpha = \inf_{\bm{x},\bm{y} \in \mathcal{X}: \bm{x}\leq \bm{y}, k\in \mathbb{R}_+, i\in [n]: x_i=y_i} \frac{f(\bm{x}+k\bm{\chi}_i)-f(\bm{x})}{f(\bm{y}+k\bm{\chi}_i)-f(\bm{y})}.
		\end{align}
	\end{definition}
	
	\begin{definition}[DR-Submodularity Ratio]\label{sub-dr-ratio} The DR-submodularity ratio of a continuous function $f:\mathcal{X} \to \mathbb{R}$ is defined as
		\begin{align}\label{dr_ratio}
		\beta = \inf_{\bm{x},\bm{y} \in \mathcal{X}: \bm{x}\leq \bm{y}, k\in \mathbb{R}_+, i\in [n]} \frac{f(\bm{x}+k\bm{\chi}_i)-f(\bm{x})}{f(\bm{y}+k\bm{\chi}_i)-f(\bm{y})}.
		\end{align}
	\end{definition}
	
	Note that in~\cite{hassani2017gradient}, the notion of DR-submodularity ratio for differentiable functions has been also defined as
	\begin{equation}
	\gamma= \inf_{\bm{x},\bm{y} \in \mathcal{X}: \bm{x}\leq \bm{y},i\in [n]} \frac{[\nabla f(\bm{x})]_i}{[\nabla f(\bm{y})]_i},
	\end{equation}
	where $[\nabla f(\bm{x})]_i=\frac{\partial f(\bm{x})}{\partial x_i}$ is the $i$-th component of the gradient. Our definition $\beta$ does not require the differentiable property. Furthermore, it can be verified that $\beta$ and $\gamma$ are equivalent when $f$ is differentiable in $\mathcal{X}$. The proof is provided in the supplementary material.
	
	It is easy to see that $\beta\leq \alpha$. Note that $(f(\bm{x}+k\bm{\chi}_i)-f(\bm{x}))/(f(\bm{y}+k\bm{\chi}_i)-f(\bm{y}))$ reaches 1 by letting $\bm{x}=\bm{y}$, and thus $\beta \leq \alpha \leq 1$. For a monotone continuous function $f$, we make the following observations:
	\begin{remark}\label{remark1}
		For a monotone continuous function $f:\mathcal{X} \rightarrow \mathbb{R}$, it holds that 1) $0\leq \beta \leq \alpha \leq 1$; 2) $f$ is submodular iff $\alpha=1$; 3) $f$ is DR-submodular iff $\beta=\alpha=1$.
	\end{remark}
	
	Our studied problem shown in Definition~\ref{def_problem} is to maximize a monotone continuous function $f$ in a convex polytope $\mathcal{P}$. A convex polytope in $\mathbb{R}^n$ has two equivalent definitions, i.e., $\mathcal{V}$-type polytope defined by the convex hull of a finite number of points, and $\mathcal{H}$-type polytope defined by the intersection of a finite number of half-spaces. There exist algorithms to convert from one to the other, e.g., the reverse
	search method~\cite{avis1996reverse} can find all vertices of a $\mathcal{H}$-type polytope in time linear to the product of the number of vertices, the number of half-spaces and the dimension.
	
	\begin{definition}[The General Problem]\label{def_problem} Given a convex polytope $\mathcal{P}=conv(E)\subseteq \mathbb{R}^n_+$, where $E$ is a set of points located in the positive space, to maximize a monotone function $f:\mathcal{X} \to \mathbb{R}$ in $\mathcal{P}$, i.e.,
		\begin{align}\label{gen_P}
		\mathop{\arg\max}\nolimits_{\bm{x} \in \mathcal{P}} f(\bm{x}). \tag{1}
		\end{align}
	\end{definition}

	\section{The Proposed Approach}
	
	In this section, we propose the derivative-free algorithm LDGM for maximizing monotone (weakly) DR-submodular continuous functions with a convex polytope constraint. The procedure of LDGM is presented in Algorithm~\ref{simpleLDGM}. It first selects those points in $Frontier(E)$ shown in Definition~\ref{def:frontier}, and divides each point by $l$ to form a set $\mathcal{E}$ (i.e., line~1). Note that $Frontier(E)$ includes all points in $E$ that lie in $Frontier(\mathcal{P})$, and can be obtained by comparing each pair of points in $E$, with a time complexity of $O(|E|^2)$. After that, the algorithm iteratively adds one point from $\mathcal{E}$ with the largest marginal gain until $l$ points are selected (i.e., lines~3-7). Note that $l$ is a parameter that controls the iteration number and step size.
	\begin{definition}[Frontier] \label{def:frontier} The frontier of a set $E\subseteq \mathbb{R}^n_+$ is defined as
		\begin{align}
		Frontier(E)=\{\bm{x}\mid \bm{x}\in E, \nexists \bm{y}\in E:\bm{y}>\bm{x}\}.
		\end{align}
	\end{definition}

	\begin{algorithm}[t] 
		\caption{LDGM Algorithm} 
		\label{simpleLDGM} 
		\textbf{Input:} A monotone function $f: \mathcal{X} \to  \mathbb{R}$, and a convex polytope $\mathcal{P}=conv(E)$\\
		\textbf{Parameter:} Number $l$ of steps\\
		\textbf{Output:} $\bm{x} \in \mathcal{P}$	
		\begin{algorithmic}[1]
			\STATE {$\mathcal{E}:= \{ \frac{1}{l}\bm{x} \mid  \bm{x} \in Frontier(E) \}$ }
			\STATE $\bm{x}_0:=\bm{0}$ and $t:=0$
			\WHILE {$t<l$}
			\STATE $\bm{e}^*:=\arg\max_{\bm{e}\in\mathcal{E}}f(\bm{x}_t+\bm{e})-f(\bm{x}_t)$
			\STATE $\bm{x}_{t+1}:=\bm{x}_t+\bm{e}^*$
			\STATE $t:=t+1$
			\ENDWHILE
			\STATE \textbf{return} $\bm{x}_l$
		\end{algorithmic}
	\end{algorithm}
	
	From the procedure of LDGM, we can see that it discretizes the original continuous optimization problem into an optimization problem over the integer lattice $\mathbb{Z}^{\mathcal{E}}_+$, and then applies the greedy algorithm. To the best of our knowledge, LDGM is the first algorithm which does not require the information of gradient. Previous algorithms, e.g., FW~\cite{bian2017guaranteed} and SCG~\cite{mokhtari2017conditional}, require the gradient or its unbiased estimate.
	
	There is actually some relation between LDGM and FW. The update policy of LDGM in each iteration is to choose a vertex with the maximum marginal gain, while FW is to solve a linear programming problem. Note that in polytope settings, the update policy of FW can also be seen as to find a vertex using the information of gradient, as it will find a vertex by linear programming, which is used as the update direction. Moreover, $f(\bm{x}_t + \bm{e}) - f(\bm{x}_t) =\langle \nabla f(\bm{x}_t),\bm{e}\rangle$ as $l\to \infty$, showing that LDGM and FW perform similarly with infinitesimal step size.

	\subsection{Approximation Guarantee}
	
	Before giving the approximation guarantee of LDGM, we make two assumptions. First, we assume that the objective function $f$ satisfies the Lipschitz condition in $Frontier(\mathcal{P})$, which does not require $f$ to be differentiable. Note that previous algorithms~\cite{bian2017guaranteed,hassani2017gradient,mokhtari2017conditional} require that $\nabla f$ satisfies the Lipschitz condition everywhere in $\mathcal{P}$.
	
	
	\begin{assumption}\label{assumption1}
		The function $f$ is Lipschitz continuous with constant $L$ in $Frontier(\mathcal{P})$, i.e., $\forall \bm{x}, \bm{y} \in Frontier(\mathcal{P})$, $|f(\bm{x})-f(\bm{y})|\leq L\cdot \norm{\bm{x}-\bm{y}}$.
	\end{assumption}
	
	Next, we make an assumption on the radius of $\mathcal{P}$. Note that the convex body $\mathcal{P}$ is bounded, otherwise the maximization is meaningless given the monotonicity of the function $f$.
	
	\begin{assumption} \label{assumption2} There exists some $D\in \mathbb{R}_+$ such that $\forall \bm{x}\in \mathcal{P}: \norm{\bm{x}} \leq D $.
	\end{assumption}
	
	Let $\mathrm{OPT}$ denote the optimal function value. We show in Theorem~\ref{the1_LDGM} that LDGM can achieve the approximation guarantee of $(1-e^{-\beta}-\epsilon)$ after $O(1/\epsilon)$ iterations.
	
	\begin{theorem}\label{the1_LDGM}
		For maximizing monotone (weakly) DR-submodular continuous functions with a convex polytope constraint $\mathcal{P}=conv(E)$, LDGM finds $\bm{x} \in \mathcal{P}$ with $$ f(\bm{x}) \geq (1-e^{-\beta})\cdot \mathrm{OPT} - \frac{(1-e^{-\beta})mDL}{l},$$ where $m\leq |E|$.
	\end{theorem}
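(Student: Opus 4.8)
The plan is to run the standard greedy-type recursion on the optimality gap $g_t := \mathrm{OPT} - f(\bm{x}_t)$ and to show that each iteration shrinks it by a factor close to $1-\beta/l$, up to a discretization error. Concretely, I would first establish a per-step inequality of the form $f(\bm{x}_{t+1}) - f(\bm{x}_t) \ge \frac{\beta}{l}\,g_t - \delta$, where $\delta = O(mDL/l^2)$ collects the Lipschitz error. Rewriting this as $g_{t+1} \le (1-\frac{\beta}{l})g_t + \delta$ and unrolling from $g_0 = \mathrm{OPT} - f(\bm{0}) = \mathrm{OPT}$ gives $g_l \le (1-\frac{\beta}{l})^l\,\mathrm{OPT} + \delta\sum_{j=0}^{l-1}(1-\frac{\beta}{l})^j$. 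Using $(1-\frac{\beta}{l})^l \le e^{-\beta}$ on the first term and bounding the geometric sum on the second, this collapses to the advertised $f(\bm{x}_l) \ge (1-e^{-\beta})\mathrm{OPT} - (1-e^{-\beta})mDL/l$; this final algebraic step is routine once $\delta$ is in hand.

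The heart of the argument is therefore the per-step inequality. Since the optimum of a monotone $f$ is attained on $Frontier(\mathcal{P})$, I would write an optimal point as a convex combination $\bm{x}^* = \sum_{\bm{v}} \lambda_{\bm{v}} \bm{v}$ of the vertices in $Frontier(E)$, with $\sum_{\bm{v}} \lambda_{\bm{v}} = 1$; note that each LDGM move $\frac{1}{l}\bm{v} \in \mathcal{E}$ uses exactly these vertices, so $\bm{x}_l$ is an average of $l$ of them and hence feasible. Because line~4 of Algorithm~\ref{simpleLDGM} selects the vertex of maximum marginal gain, the greedy gain dominates the weighted average $\sum_{\bm{v}} \lambda_{\bm{v}}\,[f(\bm{x}_t + \frac{1}{l}\bm{v}) - f(\bm{x}_t)]$. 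I would then relate this average to the true improvement $f(\bm{x}_t \vee \bm{x}^*) - f(\bm{x}_t) \ge \mathrm{OPT} - f(\bm{x}_t)$ by decomposing the passage from $\bm{x}_t$ to $\bm{x}_t \vee \bm{x}^*$ into single-coordinate increments; applying the DR-submodularity ratio of Definition~\ref{sub-dr-ratio} lower-bounds each single-coordinate marginal gain taken at $\bm{x}_t$ by $\beta$ times the matching increment of $f$ along this path, so that summing the increments telescopes to $\beta\,(f(\bm{x}_t \vee \bm{x}^*) - f(\bm{x}_t))$ and produces the factor $\beta$ in front of the gap (the extra $\frac{1}{l}$ coming from the step scale).

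The main obstacle, and where Assumptions~\ref{assumption1} and~\ref{assumption2} enter, is reconciling two scales in the derivative-free setting: the coordinate-wise increments that make the ratio $\beta$ usable, versus the whole scaled vertices $\frac{1}{l}\bm{v}$ that the algorithm can actually evaluate. I would bound the gap between the sum of single-coordinate marginal values at $\bm{x}_t$ and the genuine whole-vertex marginal values by Lipschitz continuity on $Frontier(\mathcal{P})$ (Assumption~\ref{assumption1}); each of the at most $m \le |E|$ pieces is displaced by at most $D/l$ (Assumption~\ref{assumption2} bounding $\norm{\bm{v}}\le D$), so the total error per step is $O(mDL/l^2)$, which is exactly the $\delta$ used above. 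Absent differentiability, this first-order Lipschitz control replaces the smoothness (second-order) error used in the Frank--Wolfe analysis, and getting the bookkeeping of $m$ and the displacement right---so that the accumulated error is $O(1/l)$ rather than $O(1)$---is the delicate part of the proof.
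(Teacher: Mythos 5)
Your outer recursion and the final algebra are fine, but the per-step inequality---which you correctly identify as the heart of the proof---does not close as described. The step where you pass from single-coordinate marginal gains at $\bm{x}_t$ (where Definition~\ref{sub-dr-ratio} applies) to the whole-vertex marginals $f(\bm{x}_t+\tfrac{1}{l}\bm{v})-f(\bm{x}_t)$ (which the algorithm actually evaluates) cannot be patched by Lipschitz continuity at the rate you claim. Realigning a marginal difference to a base point displaced by $\norm{\tfrac{1}{l}\bm{v}}\le D/l$ costs $O(LD/l)$ per comparison, so your own accounting ($m$ pieces, displacement $D/l$, constant $L$) yields a per-step error of order $mDL/l$, not $mDL/l^2$; accumulated over $l$ iterations this is $O(mDL)$, a constant, which destroys the guarantee. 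A first-order Lipschitz condition on $f$ cannot produce the second-order $O(1/l^2)$ remainder that plays this role in the Frank--Wolfe analysis---that remainder comes from Lipschitz \emph{gradients} (Assumption~\ref{assump:Lips_1}), which the theorem deliberately does not assume. Worse, Assumption~\ref{assumption1} only gives Lipschitz continuity on $Frontier(\mathcal{P})$, whereas the iterates $\bm{x}_t$ (starting from $\bm{0}$) and the probe points $\bm{x}_t+\tfrac{1}{l}\bm{v}$ generally lie in the interior of $\mathcal{P}$, so the patching step is not even licensed by the stated assumptions.

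The missing idea is that no such patching is needed. The single-coordinate definition of $\beta$ chains \emph{exactly} to arbitrary vector increments: writing $\bm{v}=\sum_i v_i\bm{\chi}_i$ and telescoping coordinate by coordinate between the two comparable base points gives $f(\bm{x}+\bm{v})-f(\bm{x})\ge\beta\left(f(\bm{y}+\bm{v})-f(\bm{y})\right)$ for $\bm{x}\le\bm{y}$ (Lemma~\ref{lemma1}), with no Lipschitz error at all. This lets the greedy recursion be run exactly against $f(\bm{v}^*)$, where $\bm{v}^*$ is the best sum of $l$ elements of $\mathcal{E}$: decomposing $\bm{v}^*=\sum_{i=1}^l\bm{e}_i$ into whole scaled vertices and applying the chained ratio to each increment (Lemma~\ref{lemma2}) means the scale mismatch you are fighting never arises. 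The discretization error is then paid once, not per step, via a rounding argument (Lemma~\ref{lemma_bound}) producing a lattice point $\bm{v}'$ with $\norm{\bm{x}^*-\bm{v}'}\le mD/l$ that, crucially, lies in $Frontier(\mathcal{P})$ (this is what Lemma~\ref{lemma_conv} is for), so that Assumption~\ref{assumption1} legitimately yields $f(\bm{v}^*)\ge f(\bm{v}')\ge\mathrm{OPT}-mDL/l$. Your writeup also asserts without justification that $\bm{x}^*$ is a convex combination of the vertices in $Frontier(E)$; this is true but itself requires the domination argument given at the start of the paper's proof of Lemma~\ref{lemma_bound}.
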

	
	The detailed proof is provided in the supplementary material. Here, we give some lemmas and introduce the proof intuition. Lemma~\ref{lemma1}, used in the proof of Lemma~\ref{lemma2}, extends the definition of DR-submodularity ratio to show that the diminishing return property holds for arbitrary vector increment. Lemma~\ref{lemma2} shows that for the optimization over the integer lattice $\mathbb{Z}^{\mathcal{E}}_+$, there always exists a point from $\mathcal{E}$, whose inclusion can bring an improvement on $f$ proportional to the current distance to the optimum on the lattice.
	
	\begin{lemma}\label{lemma1} For a monotone continuous function $f:\mathcal{X} \to \mathbb{R}$, $\bm{x}, \bm{y} \in \mathcal{X}$ with $\bm{x}\leq \bm{y}$, and any vector $\bm{v}\in \mathbb{R}^n_+$, we have
		\begin{align}
		\frac{f(\bm{x}+\bm{v})-f(\bm{x})}{f(\bm{y}+\bm{v})-f(\bm{y})} \geq \beta.
		\end{align}
	\end{lemma}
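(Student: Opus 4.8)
The plan is to reduce the arbitrary non-negative increment $\bm{v}$ to a sequence of single-coordinate increments, so that the single-coordinate ratio of Definition~\ref{sub-dr-ratio} can be applied repeatedly and the resulting inequalities telescoped. Writing $\bm{v} = \sum_{i=1}^{n} v_i \bm{\chi}_i$ with each $v_i \geq 0$, I would introduce the partial sums $\bm{x}^{(i)} := \bm{x} + \sum_{j=1}^{i} v_j \bm{\chi}_j$ and $\bm{y}^{(i)} := \bm{y} + \sum_{j=1}^{i} v_j \bm{\chi}_j$ for $i = 0, 1, \ldots, n$, so that $\bm{x}^{(0)}=\bm{x}$, $\bm{x}^{(n)}=\bm{x}+\bm{v}$, and likewise for $\bm{y}$. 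Each of these points is coordinate-wise sandwiched between endpoints already assumed to lie in the domain, so it remains in $\mathcal{X}$. Both total differences then telescope:
$$f(\bm{x}+\bm{v}) - f(\bm{x}) = \sum_{i=1}^{n} \big(f(\bm{x}^{(i-1)} + v_i \bm{\chi}_i) - f(\bm{x}^{(i-1)})\big),$$
with the analogous identity for $\bm{y}$.

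The key observation is that the single-coordinate definition of $\beta$ applies to each summand. Since $\bm{x}\leq \bm{y}$ and the same partial increment $\sum_{j<i} v_j \bm{\chi}_j$ is added to both, we have $\bm{x}^{(i-1)} \leq \bm{y}^{(i-1)}$ for every $i$. Taking the pair $(\bm{x}^{(i-1)},\bm{y}^{(i-1)})$ with $k=v_i$ and coordinate $i$ in Definition~\ref{sub-dr-ratio} yields
$$f(\bm{x}^{(i-1)} + v_i \bm{\chi}_i) - f(\bm{x}^{(i-1)}) \geq \beta \cdot \big(f(\bm{y}^{(i-1)} + v_i \bm{\chi}_i) - f(\bm{y}^{(i-1)})\big).$$
Summing these $n$ inequalities and re-collapsing the telescoping sum on the right gives the product-form bound $f(\bm{x}+\bm{v}) - f(\bm{x}) \geq \beta \cdot (f(\bm{y}+\bm{v}) - f(\bm{y}))$; dividing through by the positive denominator (the ratio in the statement is only meaningful when $f(\bm{y}+\bm{v})-f(\bm{y})>0$) recovers the claim.

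The hard part will be the edge case in which some per-coordinate marginal gain $f(\bm{y}^{(i-1)} + v_i \bm{\chi}_i) - f(\bm{y}^{(i-1)})$ vanishes, in which case the corresponding ratio in the definition of $\beta$ is not literally defined. I would resolve this by invoking monotonicity: because $f$ is monotone and each $v_i \geq 0$, every marginal gain of both $\bm{x}^{(i-1)}$ and $\bm{y}^{(i-1)}$ is non-negative, and $\beta \geq 0$ by Remark~\ref{remark1}. Hence whenever a denominator is zero the per-term inequality still holds trivially, since its left side is $\geq 0 = \beta \cdot 0$. With this, the summation argument goes through uniformly, and the product-form inequality, hence the ratio bound, holds in all cases.
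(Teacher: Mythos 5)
Your proposal is correct and matches the paper's own argument: the paper likewise decomposes $\bm{v}$ into per-coordinate increments $v_i\bm{\chi}_i$, telescopes both the numerator and the denominator, and applies Definition~\ref{sub-dr-ratio} term by term. Your explicit treatment of the vanishing-denominator edge case via monotonicity and $\beta\geq 0$ is a small added care that the paper leaves implicit, but it does not change the route.
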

	
	\begin{lemma}\label{lemma2} Let $\bm{v}^*$ be the best solution one can achieve using $l$ vectors in $\mathcal{E}$, denoted as $\bm{v}^*=\sum\nolimits_{i=1}^l \bm{e}_i$, where $\bm{e}_i \in \mathcal{E}$. For any $ \bm{x}\in \mathcal{X}$, there exists $\bm{e}^*\in \mathcal{E}$ such that
		\begin{align}
		f(\bm{x}+\bm{e}^*)-f(\bm{x})\geq \frac{\beta}{l}\left(f(\bm{v}^*)-f(\bm{x})\right).
		\end{align}
	\end{lemma}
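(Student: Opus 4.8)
The plan is to reduce the existence statement to a simple averaging argument. Since $\bm{e}^*$ may be taken to be the element of $\mathcal{E}$ maximizing the marginal gain at $\bm{x}$, and each $\bm{e}_i$ composing $\bm{v}^*$ lies in $\mathcal{E}$, it suffices to exhibit \emph{one} of the $l$ vectors $\bm{e}_1,\ldots,\bm{e}_l$ whose marginal gain at $\bm{x}$ is large. Concretely, I would first prove the aggregate bound
\begin{align}
\sum_{i=1}^l \bigl(f(\bm{x}+\bm{e}_i) - f(\bm{x})\bigr) \geq \beta\bigl(f(\bm{v}^*) - f(\bm{x})\bigr),
\end{align}
after which the pigeonhole principle yields an index $i$ with $f(\bm{x}+\bm{e}_i)-f(\bm{x}) \geq \frac{\beta}{l}(f(\bm{v}^*)-f(\bm{x}))$, and taking $\bm{e}^*$ to be this $\bm{e}_i$ (or the true maximizer over $\mathcal{E}$) finishes the proof.

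To establish the aggregate bound, I would first pass to the join $\bm{x}\vee\bm{v}^*$: by monotonicity and $\bm{v}^* \leq \bm{x}\vee\bm{v}^*$ we have $f(\bm{v}^*) - f(\bm{x}) \leq f(\bm{x}\vee\bm{v}^*) - f(\bm{x})$, so it is enough to control the latter. I would then telescope from $\bm{x}$ up to $\bm{x}\vee\bm{v}^*$ by absorbing the partial sums of the $\bm{e}_i$ one at a time. Writing $\bm{p}_j = \sum_{i\le j}\bm{e}_i$ and $\bm{g}_j = \bm{x}\vee\bm{p}_j$ (so that $\bm{g}_0 = \bm{x}$ and $\bm{g}_l = \bm{x}\vee\bm{v}^*$), each step increment $\bm{w}_j := \bm{g}_j - \bm{g}_{j-1}$ is, coordinatewise, a nonnegative vector bounded above by $\bm{e}_j$, because $\max(x_k,(\bm{p}_j)_k) - \max(x_k,(\bm{p}_{j-1})_k) \in [0,(\bm{e}_j)_k]$ for every coordinate $k$. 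This produces the clean decomposition
\begin{align}
f(\bm{x}\vee\bm{v}^*) - f(\bm{x}) = \sum_{j=1}^l \bigl(f(\bm{g}_{j-1}+\bm{w}_j) - f(\bm{g}_{j-1})\bigr).
\end{align}

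The final step transfers each term from the ``high'' base point $\bm{g}_{j-1}$ down to the common base point $\bm{x}$. Since $\bm{x} \leq \bm{g}_{j-1}$, applying Lemma~\ref{lemma1} with the increment $\bm{w}_j$ gives $f(\bm{g}_{j-1}+\bm{w}_j) - f(\bm{g}_{j-1}) \leq \frac{1}{\beta}\bigl(f(\bm{x}+\bm{w}_j) - f(\bm{x})\bigr)$, and monotonicity together with $\bm{w}_j \leq \bm{e}_j$ gives $f(\bm{x}+\bm{w}_j) - f(\bm{x}) \leq f(\bm{x}+\bm{e}_j) - f(\bm{x})$. Summing these over $j$ and chaining with the two displays above yields exactly the aggregate bound. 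I expect the main subtlety to be the bookkeeping of the capped increments $\bm{w}_j$ — verifying $\bm{0}\le\bm{w}_j\le\bm{e}_j$ and that they are admissible increments for Lemma~\ref{lemma1} — since this is precisely where the $\vee$-telescoping must mesh with the diminishing-returns ratio $\beta$; the monotonicity and pigeonhole steps are routine. One should also confirm that the marginal gains along the chain are nonnegative, so that the ratio form of Lemma~\ref{lemma1} is well posed.
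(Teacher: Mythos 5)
Your proof is correct and follows essentially the same route as the paper's: telescope from $\bm{x}$ up to a point dominating $\bm{v}^*$, apply Lemma~\ref{lemma1} to transfer each increment down to the common base point $\bm{x}$, and average over the $l$ terms to extract one good $\bm{e}^*$. The only difference is that the paper telescopes along the plain partial sums $\bm{x}+\sum_{i\le k}\bm{e}_i$ up to $\bm{x}+\bm{v}^*$ (using monotonicity only once, via $f(\bm{v}^*)\le f(\bm{x}+\bm{v}^*)$), so each increment is exactly $\bm{e}_k$ and your capped-increment bookkeeping $\bm{0}\le\bm{w}_j\le\bm{e}_j$ together with the extra monotonicity step $f(\bm{x}+\bm{w}_j)\le f(\bm{x}+\bm{e}_j)$ is unnecessary.
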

	
	Based on Lemma~\ref{lemma2}, we can prove an approximation guarantee w.r.t. the best solution on the lattice $\mathbb{Z}^{\mathcal{E}}_+$. To further bound the difference between the best solution on the lattice and a global optimal solution $\bm{x}^*$, we give Lemma~\ref{lemma_bound}, which shows that there must exist a solution $\bm{v}'$, in $Frontier(\mathcal{P})$ and also on the lattice, close enough to $\bm{x}^*$. This suggests that the Lipschitz assumption is only required in $Frontier(\mathcal{P})$. Lemma~\ref{lemma_conv} presents a geometry result w.r.t. $Frontier(\mathcal{P})$, which will be used in the proof of Lemma~\ref{lemma_bound}.
	
	\begin{lemma}\label{lemma_conv}
		Let $X=\{ \bm{x}_1,\ldots,\bm{x}_m \}$, where $\forall i\in [m], \bm{x}_i \in \mathcal{P}$. If there exist $\theta_1,\ldots,\theta_m>0$ such that $\sum_{i=1}^{m} \theta_i=1$ and $\bm{x}' =\sum_{i=1}^{m} \theta_i \bm{x}_i\in Frontier(\mathcal{P})$, then we have $conv(X)\subseteq Frontier(\mathcal{P})$.
	\end{lemma}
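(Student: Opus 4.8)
The plan is to prove the inclusion directly: I will take an arbitrary point $\bm{z} \in conv(X)$ and show it must lie in $Frontier(\mathcal{P})$, arguing by contradiction. Write $\bm{z} = \sum_{i=1}^m \lambda_i \bm{x}_i$ with $\lambda_i \geq 0$ and $\sum_i \lambda_i = 1$; since $X \subseteq \mathcal{P}$ and $\mathcal{P}$ is convex, $\bm{z} \in \mathcal{P}$. Suppose toward a contradiction that $\bm{z} \notin Frontier(\mathcal{P})$. By the definition of the frontier there is then some $\bm{w} \in \mathcal{P}$ with $\bm{w} > \bm{z}$, i.e. $\bm{w} \geq \bm{z}$ coordinate-wise and $w_k > z_k$ for at least one index $k$.

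The key idea is to transport this strict domination from $\bm{z}$ back to the frontier point $\bm{x}' = \sum_i \theta_i \bm{x}_i$, exploiting that every $\theta_i$ is strictly positive, so that $\bm{x}'$ sits in the relative interior of $conv(X)$ and can be perturbed without leaving $\mathcal{P}$. Concretely, for a small parameter $\delta > 0$ I will consider $\bm{x}^\dagger := \bm{x}' + \delta(\bm{w} - \bm{z})$ and rewrite it as $\bm{x}^\dagger = \sum_{i=1}^m (\theta_i - \delta\lambda_i)\bm{x}_i + \delta\bm{w}$. Choosing $\delta \in (0,1]$ small enough that $\theta_i - \delta\lambda_i \geq 0$ for all $i$ (possible precisely because $\theta_i > 0$), the coefficients $\{\theta_i - \delta\lambda_i\}_i$ together with $\delta$ are nonnegative and sum to $1$, so $\bm{x}^\dagger$ is a convex combination of the points $\bm{x}_1, \ldots, \bm{x}_m, \bm{w} \in \mathcal{P}$ and therefore lies in $\mathcal{P}$.

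It then remains to check that $\bm{x}^\dagger$ strictly dominates $\bm{x}'$. Since $\bm{x}^\dagger - \bm{x}' = \delta(\bm{w} - \bm{z})$ with $\delta > 0$ and $\bm{w} \geq \bm{z}$, we have $\bm{x}^\dagger \geq \bm{x}'$, and in coordinate $k$ the gap equals $\delta(w_k - z_k) > 0$, so $\bm{x}^\dagger > \bm{x}'$. This exhibits a point of $\mathcal{P}$ strictly above $\bm{x}'$, contradicting $\bm{x}' \in Frontier(\mathcal{P})$. Hence $\bm{z} \in Frontier(\mathcal{P})$, and since $\bm{z}$ was an arbitrary element of $conv(X)$ we conclude $conv(X) \subseteq Frontier(\mathcal{P})$.

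I expect the only delicate step to be the construction of $\bm{x}^\dagger$ in the second paragraph: one must take the perturbation direction to be $\bm{w} - \bm{z}$ (rather than acting on a single $\bm{x}_i$) and verify that the algebraic rearrangement genuinely produces convex-combination coefficients. This is exactly where the strict positivity of all the $\theta_i$ is used — it guarantees a positive admissible range for $\delta$ — so the hypothesis $\theta_i > 0$ is essential, and the argument would break down if $\bm{x}'$ were allowed to lie on the relative boundary of $conv(X)$.
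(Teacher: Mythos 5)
Your proof is correct. It is, at its core, the same perturbation argument the paper uses, but applied more economically: the paper proves the statement by induction on $m$, establishing the base case $m=2$ with exactly your construction (perturb the frontier point $\bm{x}'$ along the direction $\bm{z}'-\bm{z}$ given by a dominating point, and use $\theta_i>0$ to keep the rewritten coefficients nonnegative for small step size), and then handles $m\geq 3$ by a separate reduction — given $\bm{x}\in conv(X)$ it constructs an auxiliary point $\bm{y}=\bm{x}'-\xi(\bm{x}-\bm{x}')\in conv(X)$ so that $\bm{x}'$ is a strictly positive convex combination of just $\bm{x}$ and $\bm{y}$, and invokes the two-point case. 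You instead observe that the rearrangement $\bm{x}^\dagger=\sum_i(\theta_i-\delta\lambda_i)\bm{x}_i+\delta\bm{w}$ works verbatim for arbitrary $m$: the coefficients sum to $1-\delta+\delta=1$ and are nonnegative once $\delta\leq\min\{\theta_i/\lambda_i:\lambda_i>0\}$, which is positive precisely because every $\theta_i>0$. This makes the induction and the auxiliary point $\bm{y}$ unnecessary, so your argument is a genuine (if modest) streamlining of the paper's proof; the paper's two-stage version buys nothing extra here. Your closing remark correctly identifies where strict positivity of the $\theta_i$ enters, and you are also careful on the one point that needs care — first showing $\bm{z}\in\mathcal{P}$ so that $\bm{z}\notin Frontier(\mathcal{P})$ really does yield a dominating $\bm{w}\in\mathcal{P}$.
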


	\begin{lemma} \label{lemma_bound} Let $\bm{x}^*\in Frontier(\mathcal{P})$ denote a global optimal solution and $|\mathcal{E}|=m$. There exist $\bm{e}'_1,\ldots,\bm{e}'_l \in \mathcal{E}$ such that $\bm{v}'=\sum_{i=1}^{l}\bm{e}'_i\in Frontier(\mathcal{P})$ and $\norm{\bm{x}^*-\bm{v}'}\leq mD/l$.
	\end{lemma}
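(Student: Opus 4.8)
The plan is to first represent the optimum $\bm{x}^*$ as a convex combination of points in $Frontier(E)$, and then to ``round'' that combination to one whose weights are integer multiples of $1/l$, which is exactly the form a sum of $l$ vectors from $\mathcal{E}$ can realize. Since $\bm{x}^*\in Frontier(\mathcal{P})=conv(E)$, we may write $\bm{x}^*=\sum_{\bm{p}\in E}\lambda_{\bm{p}}\bm{p}$ with $\lambda_{\bm{p}}\ge 0$ and $\sum_{\bm{p}}\lambda_{\bm{p}}=1$. Let $X=\{\bm{p}\in E:\lambda_{\bm{p}}>0\}$ be the support. Because $\bm{x}^*\in Frontier(\mathcal{P})$ is a convex combination of the points of $X$ with strictly positive coefficients, Lemma~\ref{lemma_conv} yields $conv(X)\subseteq Frontier(\mathcal{P})$; in particular every $\bm{p}\in X$ lies on $Frontier(\mathcal{P})$ and, being a point of $E$, therefore belongs to $Frontier(E)$. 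Hence $\tfrac{1}{l}\bm{p}\in\mathcal{E}$ for each $\bm{p}\in X$, and it suffices to assemble $\bm{v}'$ from these scaled frontier vertices. Note also $|X|\le|Frontier(E)|=m$.

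Second, I would discretize the weights. Set $n_{\bm{p}}=\lfloor l\lambda_{\bm{p}}\rfloor$ for each $\bm{p}\in X$, so that $\sum_{\bm{p}}n_{\bm{p}}\le l$ with integer deficit $d=l-\sum_{\bm{p}}n_{\bm{p}}=\sum_{\bm{p}}(l\lambda_{\bm{p}}-\lfloor l\lambda_{\bm{p}}\rfloor)$. Since each fractional part is $<1$, the number of points with positive fractional part strictly exceeds $d$, so I can distribute the $d$ remaining units by incrementing $n_{\bm{p}}$ at $d$ distinct points having the largest fractional parts $\{l\lambda_{\bm{p}}\}$; crucially, each such point has $\lambda_{\bm{p}}>0$, so the support of $(n_{\bm{p}})$ stays inside $X$. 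After this adjustment $\sum_{\bm{p}}n_{\bm{p}}=l$ and $|n_{\bm{p}}-l\lambda_{\bm{p}}|<1$ for every $\bm{p}$, i.e. $|n_{\bm{p}}/l-\lambda_{\bm{p}}|<1/l$. Setting $\bm{v}'=\sum_{\bm{p}\in X}n_{\bm{p}}\cdot\tfrac{1}{l}\bm{p}$ then expresses $\bm{v}'$ as a sum of exactly $l$ vectors from $\mathcal{E}$, counted with repetition, which supplies the required $\bm{e}'_1,\ldots,\bm{e}'_l$.

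The distance bound is a direct triangle-inequality estimate: from $\bm{x}^*-\bm{v}'=\sum_{\bm{p}\in X}(\lambda_{\bm{p}}-n_{\bm{p}}/l)\bm{p}$ we get $\norm{\bm{x}^*-\bm{v}'}\le\sum_{\bm{p}\in X}|\lambda_{\bm{p}}-n_{\bm{p}}/l|\cdot\norm{\bm{p}}\le|X|\cdot\tfrac{1}{l}\cdot D\le mD/l$, using $\norm{\bm{p}}\le D$ from Assumption~\ref{assumption2} together with $|X|\le m$. Finally, $\bm{v}'$ is a convex combination (weights $n_{\bm{p}}/l\ge 0$ summing to $1$) of points of $X$, so $\bm{v}'\in conv(X)\subseteq Frontier(\mathcal{P})$ by Lemma~\ref{lemma_conv}, which establishes the frontier membership.

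The step I expect to be the main obstacle is reconciling the two competing requirements on the rounding: the deficit must be distributed so that $\bm{v}'$ still lands on $Frontier(\mathcal{P})$, which forces the support of the rounded weights to remain inside $X$, while at the same time keeping $|n_{\bm{p}}/l-\lambda_{\bm{p}}|<1/l$ so the distance stays within $mD/l$. The resolution is the observation that any point receiving an increment necessarily has nonzero fractional part, hence nonzero original weight $\lambda_{\bm{p}}$, so confining all increments to $X$ costs nothing; once that is secured, the per-coordinate error bound $1/l$ propagates cleanly through both Lemma~\ref{lemma_conv} and the triangle inequality with no further estimation needed.
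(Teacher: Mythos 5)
Your proof is correct and follows essentially the same route as the paper's: write $\bm{x}^*$ as a convex combination of vertices, argue that the support consists of frontier vertices (you do this via Lemma~\ref{lemma_conv}, the paper via a direct domination/contradiction argument), round the weights to integer multiples of $1/l$ with per-weight error at most $1/l$, bound the distance by the triangle inequality, and invoke Lemma~\ref{lemma_conv} for the frontier membership of $\bm{v}'$. Your largest-remainder rounding is a cleaner realization of the paper's iterative violation-repair step, but the substance of the argument is identical.
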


	Theorem~\ref{the1_LDGM} can be proved by Lemmas~\ref{lemma2} and~\ref{lemma_bound}. Lemma~\ref{lemma2} implies that in each iteration of LDGM, $f(\bm{x}_{t+1})-f(\bm{x}_t)\geq \frac{\beta}{l}\left(f(\bm{v}^*)-f(\bm{x}_t)\right)$. By induction on $t$, an approximation guarantee w.r.t. $f(\bm{v}^*)$ can be obtained. Lemma~\ref{lemma_bound} and Assumption~\ref{assumption1} lead to the relation between $f(\bm{v}^*)$ and $f(\bm{x}^*)$: $f(\bm{v}^*) \geq f(\bm{v}') \geq f(\bm{x}^*)-mDL/l$. Combining them, Theorem~\ref{the1_LDGM} holds.
	
	
	
	Next, we consider a special case. If the set $\mathcal{E}$ generated in line~1 of Algorithm~\ref{simpleLDGM} is an orthogonal set, i.e., the inner product of any two vectors from $\mathcal{E}$ equals zero, a variant of LDGM in Algorithm~\ref{LDGM_genGreedy}, called LDGM-G, is able to handle the weakly submodular case. For the optimization over the integer lattice $\mathbb{Z}^{\mathcal{E}}_+$, LDGM-G applies the generalized greedy algorithm~\cite{alon2012optimizing}, rather than the greedy algorithm. That is, in each iteration, it selects a combination $(\bm{e},j)$ (where $\bm{e} \in \mathcal{E}$) such that the average marginal gain by adding $j$ copies of $\bm{e}$ is maximized.
	
	
	\begin{algorithm} 
		\caption{LDGM-G Algorithm} 
		\label{LDGM_genGreedy} 
		\textbf{Input:} A monotone function $f: \mathcal{X} \to  \mathbb{R}$, a convex polytope $\mathcal{P}=conv(E)$, and an orthogonal set $\mathcal{E}= \{ \frac{1}{l}\bm{x} \mid \bm{x} \in Frontier(E) \}$ \\
		\textbf{Parameter:} Number $l$ of steps\\
		\textbf{Output:} $\bm{x}\in \mathcal{P}$
		\begin{algorithmic}[1]
			\STATE $\bm{x}_0:=\bm{0}$ and $t:=0$
			\STATE $\hat{\bm{x}}:=\arg\max_{\bm{e}\in \mathcal{E}}  f(l\bm{e})$
			\WHILE {True}
			\STATE $(\bm{e}^*,k):=\arg\max_{(\bm{e},j)} \left\{\frac{ f(\bm{x}_t+j\bm{e})-f(\bm{x}_t)}{j} \right\}$ \\
			\quad\quad\quad\quad\quad $s.t.$ \; $\bm{e}\in\mathcal{E}, j\in \mathbb{Z}_+: j\leq l - \frac{\langle \bm{x}_t, \bm{e} \rangle}{\langle \bm{e}, \bm{e} \rangle} $
			\IF {$t+k>l$}
			\STATE $\bm{x}_l:=\bm{x}_t+(l-t)\bm{e}^*$
			\STATE \textbf{return} $\arg\max_{\bm{x} \in \{\bm{x}_l,\hat{\bm{x}}\}} f(\bm{x})$
			\ELSE
			\STATE $\bm{x}_{t+k}:=\bm{x}_t+k\bm{e}^*$
			\STATE $t:=t+k$
			\ENDIF
			\ENDWHILE
		\end{algorithmic}
	\end{algorithm}
	
	Theorem~\ref{the2_LDGM} shows that LDGM-G achieves the approximation guarantee of $((\alpha/2)(1-e^{-\alpha})-\epsilon)$ after $O(1/\epsilon)$ iterations. The proof is provided in the supplementary material.
	
	\begin{theorem}\label{the2_LDGM}
		For maximizing monotone (weakly) submodular continuous functions with a convex polytope constraint $\mathcal{P}=conv(E)$, if $ \mathcal{E}= \{ \frac{1}{l}\bm{x} \mid \bm{x} \in Frontier(E) \}$ is an orthogonal set, LDGM-G finds $\bm{x} \in \mathcal{P}$ with $$f(\bm{x}) \geq \frac{\alpha}{2}(1-e^{-\alpha})\cdot \mathrm{OPT}-\frac{\alpha (1-e^{-\alpha})mDL}{2l},$$ where $m \leq |E|$.
	\end{theorem}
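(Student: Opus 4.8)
The plan is to follow the two-stage strategy of Theorem~\ref{the1_LDGM}: first establish a guarantee with respect to the best lattice solution $\bm{v}^*=\sum_{i=1}^{l}\bm{e}_i$ ($\bm{e}_i\in\mathcal{E}$), and then pass to $\mathrm{OPT}$ through Lemma~\ref{lemma_bound} and the Lipschitz Assumption~\ref{assumption1}. The core new ingredient is a generalized-greedy analogue of Lemma~\ref{lemma2}: at any iterate $\bm{x}_t$, the combination $(\bm{e}^*,k)$ chosen in line~4 of Algorithm~\ref{LDGM_genGreedy} has average gain $\bar a=(f(\bm{x}_t+k\bm{e}^*)-f(\bm{x}_t))/k$ satisfying
$$\bar a\ \ge\ \frac{\alpha}{l}\bigl(f(\bm{v}^*)-f(\bm{x}_t)\bigr).$$
To prove this I would decompose $f(\bm{x}_t\vee\bm{v}^*)-f(\bm{x}_t)$ block by block along the directions $\bm{e}$ with $v_{\bm{e}}>x_{\bm{e}}$ (writing $x_{\bm{e}},v_{\bm{e}}$ for the multiplicities of $\bm{e}$ in $\bm{x}_t,\bm{v}^*$). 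Because $\mathcal{E}$ is orthogonal and its vectors are non-negative, distinct directions have disjoint supports, so adding the other blocks never changes the $\bm{e}$-components; this lets me invoke the submodularity-ratio analogue of Lemma~\ref{lemma1} (its matched-coordinate version, with $\alpha$ in place of $\beta$) on each block to obtain $\sum_{\bm{e}}[f(\bm{x}_t+\delta_{\bm{e}}\bm{e})-f(\bm{x}_t)]\ge\alpha(f(\bm{x}_t\vee\bm{v}^*)-f(\bm{x}_t))\ge\alpha(f(\bm{v}^*)-f(\bm{x}_t))$, where $\delta_{\bm{e}}=v_{\bm{e}}-x_{\bm{e}}$. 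Since each $(\bm{e},\delta_{\bm{e}})$ is feasible and $(\bm{e}^*,k)$ maximizes the average gain, $f(\bm{x}_t+\delta_{\bm{e}}\bm{e})-f(\bm{x}_t)\le\bar a\,\delta_{\bm{e}}$; summing and using $\sum_{\bm{e}}\delta_{\bm{e}}\le\sum_{\bm{e}}v_{\bm{e}}=l$ gives the claim.

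With this average-gain bound, each full chunk yields the recursion $f(\bm{v}^*)-f(\bm{x}_{t+k})\le(1-\tfrac{\alpha k}{l})(f(\bm{v}^*)-f(\bm{x}_t))\le e^{-\alpha k/l}(f(\bm{v}^*)-f(\bm{x}_t))$. Let $\bm{x}_{t^*}$ be the iterate just before the chunk $(\bm{e}^*,k)$ that would overshoot the budget ($t^*+k>l$), and let $\bm{z}=\bm{x}_{t^*}+k\bm{e}^*$ be the virtual solution obtained by adding the full chunk. Telescoping over all full chunks and including this virtual one, using $f(\bm{x}_0)=0$ and $t^*+k>l$, gives $f(\bm{z})\ge(1-e^{-\alpha})f(\bm{v}^*)$. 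The remaining difficulty is that the algorithm adds only $l-t^*$ copies, returning $\bm{x}_l=\bm{x}_{t^*}+(l-t^*)\bm{e}^*$; I would recover the lost mass through the second candidate $\hat{\bm{x}}=\arg\max_{\bm{e}}f(l\bm{e})$.

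Writing $\bm{x}_{t^*}=\bm{w}+c\bm{e}^*$ with $\langle\bm{w},\bm{e}^*\rangle=0$, set $\bm{a}=\bm{x}_l$ and $\bm{b}=(c+k)\bm{e}^*$; then $\bm{a}\vee\bm{b}=\bm{z}$ and $\bm{a}\wedge\bm{b}=(c+l-t^*)\bm{e}^*$, with $\bm{b}=(\bm{a}\wedge\bm{b})+s\bm{e}^*$ and $\bm{z}=\bm{a}+s\bm{e}^*$ for $s=t^*+k-l>0$. Since $\bm{a}\wedge\bm{b}\le\bm{a}$ share the same $\bm{e}^*$-component, the matched-coordinate submodularity-ratio analogue of Lemma~\ref{lemma1} gives $f(\bm{b})-f(\bm{a}\wedge\bm{b})\ge\alpha(f(\bm{z})-f(\bm{x}_l))$, whence $f(\bm{z})\le f(\bm{x}_l)+\tfrac1\alpha f(\bm{b})$ using $f(\bm{a}\wedge\bm{b})\ge0$. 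The per-direction cap in line~4 ensures $c+k\le l$, so $f(\bm{b})=f((c+k)\bm{e}^*)\le f(l\bm{e}^*)\le f(\hat{\bm{x}})$ by monotonicity. Combining, $f(\bm{z})\le\tfrac1\alpha(f(\bm{x}_l)+f(\hat{\bm{x}}))\le\tfrac2\alpha\max\{f(\bm{x}_l),f(\hat{\bm{x}})\}$, so the returned point obeys $\max\{f(\bm{x}_l),f(\hat{\bm{x}})\}\ge\tfrac\alpha2 f(\bm{z})\ge\tfrac\alpha2(1-e^{-\alpha})f(\bm{v}^*)$.

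Finally, Lemma~\ref{lemma_bound} supplies $\bm{v}'\in Frontier(\mathcal{P})$ on the lattice with $\norm{\bm{x}^*-\bm{v}'}\le mD/l$; taking $\bm{x}^*\in Frontier(\mathcal{P})$ by monotonicity, Assumption~\ref{assumption1} gives $f(\bm{v}^*)\ge f(\bm{v}')\ge\mathrm{OPT}-mDL/l$, and substituting produces the stated bound. I expect the main obstacle to be the overshoot step of the third paragraph: making orthogonality do exactly the right work so that the $\vee/\wedge$ decomposition collapses onto the single direction $\bm{e}^*$ and the matched-coordinate submodularity ratio applies, while correctly invoking the per-direction feasibility cap so that $f(\bm{b})$ is controlled by $\hat{\bm{x}}$. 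The average-gain lemma of the first paragraph is the other delicate point, since it is precisely where non-negativity together with orthogonality (disjoint supports) is needed to turn the matched-coordinate submodularity ratio into a clean block decomposition.
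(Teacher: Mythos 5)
Your proposal is correct and follows essentially the same route as the paper: the average-gain lemma for the chosen pair $(\bm{e}^*,k)$ via orthogonality and the matched-coordinate submodularity ratio is exactly the paper's key lemma, the overshoot is handled by the same $\hat{\bm{x}}$ comparison yielding the $\alpha/2$ factor, and the final step through Lemma~\ref{lemma_bound} and Assumption~\ref{assumption1} is identical. The only cosmetic differences are that you telescope the per-chunk contraction directly via $\prod_i(1-\alpha k^{(i)}/l)\le e^{-\alpha\sum_i k^{(i)}/l}$ where the paper runs an AM-GM induction, and you base the overshoot bound at $\bm{x}_l$ rather than at $\bm{x}^{(t)}$; both give the same constants.
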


	\subsection{Discussion}
	
	LDGM works well when the constraint is given by a $\mathcal{V}$-type polytope. Although in general the number of vertices, i.e., $|E|$, can be exponential given a $\mathcal{H}$-type polytope, several common classes of $\mathcal{H}$-type polytopes have limited number of vertices. One example is $\mathcal{P}=\{\bm{x} \mid |\bm{x}|_1 \leq b, \bm{x} \geq \bm{0}\}$ with $(n+1)$ vertices. In fact for any $\bm{a} > \bm{0}$, $\mathcal{P}=\{\bm{x} \mid \bm{a}^T\bm{x} \leq b, \bm{x} \geq \bm{0}\}$ has $(n+1)$ vertices; for any $\mathbf{A}\in \mathbb{R}^{r\times n}_+$ and $r=O(1)$, $\mathcal{P}=\{\bm{x} \mid \mathbf{A} \bm{x} \leq \bm{b}, \bm{x}\geq \bm{0}\}$ has $O(n^r)$, i.e., polynomial, vertices.
	
	Moreover, although the common convex polytope constraint $\mathcal{P}=\{\bm{x}\mid \bm{a}^T\bm{x} \leq b, \bm{0}\leq \bm{x} \leq \bm{c}\}$ where $\bm{a}>\bm{0}$ can have exponential number of vertices, LDGM with a slight modification can still obtain the $(1-1/e-\epsilon)$-approximation guarantee after $O(1/\epsilon)$ iterations. The modification along with the proof are provided in the supplementary material.
	
	In addition, in the situations where the constraint is a smooth convex body, we can use a polytope with limited number of vertices to approximate the original convex body~\cite{gruber1993aspects,bronstein2008approximation}, and then apply LDGM.
	
	\section{Analysis under Noise}\label{sec:noise}
	
	Various noisy environments have been considered in the submodular optimization context~\cite{hassidim2017submodular,qian2017subset}. In the continuous submodular literatures, stochastic setting has been considered~\cite{hassani2017gradient,mokhtari2017conditional}. In this section, we consider stochastic as well as additive noise settings. To make LDGM robust against noise, we introduce two techniques, i.e., look-ahead and averaging, leading to the generalized LDGM algorithm presented in Algorithm~\ref{LDGM}. We use $\tilde{f}(\cdot)$ to represent the noisy version of $f(\cdot)$ through this section.
	
	\begin{algorithm}[t] 
		\caption{Generalized LDGM Algorithm} 
		\label{LDGM} 
		\textbf{Input:} A monotone function $f: \mathcal{X} \to  \mathbb{R}$, and a convex polytope $\mathcal{P}=conv(E)$\\
		\textbf{Parameter:} Number $l$ of steps, lookahead parameter $\gamma$ and averaging parameter $\rho_t$\\
		\textbf{Output:} $\bm{x} \in \mathcal{P}$	
		\begin{algorithmic}[1]
			\STATE {$\mathcal{E}:= \{ \frac{1}{l}\bm{x} \mid  \bm{x} \in Frontier(E) \}$ }
			\STATE $\bm{x}_0:=\bm{0}$ and $t:=0$
			\STATE $d_{-1}^{(\bm{e})}:=0$ for $\bm{e}\in \mathcal{E}$
			\WHILE {$t<l$}
			\STATE $\tilde{\Delta}_{t}^{(\bm{e})}=\tilde{f}(\bm{x}_t+\gamma\bm{e})-\tilde{f}(\bm{x}_t)$ for $\bm{e}\in \mathcal{E}$
			\STATE $d_{t}^{(\bm{e})}:=(1-\rho_t)d_{t-1}^{(\bm{e})}+\rho_t \tilde{\Delta}_{t}^{(\bm{e})}$ for $\bm{e}\in \mathcal{E}$
			\STATE $\bm{e}^*:=\arg\max_{\bm{e}\in\mathcal{E}} d^{(\bm{e})}_{t}$
			\STATE $\bm{x}_{t+1}:=\bm{x}_t+\bm{e}^*$
			\STATE $t:=t+1$
			\ENDWHILE
			\STATE \textbf{return} $\bm{x}_l$
		\end{algorithmic}
	\end{algorithm}
	
	By setting the look-ahead parameter $\gamma$ in line~5 greater than 1, the algorithm can see further than its actual step in line~8. The intuition is that the differences among all $\bm{e}\in \mathcal{E}$ can be sufficiently large to avoid being interfered by additive noise. To deal with the stochastic setting where only unbiased estimation of the true objective function is available, we use an averaging technique inspired by SCG~\cite{mokhtari2017conditional}. That is, the algorithm selects a step $\bm{e}^*$ based on the average marginal gain with historical information, i.e., $d_t^{(\bm{e})}$, rather than the current noisy marginal gain, i.e, $\tilde{\Delta}_{t}^{(\bm{e})}$. It can be seen that LDGM in Algorithm~\ref{simpleLDGM} is a special case of generalized LDGM in Algorithm~\ref{LDGM} with $\gamma=1$ and $\rho_t=1$. Detailed proofs in this section are provided in the supplementary material due to space limitations.
	
	\subsection{Additive Noise}\label{sec-additive-noise}
	
	In this subsection, we theoretically compare generalized LDGM with FW~\cite{bian2017guaranteed} under additive noise.
	
	Let $\bm{x}^*$ be an optimal solution, and $\bm{x}_t$ be the solution after $t$ iterations of FW. Let $\bm{v}^*_t=\bm{x}^*\vee \bm{x}_t - \bm{x}_t$. Assume that in the $t$-th iteration of FW, a gradient call at $\bm{x}_t$ introduces a noise term $\bm{\epsilon}_t$. That is, $\nabla f(\bm{x}_t)+\bm{\epsilon}_t$ instead of $\nabla f(\bm{x}_t)$ is used in the update policy. Denote $\bm{v}_t$ as the vector chosen in the $t$-th iteration of FW, i.e., $\bm{v}_t=\arg\max_{\bm{v}\in \mathcal{P}} \langle \bm{v},\nabla f(\bm{x}_t)+\bm{\epsilon}_t\rangle$. Suppose FW performs $l$ iterations and uses a constant step size $1/l$. Then, we can show the approximation guarantee of FW with the Lipschitz continuous assumption on the gradient of $f$, i.e., Assumption~\ref{assump:Lips_1}. The proof of Theorem~\ref{the_noise_FW} is inspired from that of Theorem~1 in~\cite{bian2017guaranteed}.

	\begin{assumption}\label{assump:Lips_1}
		The gradient of $f$ is Lipschitz continuous with constant $L_1$ in $\mathcal{P}$, i.e., $\forall \bm{x}, \bm{y} \in \mathcal{P}$, \begin{align}
		||\nabla f(\bm{x})-\nabla f(\bm{y})|| \leq L_1 ||\bm{x}-\bm{y}||.
		\end{align}
	\end{assumption}
	
	\begin{theorem}\label{the_noise_FW}
		Under additive noise, for maximizing monotone (weakly) DR-submodular continuous functions with a convex polytope constraint $\mathcal{P}=conv(E)$, FW finds $\bm{x} \in \mathcal{P}$ with\vspace{-1em}
		\begin{align}
		f(\bm{x}) &\geq (1-e^{-\beta}) \cdot \mathrm{OPT}- \frac{L_1}{2l} \\
		& \quad - \frac{1}{l} \sum_{t=0}^{l-1} (1-\beta/l)^{l-1-t} \langle \bm{v}_t-\bm{v}^*_t, \bm{\epsilon}_t   \rangle.
		\end{align}
	\end{theorem}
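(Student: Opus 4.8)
The plan is to mirror the standard Frank--Wolfe analysis of~\citet{bian2017guaranteed}, but to carry the noise term $\bm{\epsilon}_t$ explicitly through every inequality rather than discarding it. The backbone is a one-step recursion on the optimality gap $g_t := \mathrm{OPT} - f(\bm{x}_t)$, which I would obtain from three ingredients: (i) the smoothness of $f$ from Assumption~\ref{assump:Lips_1}; (ii) the optimality of the noisy update direction $\bm{v}_t$; and (iii) a gradient form of the DR-submodularity ratio. For (iii), the key inequality I need is
\begin{align}
\langle \nabla f(\bm{x}_t), \bm{v}^*_t \rangle \geq \beta\left(\mathrm{OPT}-f(\bm{x}_t)\right),
\end{align}
which I would establish by writing $f(\bm{x}^*\vee\bm{x}_t)-f(\bm{x}_t)=\int_0^1 \langle \nabla f(\bm{x}_t+s\bm{v}^*_t),\bm{v}^*_t\rangle\, ds$ along the nonnegative direction $\bm{v}^*_t$, applying the gradient ratio coordinate-wise (legitimate since $\bm{x}_t\leq \bm{x}_t+s\bm{v}^*_t$ and, in the differentiable regime forced by FW, the gradient ratio coincides with $\beta$ by the $\beta$--$\gamma$ equivalence stated earlier), and finally using monotonicity to replace $f(\bm{x}^*\vee\bm{x}_t)$ by $f(\bm{x}^*)=\mathrm{OPT}$.

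With this in hand, I would derive the recursion. Smoothness gives
\begin{align}
f(\bm{x}_{t+1}) \geq f(\bm{x}_t)+\frac{1}{l}\langle \nabla f(\bm{x}_t),\bm{v}_t\rangle - \frac{L_1}{2l^2}\norm{\bm{v}_t}^2,
\end{align}
using the constant-step update $\bm{x}_{t+1}-\bm{x}_t=\frac1l\bm{v}_t$. Since $\bm{v}_t$ maximizes $\langle\,\cdot\,,\nabla f(\bm{x}_t)+\bm{\epsilon}_t\rangle$ over $\mathcal{P}$ and $\bm{v}^*_t$ is a feasible competitor, we have $\langle \bm{v}_t,\nabla f(\bm{x}_t)+\bm{\epsilon}_t\rangle\geq \langle \bm{v}^*_t,\nabla f(\bm{x}_t)+\bm{\epsilon}_t\rangle$, which rearranges to $\langle \nabla f(\bm{x}_t),\bm{v}_t\rangle\geq \langle \nabla f(\bm{x}_t),\bm{v}^*_t\rangle-\langle \bm{v}_t-\bm{v}^*_t,\bm{\epsilon}_t\rangle$; this is exactly where the noise enters, as a correction proportional to $\langle \bm{v}_t-\bm{v}^*_t,\bm{\epsilon}_t\rangle$. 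Substituting the key inequality for $\langle \nabla f(\bm{x}_t),\bm{v}^*_t\rangle$ and recasting in terms of the gap yields
\begin{align}
g_{t+1}\leq (1-\beta/l)\,g_t + \frac{1}{l}\langle \bm{v}_t-\bm{v}^*_t,\bm{\epsilon}_t\rangle + \frac{L_1}{2l^2}\norm{\bm{v}_t}^2.
\end{align}

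Finally I would unroll this recursion from $t=0$, using $g_0=\mathrm{OPT}$ (as $f(\bm{0})=0$), to get $g_l\leq (1-\beta/l)^l\,\mathrm{OPT}+\sum_{t=0}^{l-1}(1-\beta/l)^{l-1-t}\bigl(\tfrac1l\langle\bm{v}_t-\bm{v}^*_t,\bm{\epsilon}_t\rangle+\tfrac{L_1}{2l^2}\norm{\bm{v}_t}^2\bigr)$, then bound $(1-\beta/l)^l\leq e^{-\beta}$, drop the discount factor ($\leq 1$) on the smoothness terms and sum them to $L_1/(2l)$ (absorbing the bounded $\norm{\bm{v}_t}^2$ via Assumption~\ref{assumption2}), and rearrange into the claimed lower bound on $f(\bm{x}_l)$. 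The main obstacle I anticipate is the feasibility of $\bm{v}^*_t=\bm{x}^*\vee\bm{x}_t-\bm{x}_t$ as a competitor in the arg-max defining $\bm{v}_t$: since $\bm{0}\leq\bm{v}^*_t\leq\bm{x}^*$ coordinate-wise, this requires $\mathcal{P}$ to be down-closed so that $\bm{v}^*_t\in\mathcal{P}$, and I would either invoke down-closedness or restrict the comparison accordingly. A secondary technical care is justifying the gradient-ratio step across the whole integration path $\{\bm{x}_t+s\bm{v}^*_t:s\in[0,1]\}$ in the weakly DR-submodular regime, which I would handle through the $\beta$--$\gamma$ equivalence rather than reasoning about marginal differences directly.
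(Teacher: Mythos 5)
Your proposal is correct and follows essentially the same route as the paper's proof: the same chain of (i) the gradient form of the DR-ratio bound $\langle \nabla f(\bm{x}_t), \bm{v}^*_t\rangle \geq \beta(\mathrm{OPT}-f(\bm{x}_t))$, (ii) the optimality of $\bm{v}_t$ under the perturbed objective to introduce $\langle \bm{v}_t-\bm{v}^*_t,\bm{\epsilon}_t\rangle$, (iii) smoothness for the $L_1/(2l^2)$ term, and (iv) unrolling the resulting recursion on the gap. The two caveats you flag (feasibility of $\bm{v}^*_t$ as a competitor in the arg-max, and the $\norm{\bm{v}_t}^2$ factor in the smoothness bound) are both left implicit in the paper's own proof as well, so your treatment is, if anything, slightly more careful.
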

	
	For generalized LDGM, we set $\gamma>1$ to mitigate influences from additive noise, and set $\rho_t=1$, i.e., not using the averaging technique. Let $\mathcal{E}=\{\bm{e}_1,\ldots,\bm{e}_m \}$. In each iteration of generalized LDGM, it will make calls to obtain $f(\bm{x}_t+\bm{e}_i)$ for each $\bm{e}_i \in \mathcal{E}$. Assume that in the $t$-th iteration, a noise of $\epsilon_{t,i}$ is introduced to $f(\bm{x}_t+\bm{e}_i)$, i.e., the actual function value returned is $\tilde{f}(\bm{x}_t+\bm{e}_i)=f(\bm{x}_t+\bm{e}_i)+\epsilon_{t,i}$.
	
	Similarly to Lemma~\ref{lemma2}, Lemma~\ref{lemma_noise_LDGM} shows that even under additive noise, the improvement on $f$ in each step is proportional to the current distance to the optimum on the lattice. The difference, aside from noise, is that the best solution $\bm{v}^*$ on the lattice is substituted by the best solution on a "sparser" lattice, i.e., $\bm{v}^*_\gamma$ in Lemma~\ref{lemma_noise_LDGM}. To simplify the proof, we assume $\gamma$ is an integer and $l$ is divisible by $\gamma$.
	
	\begin{lemma}\label{lemma_noise_LDGM}
		Let $\bm{v}^*_\gamma$ be the best solution one can achieve using $l/\gamma$ vectors in $\gamma\cdot \mathcal{E}$, denoted as $\bm{v}^*_\gamma=\sum\nolimits_{j=1}^{l/\gamma} \gamma\bm{e}_{i_j}$, where $\bm{e}_{i_j} \in \mathcal{E}$. For any $ \bm{x}_t$, $\bm{e}_{i^*_t}\in \mathcal{E}$ found in line~7 of Algorithm~\ref{LDGM} satisfies
		\begin{align}
		& f(\bm{x}_t+\bm{e}_{i^*_t})-f(\bm{x}_t)\\
		& \geq \frac{\beta^2}{l}\left(f(\bm{v}^*_\gamma)-f(\bm{x}_t)\right)-\frac{\beta}{l}\sum_{j=1}^{l/\gamma} (\epsilon_{t,i^*_t}-\epsilon_{t,i_j}).
		\end{align}
	\end{lemma}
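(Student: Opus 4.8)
The plan is to adapt the averaging argument of Lemma~\ref{lemma2} to the lookahead lattice $\gamma\cdot\mathcal{E}$ while carefully tracking the additive noise, and then to pay a further factor $\beta$ when converting the lookahead gain into the gain of the step actually taken. First I would record the selection rule under the present simplifications: since $\rho_t=1$, line~6 gives $d_t^{(\bm{e})}=\tilde{\Delta}_t^{(\bm{e})}=\tilde{f}(\bm{x}_t+\gamma\bm{e})-\tilde{f}(\bm{x}_t)$, so line~7 picks the index $i^*_t=\arg\max_{i}\bigl(f(\bm{x}_t+\gamma\bm{e}_i)+\epsilon_{t,i}\bigr)$, because the common term $\tilde{f}(\bm{x}_t)$ cancels in every pairwise comparison. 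Hence for each block index $j\in\{1,\ldots,l/\gamma\}$ the greedy choice satisfies $f(\bm{x}_t+\gamma\bm{e}_{i_j})+\epsilon_{t,i_j}\leq f(\bm{x}_t+\gamma\bm{e}_{i^*_t})+\epsilon_{t,i^*_t}$, which is the only place the noise enters.

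Next I would lower-bound the total lookahead gain along $\bm{v}^*_\gamma=\sum_{j=1}^{l/\gamma}\gamma\bm{e}_{i_j}$. Writing $\bm{y}_j=\bm{x}_t+\sum_{k<j}\gamma\bm{e}_{i_k}\geq\bm{x}_t$ and applying Lemma~\ref{lemma1} with increment $\bm{v}=\gamma\bm{e}_{i_j}$ gives $f(\bm{x}_t+\gamma\bm{e}_{i_j})-f(\bm{x}_t)\geq\beta\bigl(f(\bm{y}_j+\gamma\bm{e}_{i_j})-f(\bm{y}_j)\bigr)$ for each $j$; summing, the right-hand side telescopes to $\beta\bigl(f(\bm{x}_t+\bm{v}^*_\gamma)-f(\bm{x}_t)\bigr)\geq\beta\bigl(f(\bm{v}^*_\gamma)-f(\bm{x}_t)\bigr)$ by monotonicity. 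Combining this with the selection inequalities summed over the $l/\gamma$ blocks, which replaces each $f(\bm{x}_t+\gamma\bm{e}_{i_j})-f(\bm{x}_t)$ by $f(\bm{x}_t+\gamma\bm{e}_{i^*_t})-f(\bm{x}_t)$ at the cost of $\epsilon_{t,i^*_t}-\epsilon_{t,i_j}$, and dividing by $l/\gamma$, I obtain $f(\bm{x}_t+\gamma\bm{e}_{i^*_t})-f(\bm{x}_t)\geq\frac{\gamma\beta}{l}\bigl(f(\bm{v}^*_\gamma)-f(\bm{x}_t)\bigr)-\frac{\gamma}{l}\sum_{j=1}^{l/\gamma}(\epsilon_{t,i^*_t}-\epsilon_{t,i_j})$.

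The step I expect to be the crux is passing from this $\gamma$-lookahead gain back to the actual one-step gain $f(\bm{x}_t+\bm{e}_{i^*_t})-f(\bm{x}_t)$, since the algorithm moves by $\bm{e}_{i^*_t}$ but evaluates $\gamma\bm{e}_{i^*_t}$. I would decompose $\gamma\bm{e}_{i^*_t}$ into $\gamma$ unit copies and, for $s=0,\ldots,\gamma-1$, apply Lemma~\ref{lemma1} with $\bm{x}=\bm{x}_t\leq\bm{y}=\bm{x}_t+s\bm{e}_{i^*_t}$ and $\bm{v}=\bm{e}_{i^*_t}$ to get $f(\bm{x}_t+(s+1)\bm{e}_{i^*_t})-f(\bm{x}_t+s\bm{e}_{i^*_t})\leq\frac{1}{\beta}\bigl(f(\bm{x}_t+\bm{e}_{i^*_t})-f(\bm{x}_t)\bigr)$; summing the $\gamma$ copies gives $f(\bm{x}_t+\gamma\bm{e}_{i^*_t})-f(\bm{x}_t)\leq\frac{\gamma}{\beta}\bigl(f(\bm{x}_t+\bm{e}_{i^*_t})-f(\bm{x}_t)\bigr)$, i.e.\ the one-step gain is at least $\frac{\beta}{\gamma}$ times the lookahead gain. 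Substituting the bound from the previous paragraph and cancelling the $\gamma$ factors produces the claimed $\frac{\beta^2}{l}\bigl(f(\bm{v}^*_\gamma)-f(\bm{x}_t)\bigr)-\frac{\beta}{l}\sum_{j=1}^{l/\gamma}(\epsilon_{t,i^*_t}-\epsilon_{t,i_j})$; the extra factor $\beta$, hence $\beta^2$ overall, is exactly the price of the lookahead. The only points needing care are that all intermediate iterates lie in $\mathcal{X}$ so that Lemma~\ref{lemma1} applies, and that $\frac{\beta}{\gamma}>0$ keeps every inequality direction intact when the lower bound is substituted.
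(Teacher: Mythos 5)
Your proposal is correct and follows essentially the same route as the paper's proof: the same telescoping of $\bm{v}^*_\gamma$ into $l/\gamma$ blocks controlled by Lemma~\ref{lemma1}, the same use of the noisy selection rule to swap each block gain for the chosen one at a cost of $\epsilon_{t,i^*_t}-\epsilon_{t,i_j}$, and the same decomposition of $\gamma\bm{e}_{i^*_t}$ into $\gamma$ unit copies to pay the second factor of $\beta$. The only (favorable) difference is that you invoke Lemma~\ref{lemma1} for the final lookahead-to-single-step conversion, which is the right tool since $\bm{e}_{i^*_t}$ is a general vector rather than a coordinate increment, whereas the paper cites the raw definition of the DR-submodularity ratio there.
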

	
	By combining Lemmas~\ref{lemma_bound} and~\ref{lemma_noise_LDGM}, we can exactly follow the proof of Theorem~\ref{the1_LDGM} to derive the approximation guarantee of generalized LDGM under additive noise, i.e., Theorem~\ref{the_noise_LDGM}.
	
	\begin{theorem}\label{the_noise_LDGM}
		Under additive noise, for maximizing monotone (weakly) DR-submodular continuous functions with a convex polytope constraint $\mathcal{P}=conv(E)$, generalized LDGM with $\rho_t=1$  finds $\bm{x} \in \mathcal{P}$ with
		\begin{align}
		f(\bm{x}) &\geq (1-e^{-\beta^2})\cdot \mathrm{OPT} - \frac{mDL(1-e^{-\beta^2})\gamma}{l} \\
		&\quad -\frac{\beta}{l}\sum_{t=0}^{l-1} (1-\beta^2 / l)^{l-1-t} \sum_{j=1}^{l/\gamma} (\epsilon_{t,i^*_t}-\epsilon_{t,i_j}),
		\end{align}
where $m\leq |E|$, and $\epsilon_{t,i^*_t}, \epsilon_{t,i_j}$ are introduced in Lemma~\ref{lemma_noise_LDGM}.
	\end{theorem}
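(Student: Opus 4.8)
The plan is to follow the proof of Theorem~\ref{the1_LDGM} almost verbatim, substituting Lemma~\ref{lemma_noise_LDGM} for Lemma~\ref{lemma2} and carrying the extra noise terms through the same induction. First I would note that since $\bm{x}_{t+1}=\bm{x}_t+\bm{e}_{i^*_t}$ by line~8 of Algorithm~\ref{LDGM}, Lemma~\ref{lemma_noise_LDGM} reads
\begin{align}
f(\bm{x}_{t+1})-f(\bm{x}_t)\geq \frac{\beta^2}{l}\bigl(f(\bm{v}^*_\gamma)-f(\bm{x}_t)\bigr)-\frac{\beta}{l}S_t,
\end{align}
where I abbreviate $S_t=\sum_{j=1}^{l/\gamma}(\epsilon_{t,i^*_t}-\epsilon_{t,i_j})$. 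Writing $\delta=\beta^2/l$ and subtracting $f(\bm{v}^*_\gamma)$ from both sides gives the one-step contraction
\begin{align}
f(\bm{v}^*_\gamma)-f(\bm{x}_{t+1})\leq (1-\delta)\bigl(f(\bm{v}^*_\gamma)-f(\bm{x}_t)\bigr)+\frac{\beta}{l}S_t.
\end{align}
Unrolling this recursion from $t=0$ to $t=l-1$, and using $\bm{x}_0=\bm{0}$ together with the normalization $f(\bm{0})=0$ so that the initial gap equals $f(\bm{v}^*_\gamma)$, yields
\begin{align}
f(\bm{x}_l)\geq \bigl(1-(1-\delta)^l\bigr)f(\bm{v}^*_\gamma)-\frac{\beta}{l}\sum_{t=0}^{l-1}(1-\delta)^{l-1-t}S_t.
\end{align}
Since $f$ is monotone we have $f(\bm{v}^*_\gamma)\geq f(\bm{0})=0$, and $(1-\delta)^l\leq e^{-\delta l}=e^{-\beta^2}$, so the nonnegative coefficient $1-(1-\delta)^l$ may be lowered to $1-e^{-\beta^2}$.

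The second step relates $f(\bm{v}^*_\gamma)$ to $\mathrm{OPT}$ through a sparse-lattice application of Lemma~\ref{lemma_bound}. Here I would invoke the standing assumption (stated just before Lemma~\ref{lemma_noise_LDGM}) that $\gamma$ is an integer dividing $l$, and observe that the sparse lattice built from $\gamma\cdot\mathcal{E}=\{\tfrac{1}{l/\gamma}\bm{x}\mid \bm{x}\in Frontier(E)\}$ using $l/\gamma$ vectors is exactly the LDGM lattice with the step parameter $l$ replaced by $l'=l/\gamma$. Applying Lemma~\ref{lemma_bound} with $l'$ in place of $l$ produces some $\bm{v}'_\gamma\in Frontier(\mathcal{P})$, reachable with $l/\gamma$ vectors of $\gamma\cdot\mathcal{E}$, satisfying $\norm{\bm{x}^*-\bm{v}'_\gamma}\leq mD/l'=mD\gamma/l$. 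Optimality of $\bm{v}^*_\gamma$ over this sparse lattice gives $f(\bm{v}^*_\gamma)\geq f(\bm{v}'_\gamma)$, and since $\bm{x}^*$ and $\bm{v}'_\gamma$ both lie in $Frontier(\mathcal{P})$, Assumption~\ref{assumption1} yields $f(\bm{v}'_\gamma)\geq f(\bm{x}^*)-L\norm{\bm{x}^*-\bm{v}'_\gamma}\geq \mathrm{OPT}-mDL\gamma/l$.

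Finally I would substitute $f(\bm{v}^*_\gamma)\geq \mathrm{OPT}-mDL\gamma/l$ into the unrolled bound, using once more that $1-e^{-\beta^2}\geq 0$, to obtain
\begin{align}
f(\bm{x}_l)&\geq (1-e^{-\beta^2})\,\mathrm{OPT}-\frac{(1-e^{-\beta^2})mDL\gamma}{l}\\
&\quad-\frac{\beta}{l}\sum_{t=0}^{l-1}(1-\beta^2/l)^{l-1-t}S_t,
\end{align}
which is precisely the claimed inequality, since Algorithm~\ref{LDGM} returns $\bm{x}=\bm{x}_l$ and $S_t$ unwinds to $\sum_{j=1}^{l/\gamma}(\epsilon_{t,i^*_t}-\epsilon_{t,i_j})$.

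The genuinely new content lives entirely in Lemma~\ref{lemma_noise_LDGM} (the degradation of $\beta$ to $\beta^2$ and the emergence of the noise differences), which I am allowed to assume. Given that, the present argument is the same routine induction and exponential estimate as for Theorem~\ref{the1_LDGM}; the one point requiring care is the bookkeeping of the sparse lattice, namely recognizing that $\gamma\cdot\mathcal{E}$ with $l/\gamma$ steps is an instance of Lemma~\ref{lemma_bound} at refined resolution $l'=l/\gamma$. This is exactly what turns the $mDL/l$ approximation term of Theorem~\ref{the1_LDGM} into the $mDL\gamma/l$ term appearing here, and I expect it to be the only genuinely nontrivial step of the combination.
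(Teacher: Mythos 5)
Your proposal is correct and follows essentially the same route as the paper: the one nontrivial ingredient you identify — applying Lemma~\ref{lemma_bound} with $\mathcal{E}$ and $l$ replaced by $\gamma\cdot\mathcal{E}$ and $l/\gamma$ to get $f(\bm{v}^*_\gamma)\geq \mathrm{OPT}-mDL\gamma/l$ — is exactly what the paper does, and the rest is the same contraction/induction as in Theorem~\ref{the1_LDGM}. The only (immaterial) difference is that you unroll the recursion against $f(\bm{v}^*_\gamma)$ first and substitute the lattice bound at the end, whereas the paper substitutes $f(\bm{v}^*_\gamma)\geq\mathrm{OPT}-mDL\gamma/l$ into the one-step inequality before inducting.
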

	
	We provide the comparison between FW and generalized LDGM for DR-submodular cases (i.e., $\beta=1$) in the supplementary material, showing that as the noise $\epsilon$ decreases, the error term of FW shrinks with $O(\sqrt{\epsilon})$ speed, whereas that of generalized LDGM shrinks with $O(\epsilon)$ speed. The approximation bound of generalized LDGM in Theorem~\ref{the_noise_LDGM} involves two terms w.r.t. $\gamma$, between which we can make trade-off to mitigate influences from additive noise.

	\subsection{Stochastic Setting}
	
	Recently, stochastic monotone DR-submodular continuous maximization has been studied. In the stochastic setting, the objective function $f(\bm{x})=\mathbb{E}_i[f_i(\bm{x})]$ is in an expectation form, and only unbiased estimation $\tilde{f}(\cdot)$ can be obtained. For example, when $f_i(\cdot)$ is chosen from a very large ground set, the accurate computation of its expectation is prohibitive. It has been shown that FW can perform badly in stochastic settings~\cite{hassani2017gradient}, whereas SCG can achieve an $(1-1/e-\epsilon)$-approximation guarantee in expectation within $O(1/\epsilon^3)$ iterations~\cite{mokhtari2017conditional}.
	
	In this subsection, we show in Theorem~\ref{the_stoch} that generalized LDGM can perform as well as SCG in stochastic settings, that is, generalized LDGM can achieve an $(1-1/e-\epsilon)$-approximation guarantee in expectation within $O(1/\epsilon^3)$ iterations. As in~\cite{mokhtari2017conditional}, we need Assumptions~\ref{assump:Lips_1} and~\ref{assump:stoch_var}. Assumption~\ref{assump:Lips_1} gives the Lipschitz condition on the gradients, and Assumption~\ref{assump:stoch_var} bounds the variance of the gradients, implying that the stochastic setting is not too random. Note that the two assumptions are shown in a gradient form for the sake of better understandability, though generalized LDGM does not require gradients. In fact, discrete versions of these two assumptions could simplify the proof.

	\begin{assumption}\label{assump:stoch_var}
		The variance of unbiased stochastic gradients is bounded by $\sigma^2$, i.e.,
		\begin{align}
		\mathbb{E}\left[||\nabla \tilde{f}(\bm{x})-\nabla f(\bm{x})||^2 \right] \leq \sigma^2.
		\end{align}
	\end{assumption}
	
	\begin{theorem}\label{the_stoch}
		In stochastic settings, for maximizing monotone (weakly) DR-submodular continuous functions with a convex polytope constraint $\mathcal{P}=conv(E)$, generalized LDGM with $\gamma=1 $ and $\rho_t = \frac{4}{(t+8^{2/3})}$ finds $\bm{x} \in \mathcal{P}$ with
		$$ \mathbb{E}[f(\bm{x})] \geq (1-e^{-\beta})\left(\mathrm{OPT}-\frac{mDL}{l}\right) - \frac{2Q^{\frac{1}{2}}}{l^{\frac{1}{3}}}, $$
		where $m\leq |E|$, $Q\leq \max\{L^2D^29^{2/3}, 16\hat{\sigma}^2+12L_1^2D^4\}$ and $\hat{\sigma}^2=\sigma^2D^2+L_1^2D^4+2L_1\sigma D^3$.
	\end{theorem}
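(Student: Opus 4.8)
The plan is to graft the variance-reduction (averaging) analysis of the stochastic continuous greedy method~\cite{mokhtari2017conditional} onto the discrete lattice-greedy scheme, and then re-run the deterministic argument behind Theorem~\ref{the1_LDGM} with expectations inserted. Since $\gamma=1$, the quantity accumulated in line~6 of Algorithm~\ref{LDGM} is the running average $\bm{d}_t=(d_t^{(\bm{e})})_{\bm{e}\in\mathcal{E}}$ of the noisy marginal-gain vectors with entries $\tilde\Delta_t^{(\bm{e})}=\tilde f(\bm{x}_t+\bm{e})-\tilde f(\bm{x}_t)$, whose conditional mean is the true marginal-gain vector $\bm\Delta_t$ with entries $\Delta_t^{(\bm{e})}=f(\bm{x}_t+\bm{e})-f(\bm{x}_t)$. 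The argument rests on two pillars: (i)~a variance-reduction bound certifying that $\bm{d}_t$ tracks $\bm\Delta_t$ in mean square, and (ii)~a stochastic version of Lemma~\ref{lemma2} certifying that selecting $\bm{e}^*=\arg\max_{\bm{e}}d_t^{(\bm{e})}$ is almost as good as the clean greedy step up to the estimation error. I would carry out pillar~(i) at the vertex scale, i.e.\ for the rescaled vectors $\bar{\bm{d}}_t=l\bm{d}_t$ and $\bar{\bm\Delta}_t=l\bm\Delta_t$ (whose entries approximate the directional derivatives $\langle\nabla f(\bm{x}_t),\bm{x}\rangle$ with $\bm{x}=l\bm{e}\in Frontier(E)$), so that all constants are $O(1)$ in $l$; this is what produces the $l$-free quantity $Q$.

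For pillar~(i), I would expand the one-step error $\bar{\bm d}_t-\bar{\bm\Delta}_t$ from the update rule into three contributions: a contraction $(1-\rho_t)$ of the previous error $\bar{\bm d}_{t-1}-\bar{\bm\Delta}_{t-1}$; a deterministic drift $(1-\rho_t)(\bar{\bm\Delta}_{t-1}-\bar{\bm\Delta}_t)$ caused by the move $\bm{x}_{t-1}\to\bm{x}_t$, controlled through Assumption~\ref{assump:Lips_1} together with $\norm{\bm{e}^*}\le D/l$; and a fresh, conditionally zero-mean noise term proportional to $\rho_t$. The enabling preliminary computation is to convert the gradient-form variance bound of Assumption~\ref{assump:stoch_var} into a variance bound for the finite-difference vector: integrating the stochastic gradient along the segment and applying Cauchy--Schwarz with the gradient-Lipschitz bound produces exactly $\hat\sigma^2=\sigma^2D^2+L_1^2D^4+2L_1\sigma D^3=(\sigma D+L_1D^2)^2$. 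Squaring, taking conditional expectation so that the noise cross term vanishes, and applying Young's inequality to the drift cross term yields a recursion bounding $\mathbb{E}\,\norm{\bar{\bm d}_t-\bar{\bm\Delta}_t}^2$ by a $(1-\rho_t)$-contraction of its predecessor plus a drift term (of order $L_1^2D^4$) and a noise term (of order $\rho_t^2\hat\sigma^2$). An induction on $t$, using the prescribed schedule for $\rho_t$, then closes the estimate at $\mathbb{E}\,\norm{\bar{\bm d}_t-\bar{\bm\Delta}_t}^2\le Q/(t+9)^{2/3}$; the two branches of $Q=\max\{L^2D^2 9^{2/3},\,16\hat\sigma^2+12L_1^2D^4\}$ arise, respectively, from bounding the initial error via the Lipschitz constant $L$ of Assumption~\ref{assumption1} and from the per-step noise-plus-drift increment. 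Descaling gives $\mathbb{E}\,\max_{\bm{e}}|d_t^{(\bm{e})}-\Delta_t^{(\bm{e})}|\le\mathbb{E}\,\norm{\bm d_t-\bm\Delta_t}\le Q^{1/2}/(l(t+9)^{1/3})$.

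For pillar~(ii) and the assembly, I would argue as in the discussion following Lemma~\ref{lemma2}: there is a clean greedy direction $\bm{e}_0$ with $\Delta_t^{(\bm{e}_0)}\ge\frac{\beta}{l}(f(\bm{v}^*)-f(\bm{x}_t))$, and since $\bm{e}^*$ maximizes $d_t^{(\bm{e})}$ the realized true gain satisfies
\[
f(\bm{x}_{t+1})-f(\bm{x}_t)\ge\frac{\beta}{l}\bigl(f(\bm{v}^*)-f(\bm{x}_t)\bigr)-2\max_{\bm{e}}\bigl|d_t^{(\bm{e})}-\Delta_t^{(\bm{e})}\bigr|.
\]
Taking expectations and unrolling this linear recursion in the gap $f(\bm{v}^*)-f(\bm{x}_t)$, with $(1-\beta/l)^{l}\le e^{-\beta}$ and $f(\bm{0})=0$, gives $\mathbb{E}[f(\bm{x}_l)]\ge(1-e^{-\beta})f(\bm{v}^*)-2\sum_{t=0}^{l-1}\mathbb{E}\,\max_{\bm{e}}|d_t^{(\bm{e})}-\Delta_t^{(\bm{e})}|$. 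Substituting the descaled mean-square bound and estimating $\sum_{t=0}^{l-1}(t+9)^{-1/3}$ by an integral (of order $l^{2/3}$) collapses the error sum to order $Q^{1/2}/l^{1/3}$, producing the stated $2Q^{1/2}/l^{1/3}$ term. Finally, Lemma~\ref{lemma_bound} with Assumption~\ref{assumption1} gives $f(\bm{v}^*)\ge\mathrm{OPT}-mDL/l$, so multiplying by $(1-e^{-\beta})$ yields the remaining two terms and completes the bound.

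The hard part will be pillar~(i), and specifically two sub-steps inside it: the variance translation that turns the gradient assumption into the clean square $\hat\sigma^2=(\sigma D+L_1D^2)^2$, and the inductive solution of the mean-squared recursion, where one must verify that the prescribed $Q$ simultaneously dominates the initial error and every per-step increment under the given $\rho_t$ schedule --- this is exactly the step that pins down the constants $9^{2/3}$, $16$ and $12$. By contrast, pillar~(ii) and the unrolling are a faithful re-run of the deterministic Theorem~\ref{the1_LDGM} argument, and should follow with only notational changes once the mean-squared bound is in hand.
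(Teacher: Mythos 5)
Your proposal follows the paper's own route almost exactly: the paper first proves a per-direction mean-square bound $\mathbb{E}[(d_t^{(\bm{e})}-\Delta_t^{(\bm{e})})^2]\le Q/(l^2(t+9)^{2/3})$ (Lemma~\ref{lemma:stochs_1}) via precisely your three-term decomposition --- $(1-\rho_t)$-contraction of the previous error, a drift term controlled by Assumption~\ref{assump:Lips_1} together with $\norm{\bm{e}}\le D/l$ (giving the $L_1^2D^4$ contribution), and a fresh noise term whose variance is converted from Assumption~\ref{assump:stoch_var} into $\hat\sigma^2=(\sigma D+L_1D^2)^2$ by the mean-value theorem along the segment --- closed by the recursion lemma of \cite{mokhtari2017conditional}; it then proves a stochastic analogue of Lemma~\ref{lemma2} (Lemma~\ref{lemma:stoch_2}) and unrolls exactly as in Theorem~\ref{the1_LDGM}, invoking Lemma~\ref{lemma_bound} for $f(\bm{v}^*)\ge\mathrm{OPT}-mDL/l$. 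The one substantive deviation is your treatment of the random selected index: you bound the per-step loss by $2\max_{\bm{e}}|d_t^{(\bm{e})}-\Delta_t^{(\bm{e})}|\le 2\norm{\bm{d}_t-\bm{\Delta}_t}$ and run the mean-square recursion on the full $m$-dimensional error vector. That is cleaner logically, but the fresh-noise term then becomes $\sum_{\bm{e}}\mathbb{E}[(\tilde\Delta_t^{(\bm{e})}-\Delta_t^{(\bm{e})})^2]\le m\hat\sigma^2/l^2$ rather than $\hat\sigma^2/l^2$, so the constant you can actually certify is larger by a factor of $m$ and your final error term comes out as $O(\sqrt{m}\,Q^{1/2}/l^{1/3})$ rather than the stated $2Q^{1/2}/l^{1/3}$. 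The paper keeps $Q$ dimension-free by proving the bound entry-by-entry and then applying Jensen's inequality separately at the selected index $\bm{e}^*$ and at the $l$ deterministic components of $\bm{v}^*$ (this costs a factor $2$, matching the statement); the price is that bounding $\mathbb{E}[d_t^{(\bm{e}^*)}-\Delta_t^{(\bm{e}^*)}]$ for the \emph{random} argmax $\bm{e}^*$ by the fixed-index bound is exactly the step your $\max$ was designed to repair. So your structure is right and arguably more careful where the paper is loose, but as written it does not recover the theorem's constant; to match the statement verbatim you would need to switch to the paper's per-entry bookkeeping in pillar~(ii).
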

	
	In the following, we introduce the proof intuition. Lemma~\ref{lemma:stochs_1} shows that the expectation of difference between the true marginal gain $\Delta_t^{(\bm{e})}=f(\bm{x}_t+\bm{e})-f(\bm{x}_t)$ and its surrogate $d_{t}^{(\bm{e})}$ is bounded for all $\bm{e}\in \mathcal{E}$. The proof is inspired by Lemmas~1 and~2 in~\cite{mokhtari2017conditional}.
	
	\begin{lemma}\label{lemma:stochs_1}
		In the $t$-th iteration of generalized LDGM with $\gamma=1$ and $\rho_t = \frac{4}{(t+8^{2/3})}$, for all $\bm{e}\in \mathcal{E}$, we have
		\begin{align}
		\mathbb{E}\left[(d_{t}^{(\bm{e})}-\Delta_t^{(\bm{e})})^2 \right] \leq \frac{Q}{l^2(t+9)^{2/3}}.
		\end{align}
	\end{lemma}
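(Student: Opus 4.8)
The plan is to prove the bound by induction on $t$, running the variance-reduction recursion of \cite{mokhtari2017conditional} for the scalar error $a_t^{(\bm e)}:=d_t^{(\bm e)}-\Delta_t^{(\bm e)}$. Fix $\bm e\in\mathcal E$ and drop the superscript. Writing $b_t:=\Delta_{t-1}-\Delta_t$ for the drift of the true marginal gain and $n_t:=\tilde\Delta_t-\Delta_t$ for the fresh estimation noise, the update in line~6 of Algorithm~\ref{LDGM} with $\gamma=1$ rearranges into
\[
a_t=(1-\rho_t)(a_{t-1}+b_t)+\rho_t n_t.
\]
Since $\tilde f$ is an unbiased estimate of $f$, conditioning on the history $\mathcal F_t$ that determines $\bm x_t$ gives $\mathbb E[n_t\mid\mathcal F_t]=0$, while $a_{t-1}$ and $b_t$ are $\mathcal F_t$-measurable; the cross term therefore vanishes and
\[
\mathbb E[a_t^2]=(1-\rho_t)^2\,\mathbb E[(a_{t-1}+b_t)^2]+\rho_t^2\,\mathbb E[n_t^2].
\]
Applying Young's inequality $(a_{t-1}+b_t)^2\le(1+\rho_t)a_{t-1}^2+(1+\rho_t^{-1})b_t^2$ together with $(1-\rho_t)^2(1+\rho_t)\le 1-\rho_t$ and $(1-\rho_t)^2(1+\rho_t^{-1})\le 2\rho_t^{-1}$ (valid because $\rho_t\le\rho_0=1$) reduces the problem to controlling $\mathbb E[b_t^2]$ and $\mathbb E[n_t^2]$.

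Next I would supply two deterministic estimates, both carrying powers of $1/l$ because every $\bm e\in\mathcal E$ obeys $\norm{\bm e}\le D/l$ by Assumption~\ref{assumption2}. For the drift, set $g(\bm y):=f(\bm y+\bm e)-f(\bm y)$; Assumption~\ref{assump:Lips_1} gives $\norm{\nabla g}\le L_1\norm{\bm e}\le L_1D/l$, so $g$ is $(L_1D/l)$-Lipschitz, and since $\norm{\bm x_t-\bm x_{t-1}}=\norm{\bm e^*}\le D/l$ we obtain $|b_t|\le L_1D^2/l^2$. For the noise, a first-order expansion of $\tilde f$ and $f$ with gradient Lipschitzness yields $|n_t|\le\norm{\nabla\tilde f(\bm x_t)-\nabla f(\bm x_t)}\,\norm{\bm e}+L_1\norm{\bm e}^2$; multiplying by $l$, using $l\norm{\bm e}\le D$, squaring, and taking expectations with $\mathbb E[\norm{\nabla\tilde f-\nabla f}^2]\le\sigma^2$ (Assumption~\ref{assump:stoch_var}) and $\mathbb E[\norm{\nabla\tilde f-\nabla f}]\le\sigma$ produces $\mathbb E[n_t^2]\le\hat\sigma^2/l^2$, which is precisely why $\hat\sigma^2=(\sigma D+L_1D^2)^2=\sigma^2D^2+L_1^2D^4+2L_1\sigma D^3$.

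Substituting $\mathbb E[b_t^2]\le L_1^2D^4/l^4$, $\mathbb E[n_t^2]\le\hat\sigma^2/l^2$ and $\rho_t=4/(t+8)^{2/3}$ into the recursion, and using $t\le l$ to bound $(t+8)^2/l^2$ by a constant so that the drift and noise contributions merge, I would reach
\[
\mathbb E[a_t^2]\le(1-\rho_t)\,\mathbb E[a_{t-1}^2]+\frac{16\hat\sigma^2+12L_1^2D^4}{l^2(t+8)^{4/3}}.
\]
Feeding in the inductive hypothesis $\mathbb E[a_{t-1}^2]\le Q/(l^2(t+8)^{2/3})$ and $Q\ge 16\hat\sigma^2+12L_1^2D^4$, the right-hand side is at most $\tfrac{Q}{l^2}\big((t+8)^{-2/3}-3(t+8)^{-4/3}\big)$, so it remains to check the numerical inequality $(t+8)^{-2/3}-3(t+8)^{-4/3}\le(t+9)^{-2/3}$, which follows from the mean-value bound $x^{-2/3}-(x+1)^{-2/3}\le\tfrac23x^{-5/3}\le 3x^{-4/3}$ at $x=t+8$. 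The base case $t=0$ reduces, via $\rho_0=1$ and $d_{-1}=0$, to bounding $\mathbb E[a_0^2]=\mathbb E[n_0^2]$, which the definition of $Q$ is designed to dominate.

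The main obstacle is the noise step: relating the function-difference noise $n_t$ to the gradient-variance Assumption~\ref{assump:stoch_var} cleanly enough to recover the perfect-square constant $\hat\sigma^2$, which in turn requires applying gradient smoothness to the sampled $\tilde f$ and not only to $f$. The rest is the constant accounting that makes both branches of $Q$ and the $\rho_t$ schedule fit the induction.
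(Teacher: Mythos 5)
Your proposal is correct and matches the paper's proof in all essentials: the same three-term decomposition of $d_t^{(\bm{e})}-\Delta_t^{(\bm{e})}$ into the previous error, the drift $\Delta_{t-1}^{(\bm{e})}-\Delta_t^{(\bm{e})}$, and the fresh noise, the same use of conditional unbiasedness to kill the noise cross terms, the same $\rho_t$-weighted Young/AM--GM split of the remaining cross term, and the same mean-value/Lipschitz estimates $\mathbb{E}[b_t^2]=O(L_1^2D^4/l^4)$ and $\mathbb{E}[n_t^2]\leq \hat\sigma^2/l^2$ (your drift bound via the map $\bm{y}\mapsto f(\bm{y}+\bm{e})-f(\bm{y})$ is in fact a factor of $4$ tighter than the paper's). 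The only divergence is at the final bookkeeping step: you solve the recursion by explicit induction with the check $(t+8)^{-2/3}-(t+9)^{-2/3}\leq \tfrac{2}{3}(t+8)^{-5/3}\leq 3(t+8)^{-4/3}$, whereas the paper invokes Lemma~2 of Mokhtari et al.\ as a black box; both routes rely on the same implicit use of $t\leq l$ to absorb the $(1+2/\rho_t)$ drift coefficient into the $\rho_t^2$ scale, so this is a presentational rather than a substantive difference.
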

	
	By Lemma~\ref{lemma:stochs_1}, we can bound the error induced by stochastic settings, and thus we can prove a stochastic version of Lemma~\ref{lemma2}, as presented in Lemma~\ref{lemma:stoch_2}. The difference in the proof is mainly about the interplay between the true marginal gain $\Delta_t^{(\bm{e})}$ and its surrogate $d_t^{(\bm{e})}$.
	
	\begin{lemma}\label{lemma:stoch_2} Let $\bm{v}^*$ be the best solution one can achieve using $l$ vectors in $\mathcal{E}$, denoted as $\bm{v}^*=\sum\nolimits_{i=1}^l \bm{e}_i$, where $\bm{e}_i \in \mathcal{E}$. In the $t$-th iteration of generalized LDGM with $\gamma=1$ and $\rho_t = \frac{4}{(t+8^{2/3})}$, the expectation of the increment on $f$ is
		\begin{equation}
		\begin{aligned}
		\mathbb{E}[f(\bm{x}_{t+1})-f(\bm{x}_t)]\geq \frac{\beta}{l}\mathbb{E}[f(\bm{v}^*)-f(\bm{x}_t)] -\frac{2Q^{\frac{1}{2}}}{l(t+9)^{\frac{1}{3}}}.
		\end{aligned}
		\end{equation}
	\end{lemma}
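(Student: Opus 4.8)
The plan is to establish a stochastic counterpart of Lemma~\ref{lemma2}: in iteration $t$ the algorithm commits to the direction $\bm{e}^*$ that maximizes the \emph{surrogate} $d_t^{(\bm{e})}$ rather than the true marginal gain $\Delta_t^{(\bm{e})}=f(\bm{x}_t+\bm{e})-f(\bm{x}_t)$, so I would reproduce the averaging argument underlying Lemma~\ref{lemma2} at the level of surrogates and then pay, in expectation, for the discrepancy between $d_t^{(\bm{e})}$ and $\Delta_t^{(\bm{e})}$ using Lemma~\ref{lemma:stochs_1}. Writing the benchmark as $\bm{v}^*=\sum_{i=1}^{l}\bm{e}_i$ with each $\bm{e}_i\in\mathcal{E}$, the first ingredient is the deterministic averaging inequality that already powers Lemma~\ref{lemma2}: a telescoping application of the DR-submodularity ratio (Lemma~\ref{lemma1}) along the chain $\bm{x}_t\leq\bm{x}_t+\sum_{j<i}\bm{e}_j$, followed by monotonicity, gives
\[
\frac{1}{l}\sum_{i=1}^{l}\Delta_t^{(\bm{e}_i)}\geq \frac{\beta}{l}\bigl(f(\bm{v}^*)-f(\bm{x}_t)\bigr).
\]

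The second ingredient is the greedy choice. Since $\bm{e}^*=\arg\max_{\bm{e}\in\mathcal{E}}d_t^{(\bm{e})}$ and every $\bm{e}_i$ lies in $\mathcal{E}$, we have $d_t^{(\bm{e}^*)}\geq \frac{1}{l}\sum_{i=1}^{l}d_t^{(\bm{e}_i)}$. I would then write the realized increment as
\[
f(\bm{x}_{t+1})-f(\bm{x}_t)=\Delta_t^{(\bm{e}^*)}=d_t^{(\bm{e}^*)}-\bigl(d_t^{(\bm{e}^*)}-\Delta_t^{(\bm{e}^*)}\bigr),
\]
substitute the greedy lower bound for $d_t^{(\bm{e}^*)}$, and re-expand each $d_t^{(\bm{e}_i)}$ as $\Delta_t^{(\bm{e}_i)}+(d_t^{(\bm{e}_i)}-\Delta_t^{(\bm{e}_i)})$. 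Combining with the averaging inequality yields
\[
f(\bm{x}_{t+1})-f(\bm{x}_t)\geq \frac{\beta}{l}\bigl(f(\bm{v}^*)-f(\bm{x}_t)\bigr)+\frac{1}{l}\sum_{i=1}^{l}\bigl(d_t^{(\bm{e}_i)}-\Delta_t^{(\bm{e}_i)}\bigr)-\bigl(d_t^{(\bm{e}^*)}-\Delta_t^{(\bm{e}^*)}\bigr).
\]

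Taking expectations, the leading term reproduces $\frac{\beta}{l}\mathbb{E}[f(\bm{v}^*)-f(\bm{x}_t)]$, and it remains to control the two error terms. For the averaged term the directions $\bm{e}_i$ are fixed, so Jensen's inequality and Lemma~\ref{lemma:stochs_1} give $\mathbb{E}|d_t^{(\bm{e}_i)}-\Delta_t^{(\bm{e}_i)}|\leq Q^{1/2}/(l(t+9)^{1/3})$ for each $i$; averaging over the $l$ terms leaves a single copy of $Q^{1/2}/(l(t+9)^{1/3})$, and the remaining term contributes the second copy, producing the claimed $2Q^{1/2}/(l(t+9)^{1/3})$. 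The hard part is this last term, where the direction $\bm{e}^*$ is itself data-dependent — chosen to maximize the noisy surrogate — so it suffers from an optimizer's-bias effect and is not a fixed index to which Lemma~\ref{lemma:stochs_1} applies verbatim. I would handle it by invoking Lemma~\ref{lemma:stochs_1} in its per-direction form uniformly over the finite set $\mathcal{E}$ (equivalently, conditioning on the history up to iteration $t$ and using conditional Jensen, or bounding by $\max_{\bm{e}\in\mathcal{E}}|d_t^{(\bm{e})}-\Delta_t^{(\bm{e})}|$), so that the same $Q^{1/2}/(l(t+9)^{1/3})$ bound transfers to the random choice $\bm{e}^*$; checking that this transfer does not inflate the constant is the crux of the argument.
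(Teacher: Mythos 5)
Your proposal is correct and follows essentially the same route as the paper's proof: telescope the true gains along $\bm{v}^*$ via Lemma~\ref{lemma1}, pass to the surrogates through the greedy selection $d_t^{(\bm{e}^*)}\geq \frac{1}{l}\sum_i d_t^{(\bm{e}_i)}$, and pay the $d_t$-versus-$\Delta_t$ discrepancy twice, each copy bounded by $Q^{1/2}/(l(t+9)^{1/3})$ via Jensen's inequality and Lemma~\ref{lemma:stochs_1}. The point you flag as the crux---that $\bm{e}^*$ is data-dependent, so Lemma~\ref{lemma:stochs_1} does not apply verbatim to $\mathbb{E}[d_t^{(\bm{e}^*)}-\Delta_t^{(\bm{e}^*)}]$---is a genuine subtlety, but the paper's own proof glosses over it in exactly the way you describe (applying Jensen and Lemma~\ref{lemma:stochs_1} directly to the random index); a fully rigorous treatment via $\max_{\bm{e}\in\mathcal{E}}|d_t^{(\bm{e})}-\Delta_t^{(\bm{e})}|$ would inflate that one term by a factor of order $\sqrt{m}$.
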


	By Lemma~\ref{lemma:stoch_2}, the approximation guarantee in Theorem~\ref{the_stoch} can be proved in the way similar to the proof of Theorem~\ref{the1_LDGM}.

\section{Empirical Study}

In this section, we empirically examine the performance of generalized LDGM with three sets of experiments, all using real-world data sets. We use two applications: budget allocation with continuous assignments and continuous generalization of maximum coverage~\cite{bian2017guaranteed}, with differentiable DR-submodular and non-differentiable submodular objective functions, respectively. First, we compare generalized LDGM with the gradient-based algorithms, i.e., FW~\cite{bian2017guaranteed} and SCG~\cite{mokhtari2017conditional}, on budget allocation in noise-free, noisy and stochastic settings, as shown in Figure~\ref{fig:ba}. To examine the performance of LDGM-G, we next compare it with generalized LDGM, FW and SCG on the submodular application of maximum coverage, as shown in Figure~\ref{fig:mc}. Finally, we examine the influence of the number of vertices on generalized LDGM, as shown in Figure~\ref{fig:vv}. For notational convenience, generalized LDGM will be simply called LDGM in the figures.

In the experiments, the number of iterations is set to 60 for all algorithms, as using more iterations will not bring much improvement. For generalized LDGM and SCG, the averaging parameter $\rho_t$ is set to $4/(t+8)^{2/3}$ suggested by theoretical analysis. For generalized LDGM under noise-free environments, the look-ahead parameter $\gamma$ is always set to 1. Under additive noise or in stochastic settings, the behavior of algorithms is randomized, and thus the running of each algorithm is repeated for 50 times independently, and the average results are reported.

\subsection{Budget Allocation}\label{sec-exp-budget}

Let a bipartite graph $G=(V,T;E)$ represent a social network, where each source node in $V$ is a marketing channel, each target node in $T$ is a customer, and $E \subseteq V \times T$ is the edge set. The goal of budget allocation is to distribute the budget $k$ among the source nodes such that the expected number of target nodes that get activated is maximized. The allocation of the budget can be represented by a vector $\bm{x} \in  \mathbb{R}^n_+ $, where $x_i$ is the budget allocated to $v_i \in V$. Each source node $v_i \; (i \in [n])$ has a probability $p_i \in [0,1]$, and the probability that a target node $t \in T$ gets activated is $f_t(\bm{x})=1-\prod\nolimits_{i: (v_i,t) \in E} (1-p_{i})^{x_i}.$ By the linearity of expectation, the expected number of active target nodes is $f(\bm{x}) = \sum_{t \in T} f_t(\bm{x})$, which is monotone and DR-submodular~\cite{bian2017guaranteed}.

We use two real-world bipartite graphs: \textit{Yahoo! Search Marketing Advertiser Bidding Data}\footnote{\small \url{https://webscope.sandbox.yahoo.com/catalog.php?datatype=a}} and \textit{Wikipedia Adminship Election Data}.\footnote{\small \url{https://snap.stanford.edu/data/wiki-Elec.html}} The \textit{Yahoo!} data set has 1,000 source nodes, 10,475 target nodes and 52,567 edges, and the \textit{WikiElec} data set has 8,300 source nodes, 8,300 target nodes and 114,040 edges. We select $p_i$ uniformly from $[0,0.4]$ at random for each source node. The size constraint, i.e., $|\bm{x}|\leq k \wedge \bm{x}\geq 0$, is used in the stochastic setting, while the settings without noise and with additive noise consider a convex polytope $\mathcal{P}=k\cdot conv(E)$, where $E$ is a set of 100 randomly chosen vertices and $k$ is a parameter to control how large is $\mathcal{P}$.

\begin{figure}[t!]\centering
	\begin{minipage}[c]{\linewidth}\centering
		\includegraphics[width=0.49\linewidth]{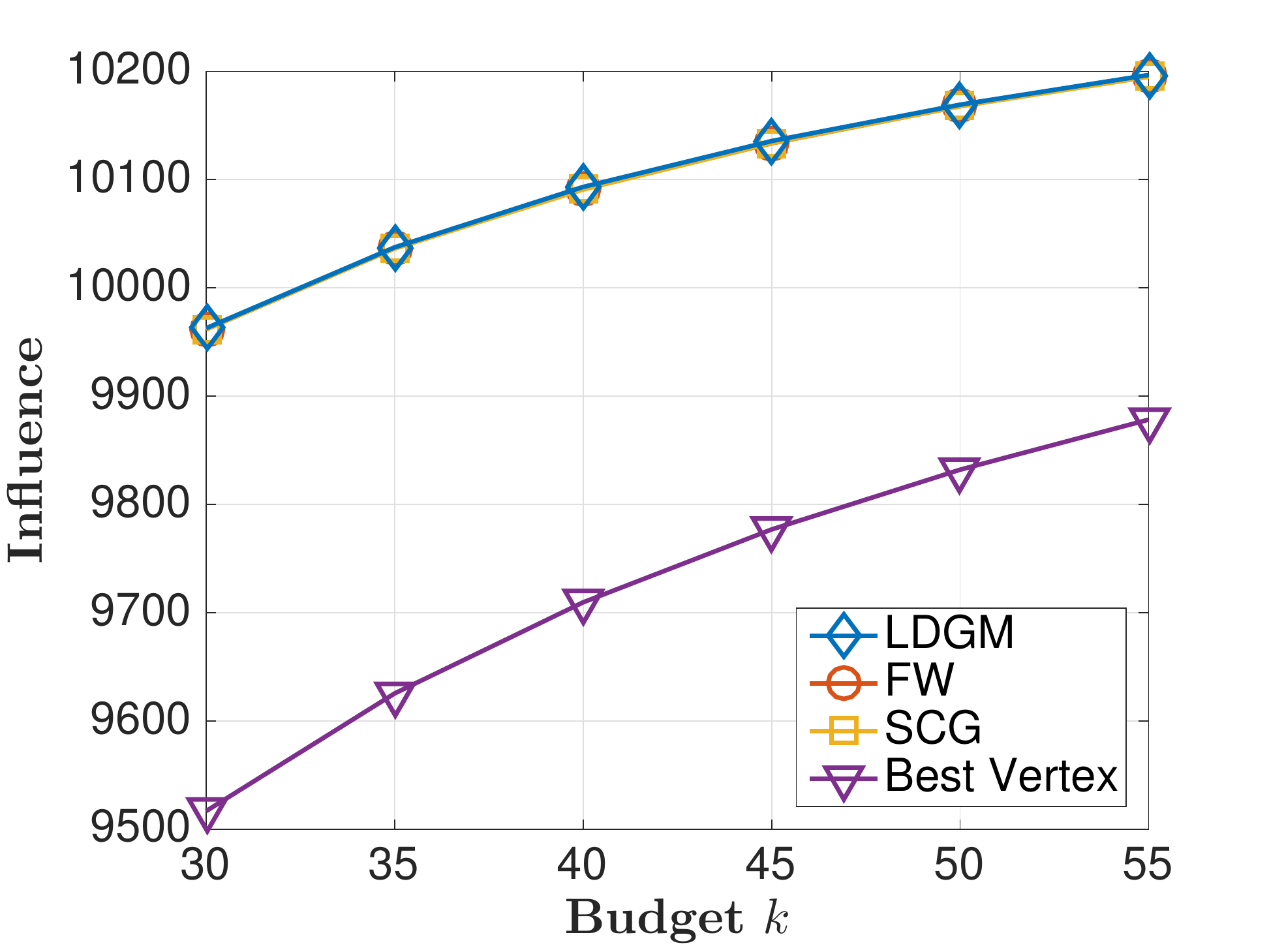}
		\includegraphics[width=0.49\linewidth]{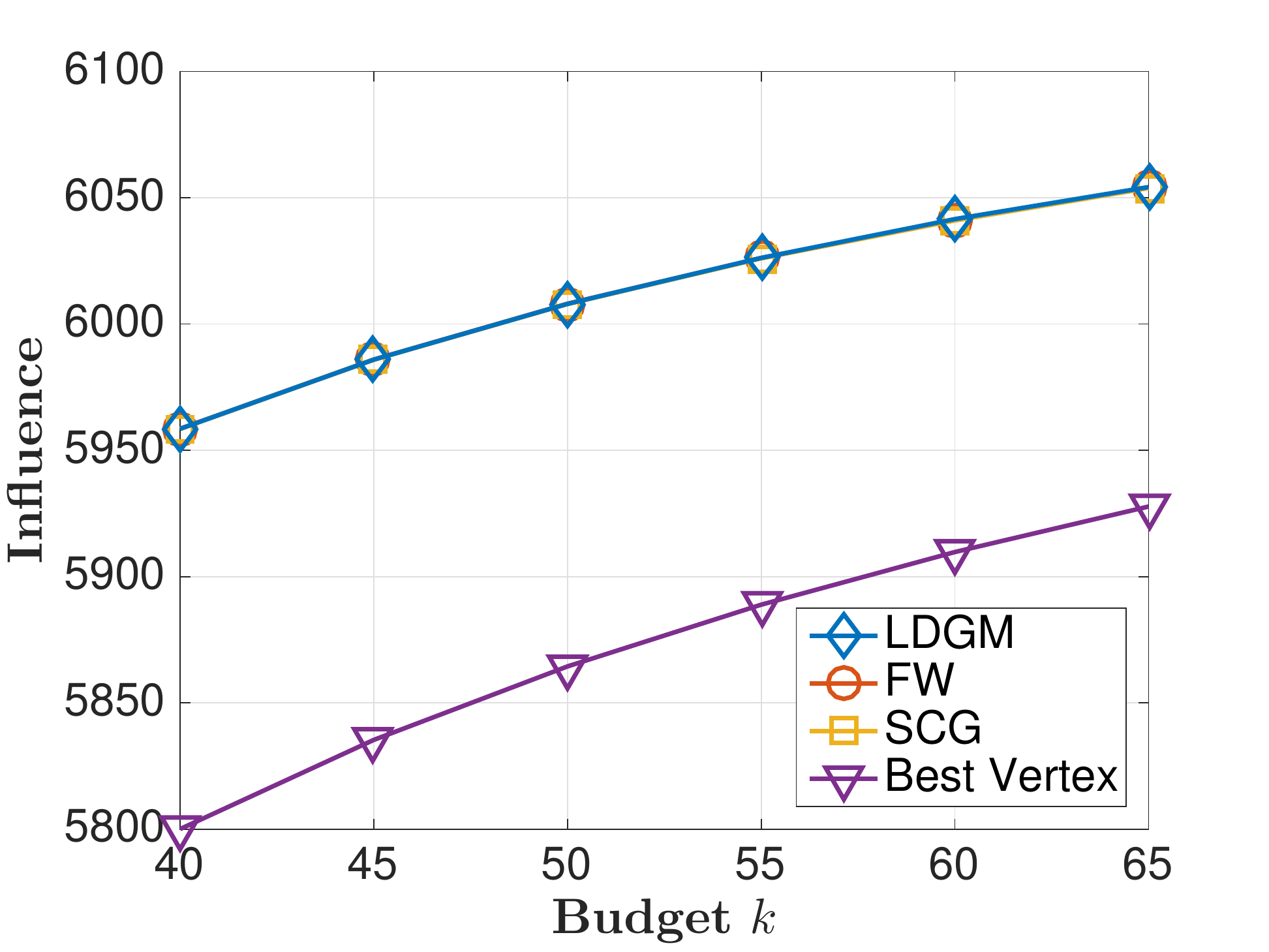}\\
		\small(a) \textit{Noise-free setting}
	\end{minipage}\\ \vspace{0.3em}
	\begin{minipage}[c]{\linewidth}\centering
		\includegraphics[width=0.49\linewidth]{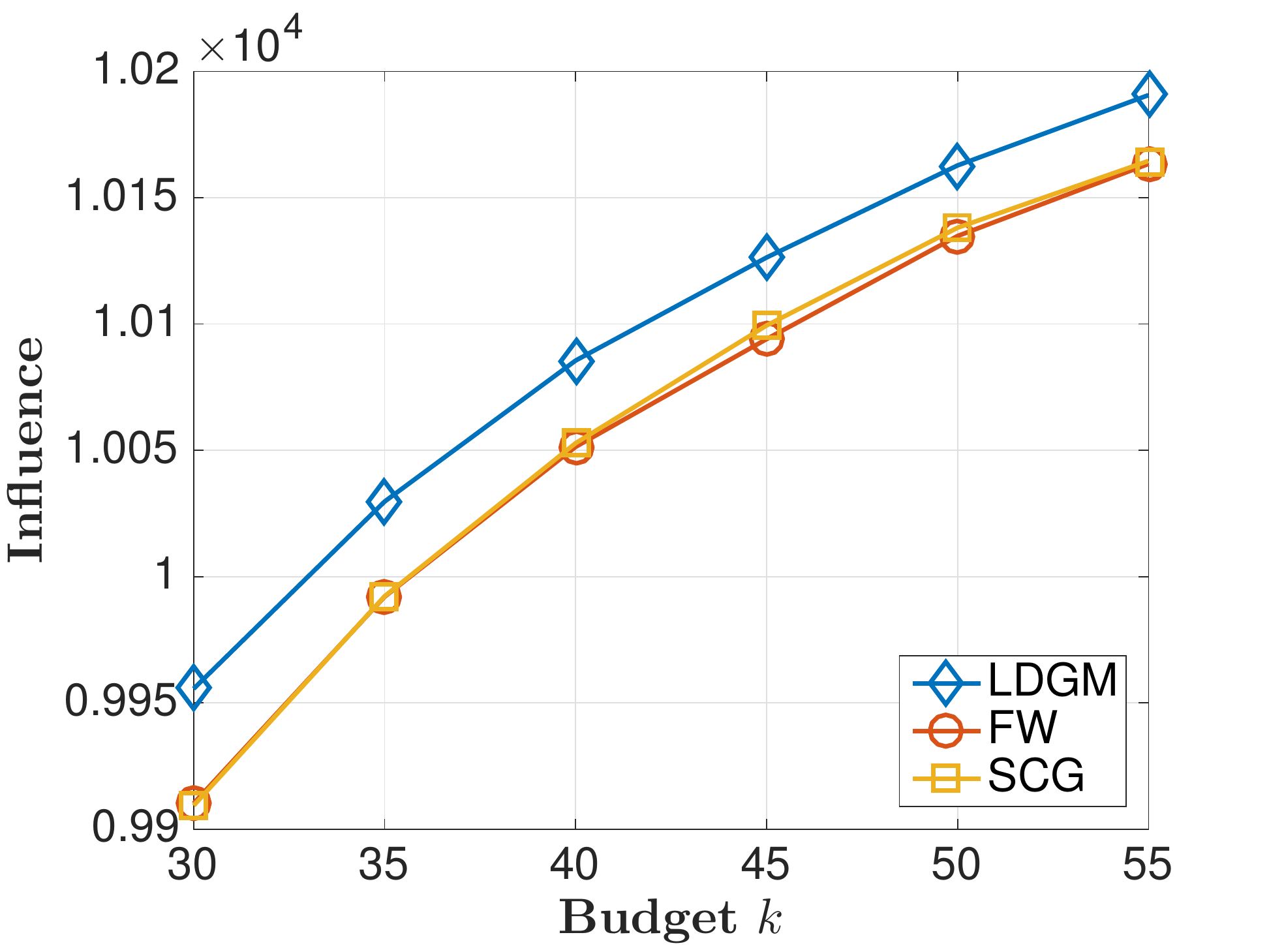}
		\includegraphics[width=0.49\linewidth]{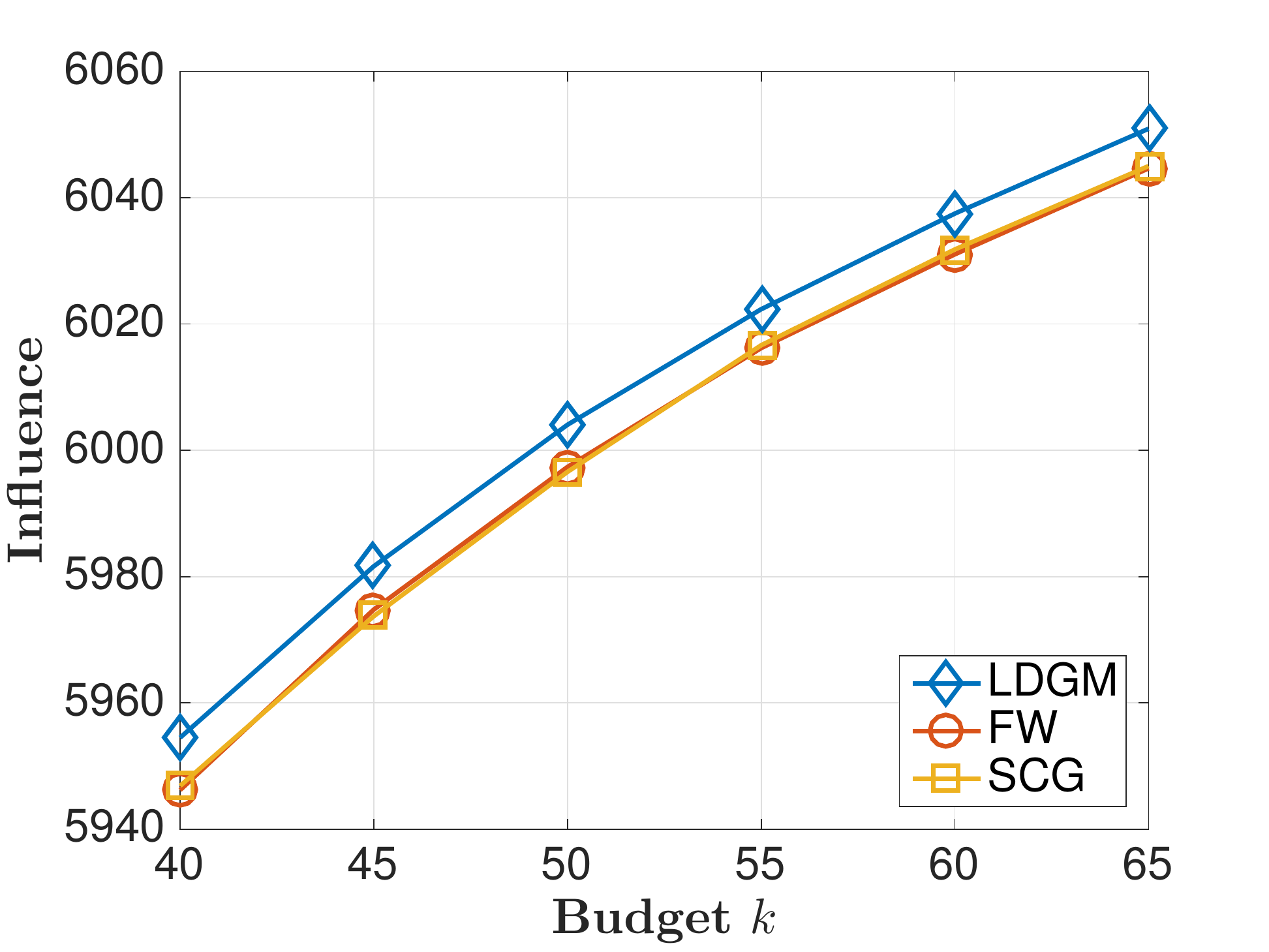}\\
		\small(b) \textit{Additive noise: $\delta=100$ (left) and $\delta=50$ (right)}
	\end{minipage}\\ \vspace{0.3em}
	\begin{minipage}[c]{\linewidth}\centering
		\includegraphics[width=0.49\linewidth]{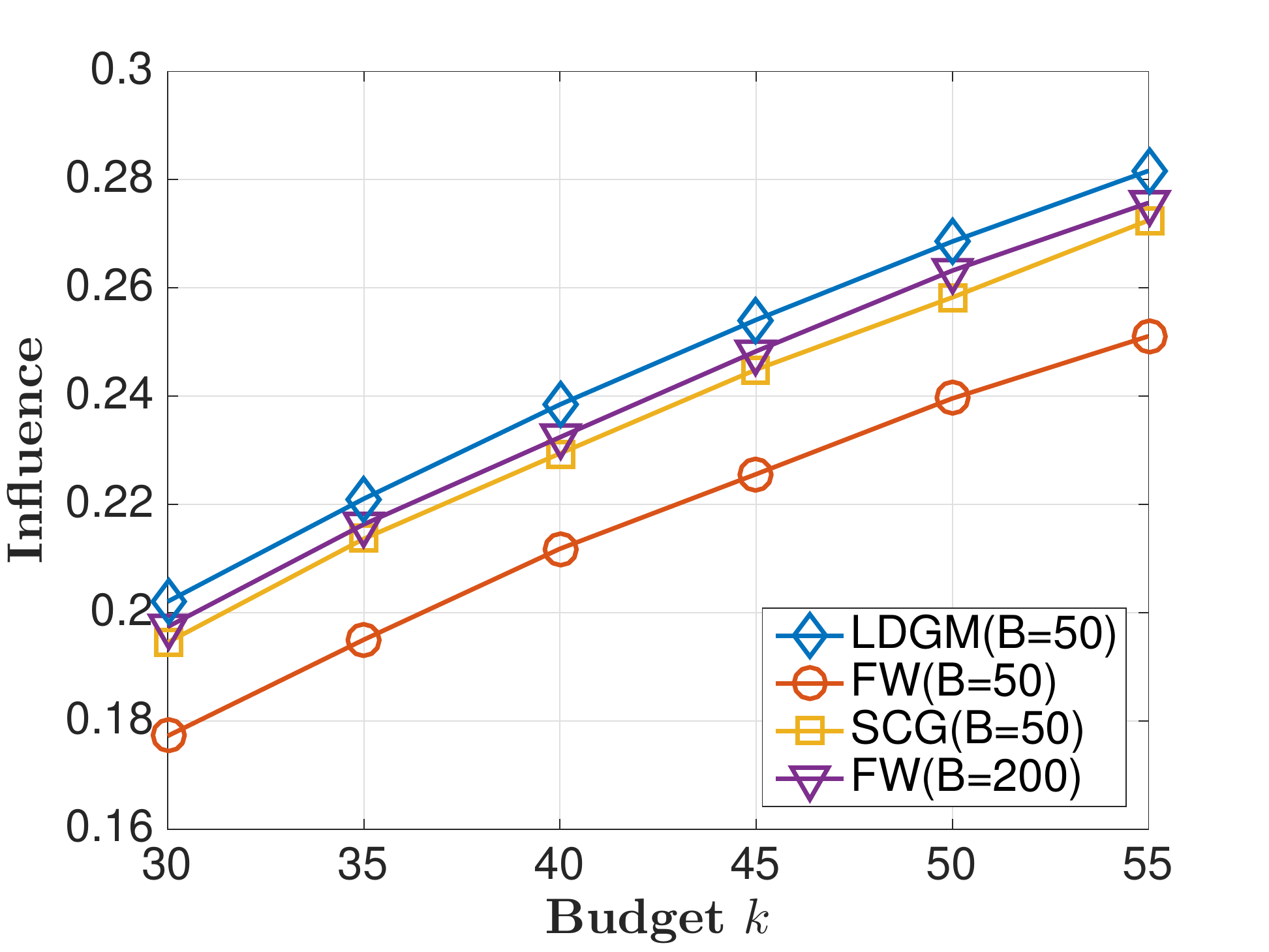}
		\includegraphics[width=0.49\linewidth]{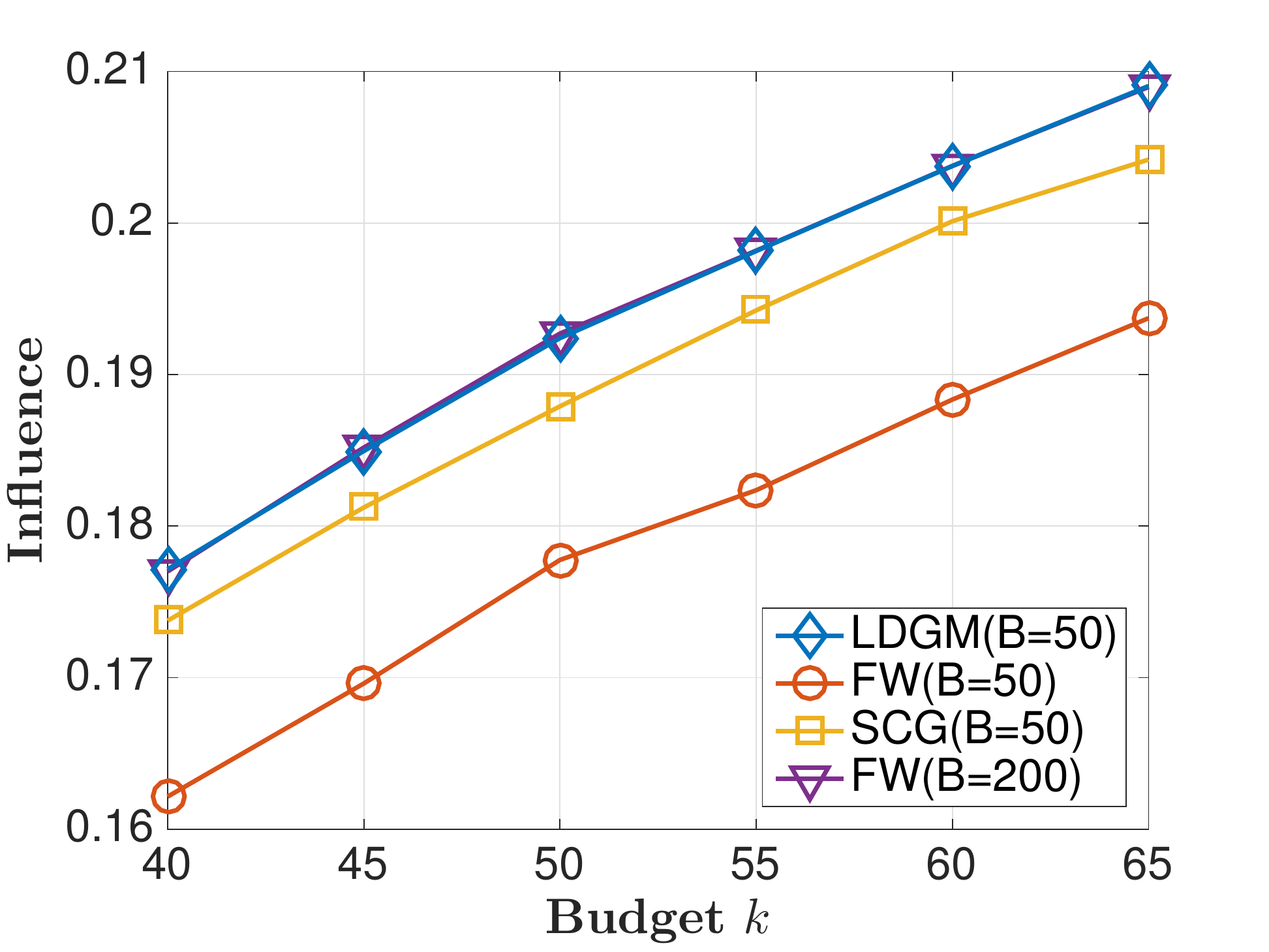}\\
		\small(c) \textit{Stochastic setting}
	\end{minipage}\vspace{-0.3em}
	\caption{Budget Allocation on the \textit{Yahoo!} (left column) and \textit{WikiElec} (right column) data sets.}\label{fig:ba}
\end{figure}

For the noise-free case, generalized LDGM has access to the exact function value, while both FW and SCG have access to the exact gradient. We also enumerate vertices to choose the best one as a baseline. Figure~\ref{fig:ba}(a) shows that generalized LDGM, FW and SCG perform nearly the same, outperforming the best vertex baseline.

For the additive noise case, we consider the black-box situation where only noisy objective function values can be obtained. That is, a function call returns $f(\bm{x})+\epsilon$ instead of $f(\bm{x})$. Assume $\epsilon$ is uniformly randomly chosen from $[-\delta,\delta]$. For generalized LDGM, $\gamma$ is tuned to 46 for the \textit{Yahoo!} data set and 39 for the \textit{WikiElec} data set. For FW and SCG, we use forward difference with step size $a$ to estimate the gradient without incurring too much computation. We tune $a$ to 25 for \textit{Yahoo!} and 18 for \textit{WikiElec}, so that they perform nearly the best. Figure~\ref{fig:ba}(b) shows that generalized LDGM is better than FW, supporting our theoretical analysis in Section~\ref{sec-additive-noise}. The fact that SCG does not outperform FW much under additive noise may be because the gradient estimates calculated by forward difference is biased.

For the stochastic setting, when evaluating the objective $f(\bm{x}) = \mathbb{E}_{t\in T}[f_t(\bm{x})]$, a batch of target nodes $S_B \subseteq T$ with size $B$ are uniformly randomly selected for estimation, i.e., $\tilde{f}(\bm{x}) = \frac{1}{B}\sum_{t\in S_B}f_t(\bm{x})$. Thus, generalized LDGM obtains $\tilde{f}(\bm{x})$, while FW and SCG have access to the gradient of $\tilde{f}(\bm{x})$. The lookahead parameter $\gamma$ of generalized LDGM is to 5. The batch size $B$ is set to 50. Figure~\ref{fig:ba}(c) shows that generalized LDGM performs the best. We can observe that SCG outperforms FW, consistent with previous analyses~\cite{hassani2017gradient,mokhtari2017conditional}. We also test FW with $B=200$, which is much better than that with $B=50$ as expected, but still does not surpass the generalized LDGM algorithm.

\subsection{Maximum Coverage}

Maximum coverage is a classic submodular application, and has been generalized to be continuous submodular recently~\cite{bian2017guaranteed}. Now each subset $C_i \; (1\leq i \leq n)$ has a confidence $x_i \in [0,1]$ and a monotone covering function $p_i: \mathbb{R}_{+} \rightarrow 2^{C_i}$, and the objective function is $f(\bm{x})=|\cup^n_{i=1} p_i(x_i)|$. We generate the covering function $p_i(x_i)$ randomly to go from $\emptyset$ to $C_i$ as $x_i$ increases from 0 to 1. The linear constraint $\bm{a}^T\bm{x}\leq k \wedge \bm{x} \geq \bm{0}$ is used, where $\bm{a}\in \mathbb{R}^n_{+}$ is randomly generated from $(0,50)^n$.

We compare generalized LDGM, FW, SCG and LDGM-G using graph data sets from \textit{BHOSLIB}.\footnote{\small \url{http://sites.nlsde.buaa.edu.cn/~kexu/benchmarks/graph-benchmarks.htm}} Note that the objective $f$ is the size of a set, thus not differentiable, rendering the gradient-based methods not able to be applied directly. Thus, we use forward difference again for FW and SCG, with step size $a=1$. Figure~\ref{fig:mc} plots the results on four data sets, i.e., \textit{frb30-15-1} (450 nodes, 17,827 edges), \textit{frb40-19-1} (760 nodes, 41,314 edges), \textit{frb45-21-1} (945 nodes, 59,186 edges) and \textit{frb50-23-1} (1,150 nodes, 80,072 edges). We can observe that generalized LDGM is better than FW and SCG which suffer from unavailability of unbiased gradients, and LDGM-G performs the best, which is expected as its approximation guarantee in Theorem~\ref{the2_LDGM} is well bounded for submodular cases.

\begin{figure}[t!]\centering
	\begin{minipage}[c]{\linewidth}\centering
		\includegraphics[width=0.49\linewidth]{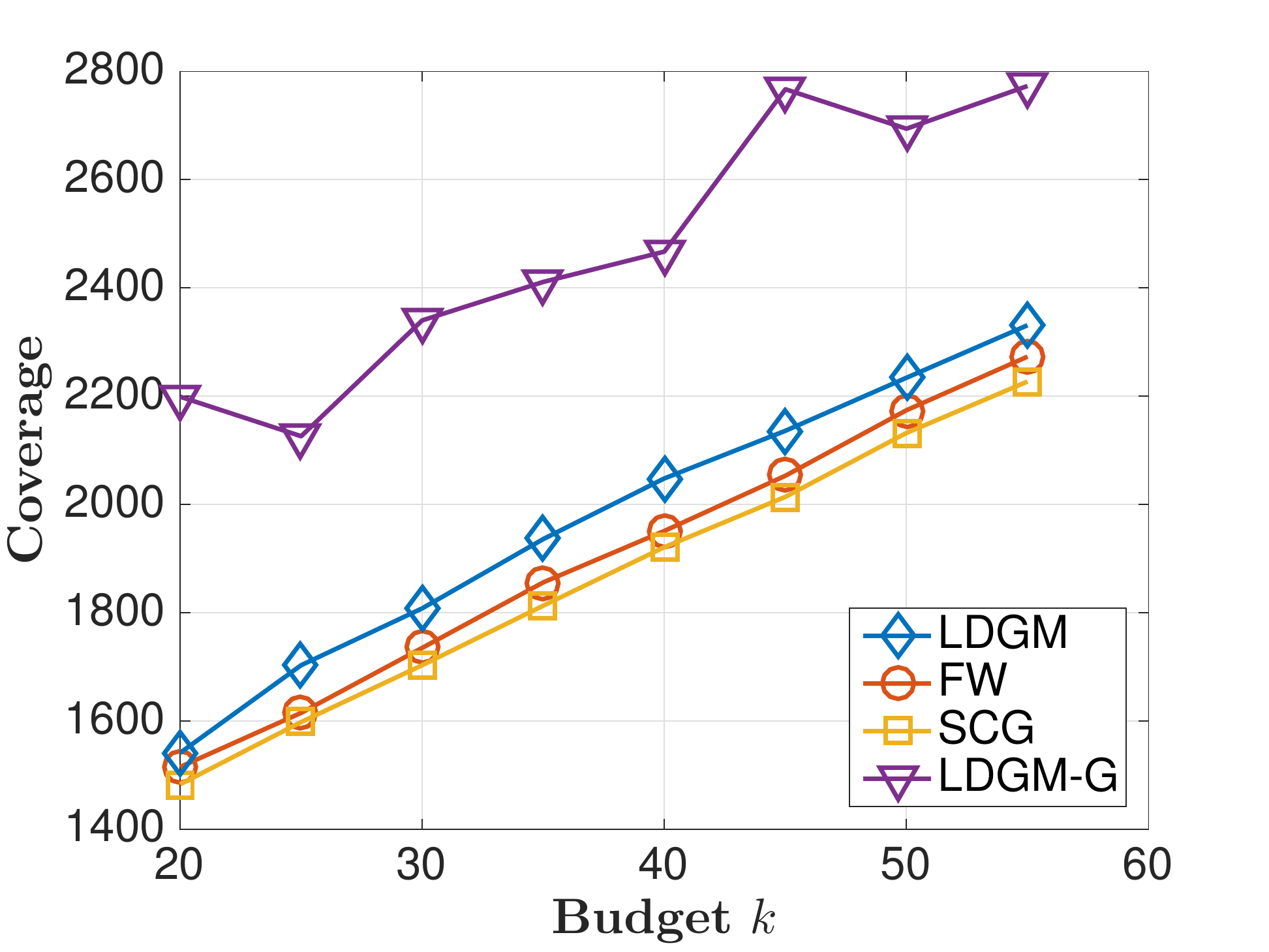}
		\includegraphics[width=0.49\linewidth]{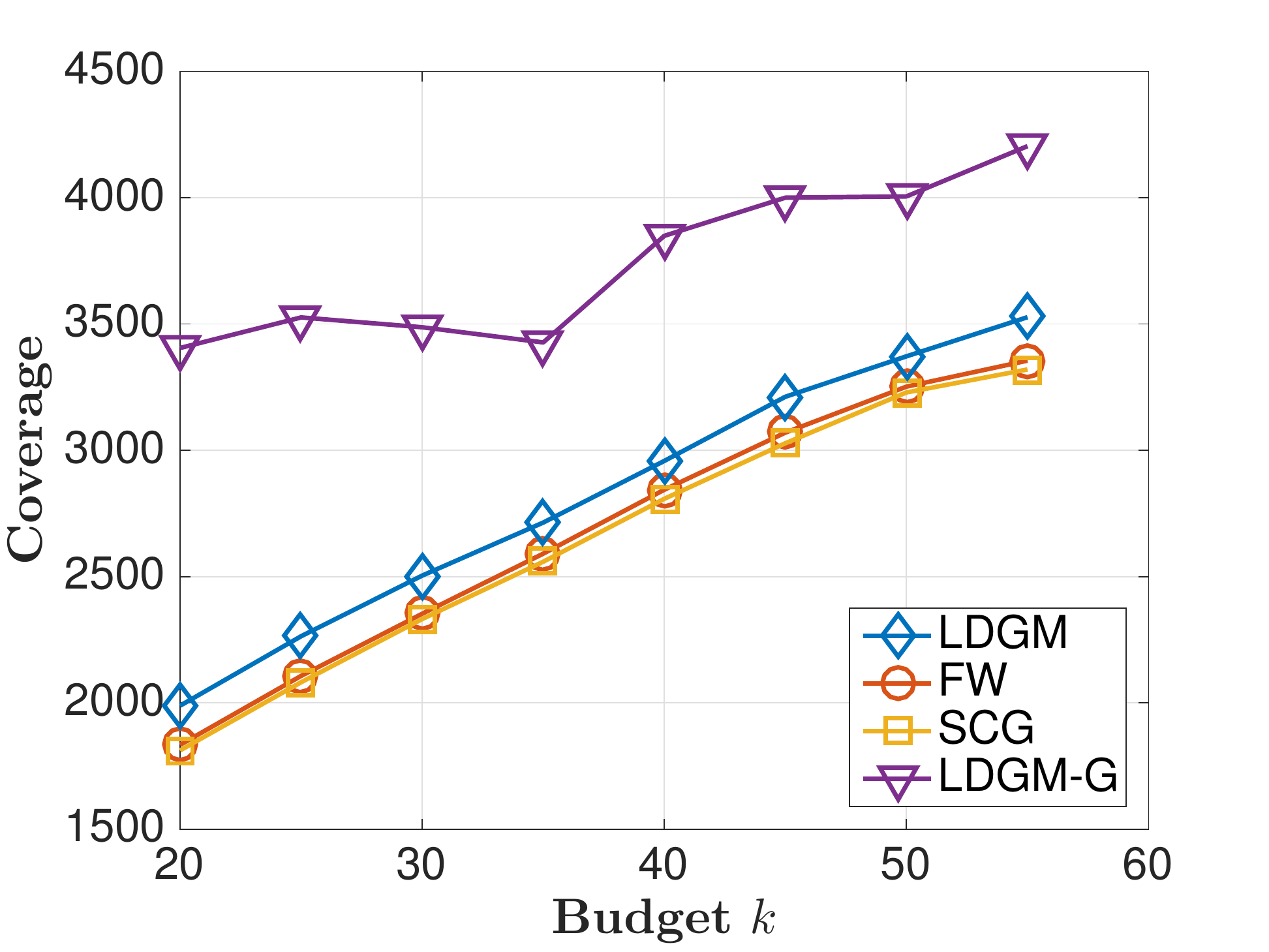}\\
		\small(a) \textit{frb30-15-1} (left) and \textit{frb40-19-1} (right)
	\end{minipage}\\ \vspace{-0.1em}
	\begin{minipage}[c]{\linewidth}\centering
		\includegraphics[width=0.49\linewidth]{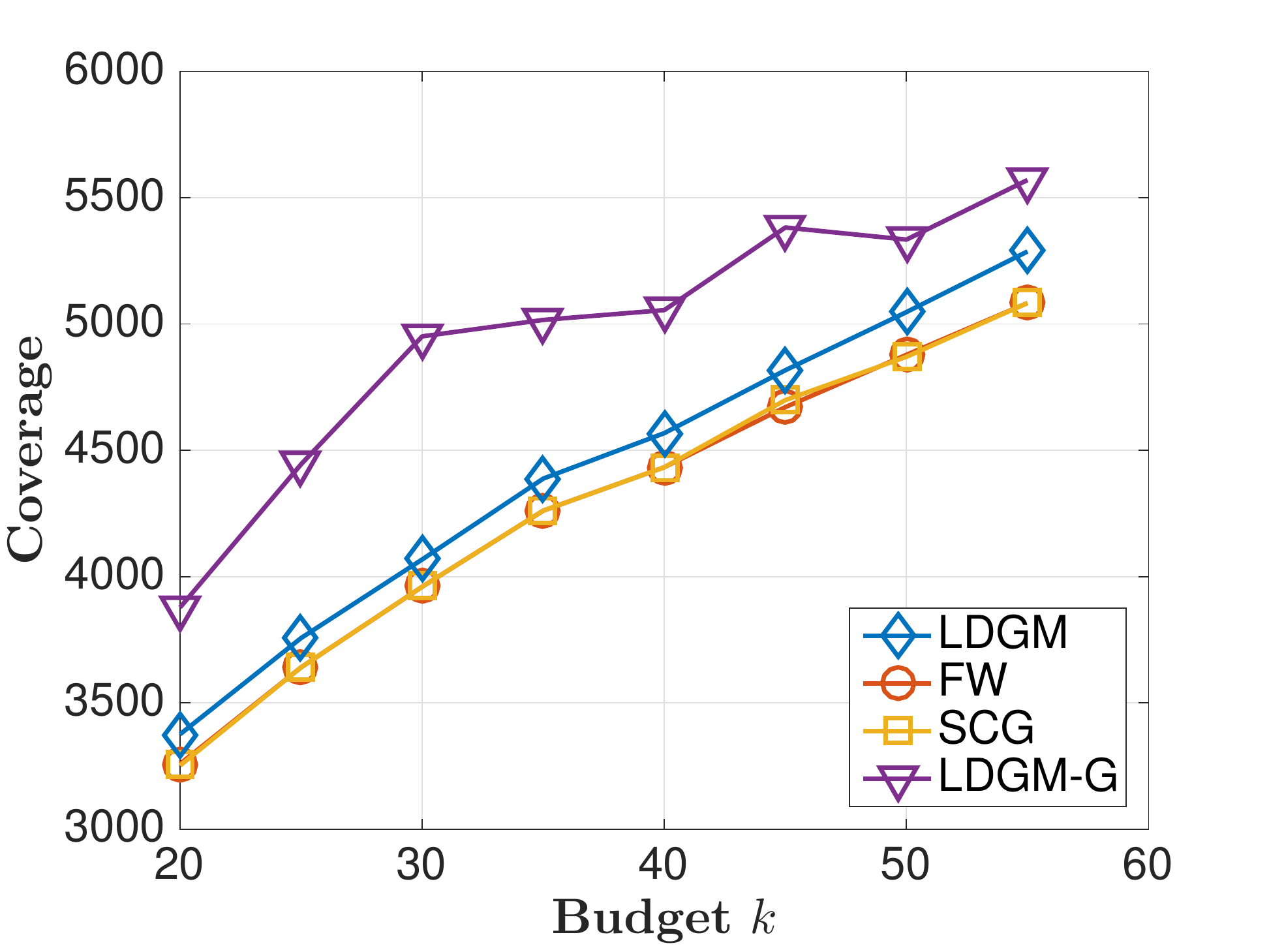}
		\includegraphics[width=0.49\linewidth]{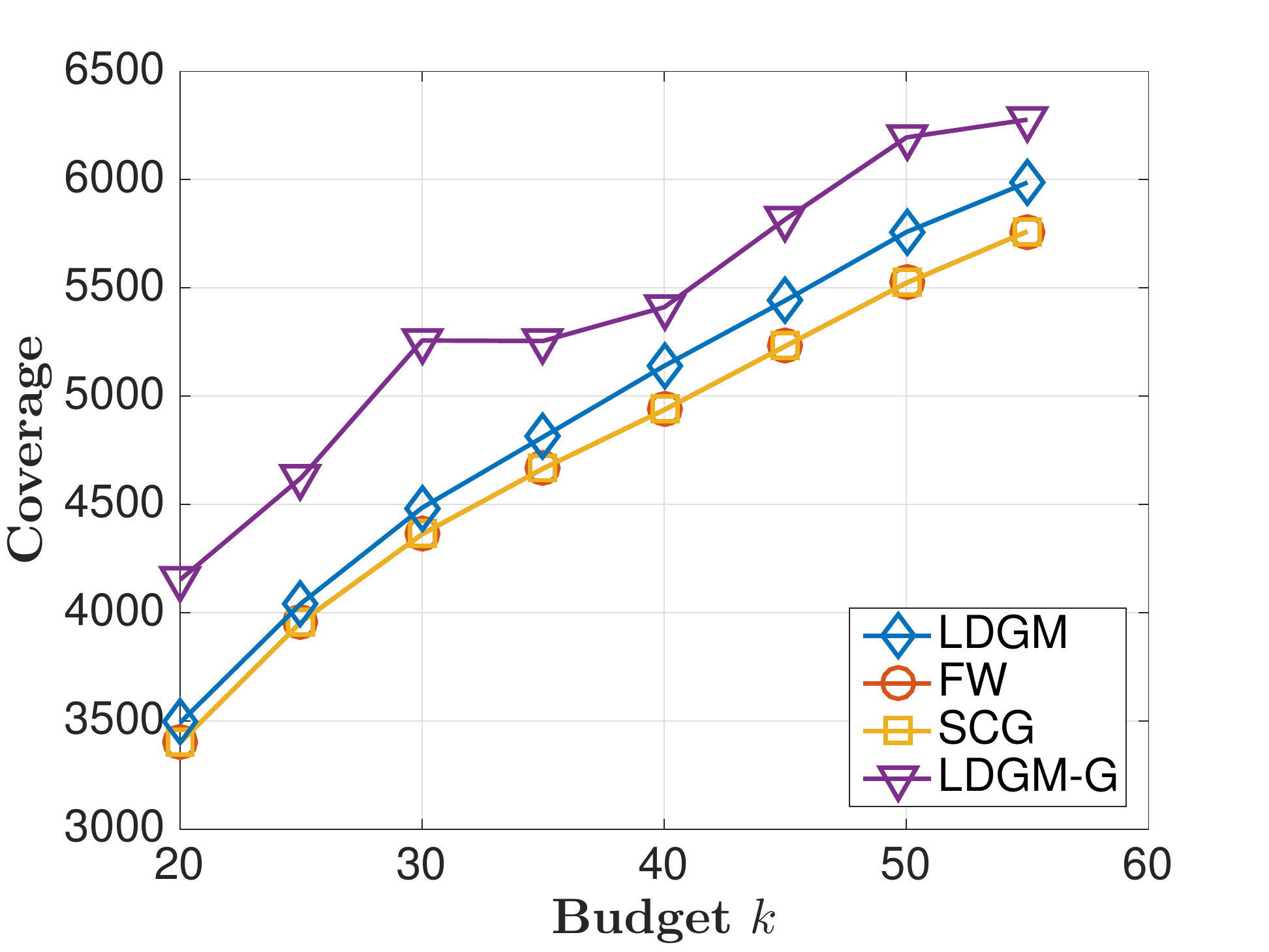}\\
		\small(b) \textit{frb45-21-1} (left) and \textit{frb50-23-1} (right)
	\end{minipage}\\ \vspace{-0.3em}
	\caption{Maximum Coverage on four \textit{BHOSLIB} data sets.}\label{fig:mc}
\end{figure}

\begin{figure}[t!]\centering
	\begin{minipage}[c]{\linewidth}\centering
		\includegraphics[width=0.49\linewidth]{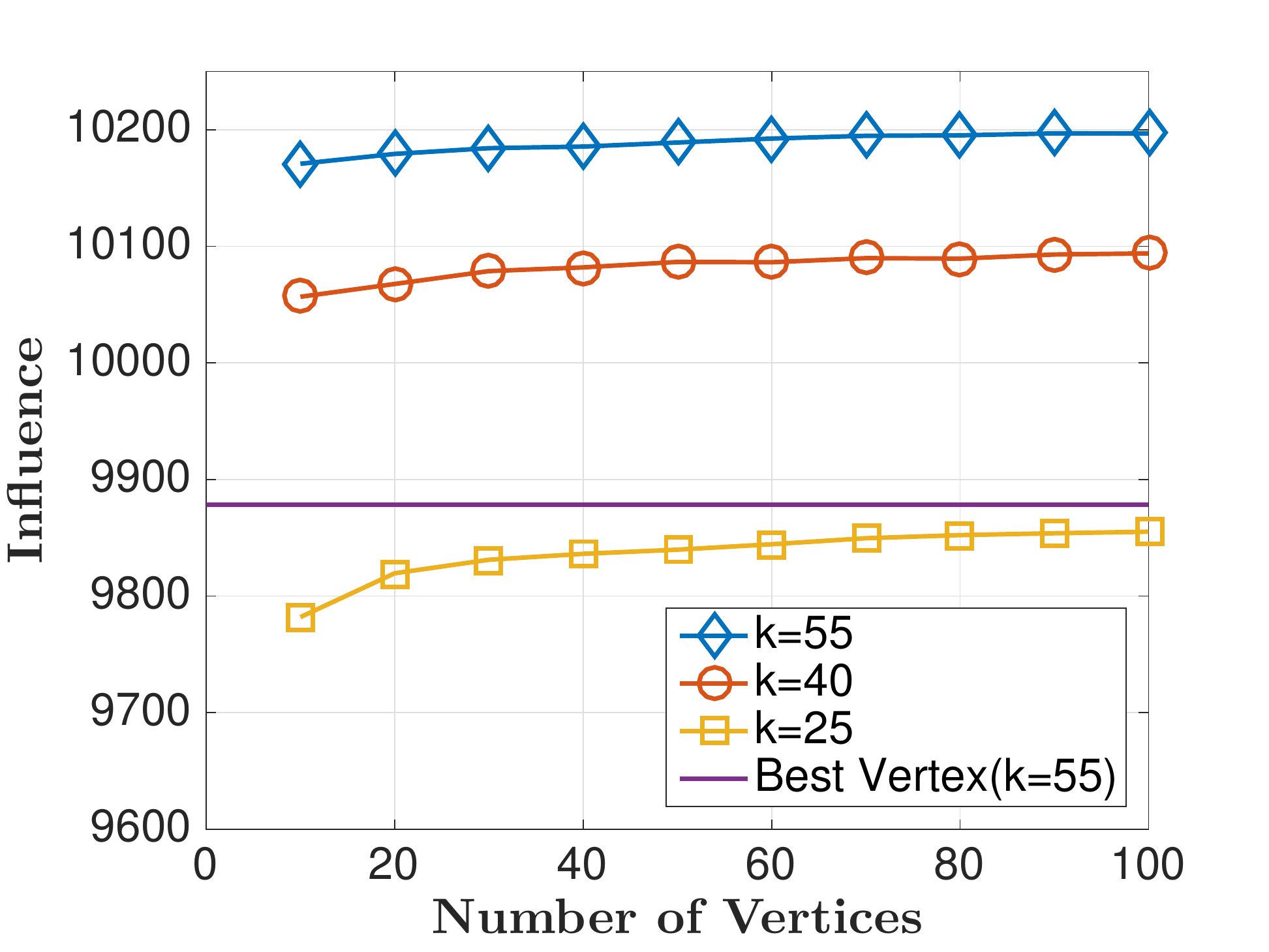}
		\includegraphics[width=0.49\linewidth]{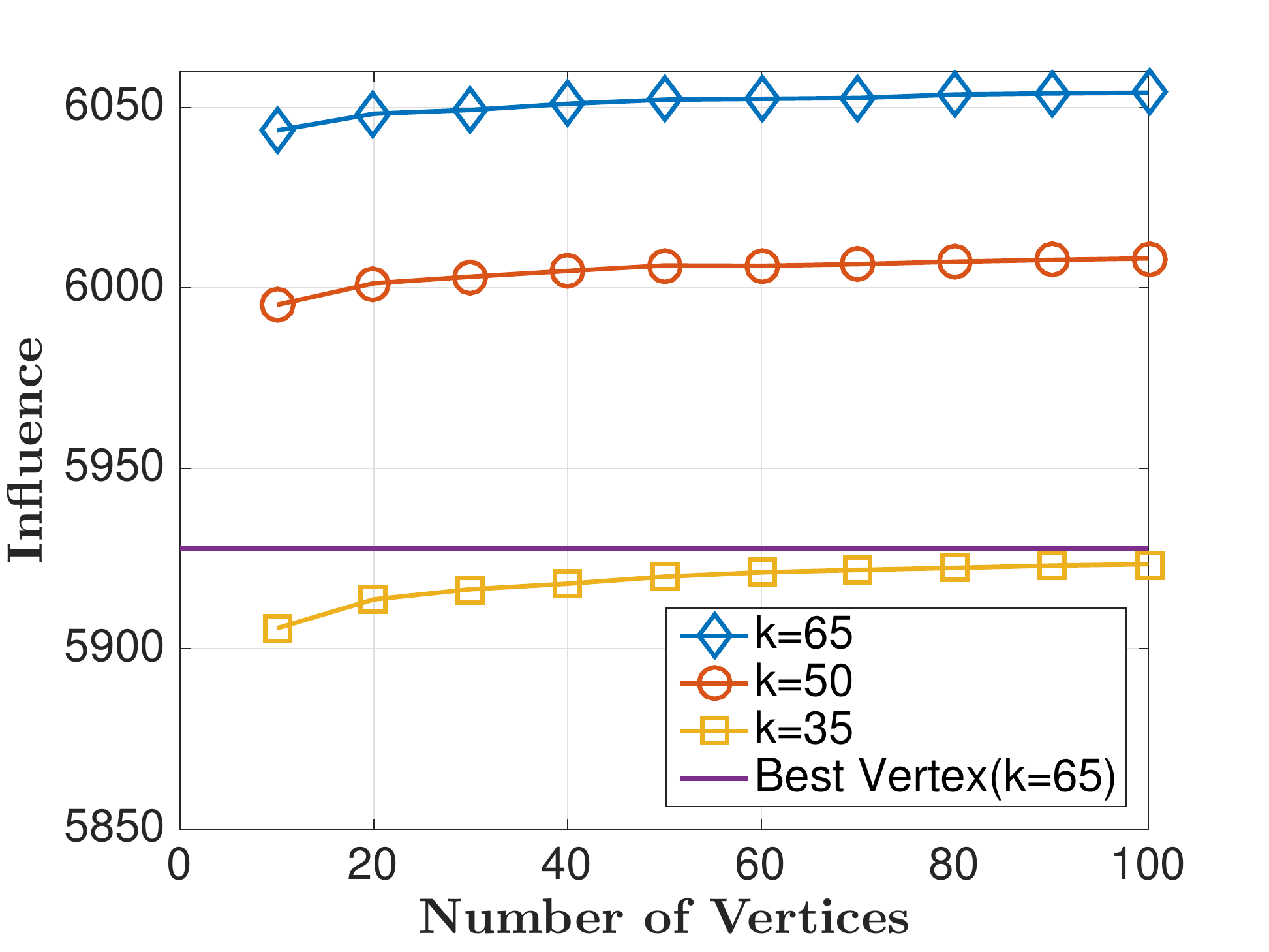}\\
	\end{minipage}\vspace{-0.3em}
	\caption{Generalized LDGM with varying number of vertices on Budget Allocation: \textit{Yahoo!} (left) and \textit{WikiElec} (right).}\label{fig:vv}
\end{figure}

\subsection{Varying Number of Vertices}

As the number of vertices of a convex polytope can be quite large, we examine the performance of generalized LDGM using a subset of vertices instead of all vertices. We use the budget allocation task without noise as in Section~\ref{sec-exp-budget}. The polytope is $\mathcal{P}=k\cdot conv(E)$, where $E$ consists of 100 randomly chosen vertices. We randomly pick a subset of $E$ and run generalized LDGM with different $k$, repeating 40 times to average the results. From Figure~\ref{fig:vv}, we can observe that even with 10\% of all vertices, generalized LDGM performs fairly well. Generalized LDGM runs on discrete lattice in the continuous space. Though few vertices implies sparse lattice, the performance may not degrade much if the objective function satisfies a good Lipschitz property.

\section{Conclusion}

In this paper, we propose a derivative-free algorithm, generalized LDGM, for maximizing monotone (weakly) DR-submodular continuous functions under a convex polytope constraint. With specific parameter settings, generalized LDGM degrades to the LDGM algorithm. In noise-free environments, generalized LDGM can achieve the same approximation guarantee as the best gradient-based algorithm. Under additive noise or in stochastic settings, it can be more robust than gradient-based algorithms. Experiments on budget allocation as well as maximum coverage show the superior performance of generalized LDGM.

\medskip

\small

\bibliographystyle{icml2019}
\bibliography{icml19-ldgm}

\twocolumn[
\icmltitle{Supplementary Material: Maximizing Monotone DR-submodular \\ Continuous Functions by Derivative-free Optimization}
\icmlsetsymbol{equal}{*}




\icmlkeywords{Submodular optimization, non-convex optimization}

\vskip 0.3in
]




\setcounter{lemma}{7}
\setcounter{section}{0}

\section{Detailed Proofs in Section 2}

\begin{lemma}
	The DR-submodularity ratio $\beta$ in Definition~5 is equivalent to $\gamma$ defined in~\cite{hassani2017gradient}, when $f$ is differentiable in $\mathcal{X}$.
\end{lemma}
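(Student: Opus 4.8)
The plan is to establish $\beta=\gamma$ by proving the two inequalities $\beta\le\gamma$ and $\gamma\le\beta$ separately. For the first inequality, I would fix any admissible triple $(\bm{x},\bm{y},i)$ with $\bm{x}\le\bm{y}$ and examine the finite-difference quotient as $k\to 0^+$. By the definition of the partial derivative, $\frac{f(\bm{x}+k\bm{\chi}_i)-f(\bm{x})}{k}\to[\nabla f(\bm{x})]_i$ and likewise $\frac{f(\bm{y}+k\bm{\chi}_i)-f(\bm{y})}{k}\to[\nabla f(\bm{y})]_i$, so whenever $[\nabla f(\bm{y})]_i\ne 0$ the ratio in Definition~5 converges to $[\nabla f(\bm{x})]_i/[\nabla f(\bm{y})]_i$. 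Since the infimum defining $\beta$ ranges over all $k\in\mathbb{R}_+$, it is bounded above by this limiting value; taking the infimum over $(\bm{x},\bm{y},i)$ then gives $\beta\le\gamma$.

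For the reverse inequality I would use the fundamental theorem of calculus to rewrite the finite differences as integrals of the $i$-th partial derivative along the coordinate direction: $f(\bm{x}+k\bm{\chi}_i)-f(\bm{x})=\int_0^k[\nabla f(\bm{x}+s\bm{\chi}_i)]_i\,ds$, and similarly for $\bm{y}$. The key observation is that $\bm{x}+s\bm{\chi}_i\le\bm{y}+s\bm{\chi}_i$ for every $s\ge 0$, so the definition of $\gamma$ supplies the pointwise bound $[\nabla f(\bm{x}+s\bm{\chi}_i)]_i\ge\gamma\,[\nabla f(\bm{y}+s\bm{\chi}_i)]_i$. Integrating this over $s\in[0,k]$ yields $f(\bm{x}+k\bm{\chi}_i)-f(\bm{x})\ge\gamma\,(f(\bm{y}+k\bm{\chi}_i)-f(\bm{y}))$, and dividing by the positive denominator shows every quotient appearing in the definition of $\beta$ is at least $\gamma$. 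Taking the infimum gives $\gamma\le\beta$, and combining the two inequalities completes the proof.

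The main obstacle is the treatment of degenerate cases in which denominators vanish. Because $f$ is monotone, each increment $f(\bm{y}+k\bm{\chi}_i)-f(\bm{y})$ and each derivative $[\nabla f(\bm{y})]_i$ is non-negative, so I would restrict both infima (as is implicitly intended in Definitions~5 and the definition of $\gamma$) to triples for which the relevant denominator is strictly positive, ensuring the quotients are well defined and the division step above is legitimate. Some care is also needed to confirm $\gamma\in[0,1]$, so that the pointwise inequality $[\nabla f(\bm{x}+s\bm{\chi}_i)]_i\ge\gamma\,[\nabla f(\bm{y}+s\bm{\chi}_i)]_i$ respects signs; this follows from the same $\bm{x}=\bm{y}$ specialization used to show $\beta,\gamma\le 1$ in Remark~\ref{remark1}. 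Finally, the integrability of $s\mapsto[\nabla f(\bm{x}+s\bm{\chi}_i)]_i$ required for the fundamental theorem of calculus is guaranteed by the differentiability of $f$ on $\mathcal{X}$.
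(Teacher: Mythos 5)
Your proof is correct and takes essentially the same route as the paper's: you establish $\beta\le\gamma$ by letting $k\to 0^+$ in the difference quotient so that it converges to $[\nabla f(\bm{x})]_i/[\nabla f(\bm{y})]_i$, and $\gamma\le\beta$ by writing each increment as $\int_0^k[\nabla f(\cdot+s\bm{\chi}_i)]_i\,ds$ and applying the pointwise bound from the definition of $\gamma$. Your handling of vanishing denominators and of the infimum possibly not being attained is somewhat more careful than the paper's (which simply picks the triple "corresponding to the value of $\gamma$" and dismisses the $\frac{0}{0}$ cases), but the underlying argument is identical.
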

\begin{proof}
	Let $\bm{x},\bm{y},i$ correspond to the value of $\gamma$. Then,
	\begin{align}
	\gamma&=\frac{[\nabla f(\bm{x})]_i}{[\nabla f(\bm{y})]_i}= \frac{\lim_{k\to 0^+} [f(\bm{x}+k\bm{\chi}_i)-f(\bm{x})]/k}{\lim_{k\to 0^+} [f(\bm{y}+k\bm{\chi}_i)-f(\bm{y})]/k}\\
	&=\lim_{k\to 0^+} \frac{f(\bm{x}+k\bm{\chi}_i)-f(\bm{x})}{f(\bm{y}+k\bm{\chi}_i)-f(\bm{y})}\geq \beta,
	\end{align}
	where the last equality holds as we do not need to care about $\frac{0}{0}$ or $\frac{\infty}{\infty}$, and the inequality holds by the definition of $\beta$.
	
	Next, we prove that $\gamma \leq \beta$. For any $\bm{x}\leq \bm{y},i\in [n], k\in \mathbb{R}_+$, we have
	\begin{align}
	\frac{f(\bm{x}+k\bm{\chi}_i)-f(\bm{x})}{f(\bm{y}+k\bm{\chi}_i)-f(\bm{y})} &=\frac{\int_{0}^{k}[\nabla f(\bm{x}+t\bm{\chi}_i)]_i\mathrm{d}t}{\int_{0}^{k}[\nabla f(\bm{y}+t\bm{\chi}_i)]_i\mathrm{d}t} \\
	&\geq \frac{\gamma\int_{0}^{k}[\nabla f(\bm{y}+t\bm{\chi}_i)]_i\mathrm{d}t}{\int_{0}^{k}[\nabla f(\bm{y}+t\bm{\chi}_i)]_i\mathrm{d}t} \\
	&=\gamma,
	\end{align}
	where the inequality holds by the definition of $\gamma$.
	
	Thus, the lemma holds, i.e., $\beta=\gamma$.
\end{proof}

\section{Detailed Proofs in Section 3}

This section provides detailed proofs for the analysis of LDGM and LDGM-G.


\subsection{Detailed Proofs in Section 3.1}

\begin{myproof}{Lemma~1}
	Let $\bm{v}(i) =  v_i\bm{\chi}_i$. Then, we have
	\begin{align}
	&\frac{f(\bm{x}+\bm{v})-f(\bm{x})}{f(\bm{y}+\bm{v})-f(\bm{y})}\\
	& = \frac{ \sum_{j=1}^{n} f(\bm{x}+\sum_{i=1}^{j}\bm{v}(i))-f(\bm{x}+\sum_{i=1}^{j-1}\bm{v}(i))}{\sum_{j=1}^{n} f(\bm{y}+\sum_{i=1}^{j}\bm{v}(i))-f(\bm{y}+\sum_{i=1}^{j-1}\bm{v}(i))} \\
	& \geq \frac{ \beta \cdot \sum_{j=1}^{n} f(\bm{y}+\sum_{i=1}^{j}\bm{v}(i))-f(\bm{y}+\sum_{i=1}^{j-1}\bm{v}(i))}{\sum_{j=1}^{n} f(\bm{y}+\sum_{i=1}^{j}\bm{v}(i))-f(\bm{y}+\sum_{i=1}^{j-1}\bm{v}(i))} \\
	&=\beta,
	\end{align}
	where the inequality is derived from the definition of DR-submodularity ratio, i.e., Definition~5.
\end{myproof}

\begin{myproof}{Lemma~2}
	Let $\bm{e}^*\in \arg\max_{\bm{e}\in\mathcal{E}}f(\bm{x}+\bm{e})$. Then, we have
	\begin{align}
	& f(\bm{v}^*)-f(\bm{x}) \\
	& \leq f(\bm{x}+\bm{v}^*)-f(\bm{x}) = f\left(\bm{x}+\sum\nolimits_{i=1}^l \bm{e}_i\right)-f(\bm{x}) \\
	& =\sum\nolimits_{k=1}^l f\left(\bm{x}+\sum\nolimits_{i=1}^k \bm{e}_i\right)-f\left(\bm{x}+\sum\nolimits_{i=1}^{k-1} \bm{e}_i\right) \\
	& \leq \frac{1}{\beta} \sum\nolimits_{k=1}^l f(\bm{x}+ \bm{e}_k)-f(\bm{x}) \\
	& \leq \frac{l}{\beta}\cdot (f(\bm{x}+\bm{e}^*)-f(\bm{x})),
	\end{align}
	where the first inequality holds by the monotonicity of $f$, and the second holds by Lemma~1.
\end{myproof}

\begin{myproof}{Lemma~3}
	We prove the lemma by induction on $m\in \mathbb{Z}_+$.
	
	When $m=1$, the lemma obviously holds. When $m=2$, assume that there exists $\bm{x}'=\theta_1\bm{x}_1+\theta_2\bm{x}_2\in Frontier(\mathcal{P})$ where $\theta_1, \theta_2 >0$ and $\theta_1+\theta_2=1$, but $\exists \bm{z}\in conv(X)$ such that $\bm{z}\notin Frontier(\mathcal{P})$; that is, the lemma does not hold. We next show this makes a contradiction. Let $\bm{z}=\eta_1\bm{x}_1+\eta_2\bm{x}_2$, where $\eta_1,\eta_2\geq 0$ and $\eta_1+\eta_2=1$. Since $\bm{z}\notin Frontier(\mathcal{P})$, there exists $\bm{z}'\in \mathcal{P}$ such that $\bm{z}'>\bm{z}$. Let $\bm{\Delta z}=\bm{z}'-\bm{z}$ and $\bm{x}''=\bm{x}'+\xi \bm{\Delta z}=(\theta_1-\xi \eta_1)\bm{x}_1+(\theta_2-\xi \eta_2)\bm{x}_2+\xi \bm{z}'$. Since $\theta_1,\theta_2>0$ and $\eta_1, \eta_2\geq 0$, there exists a sufficiently small $\xi>0$ that makes both $\theta_1-\xi \eta_1$ and $\theta_2-\xi \eta_2$ greater than $0$, and thus $\bm{x}''$ is a convex combination of $\bm{x}_1, \bm{x}_2, \bm{z}'$. As $\bm{x}_1, \bm{x}_2, \bm{z}'\in \mathcal{P}$, we have $\bm{x}''\in \mathcal{P}$. Furthermore, $\bm{x}''>\bm{x}'$. Thus, $\bm{x}'\notin Frontier(\mathcal{P})$, making a contradiction.
	
	Next we consider $m \geq 3$. Let $X=\{ \bm{x}_1,\ldots,\bm{x}_m \}$ and $\bm{x}'=\sum_{i=1}^{m} \theta_i \bm{x}_i \in Frontier(\mathcal{P})$. We want to prove that for any $\bm{x}=\sum_{i=1}^{m} \eta_i \bm{x}_i\in conv(X)$, where $\sum_{i=1}^{m} \eta_i=1$ and $\eta_i\geq 0$, it holds that $\bm{x}\in Frontier(\mathcal{P})$. Let $\bm{y}=\bm{x}'-\xi (\bm{x}-\bm{x}')$, where $\xi>0$. Then, we have $\bm{y}=(\sum_{i=1}^{m} \theta_i \bm{x}_i) - \xi (\sum_{i=1}^{m} (\eta_i-\theta_i) \bm{x}_i)=\sum_{i=1}^{m} ((1+\xi)\theta_i-\xi \eta_i)\bm{x}_i$. As $\theta_i>0$ and $\eta_i\geq 0$, we can find a sufficiently small $\xi>0$, such that $\forall i\in [m]: (1+\xi)\theta_i-\xi \eta_i>0$, i.e., $\bm{y}\in conv(X)$. From the definition of $\bm{y}$, we have $\bm{x}'=\frac{1}{1+\xi}\bm{y}+\frac{\xi}{1+\xi}\bm{x}$. Note that $\bm{x},\bm{y}\in conv(X)\subseteq \mathcal{P}$, $\bm{x}'\in Frontier(\mathcal{P})$ and $\xi>0$. Since the lemma with $m=2$ has been proved, we have $conv(\{\bm{x, \bm{y}}\})\subseteq Frontier(\mathcal{P})$, implying that $\bm{x}\in Frontier(\mathcal{P})$ and thus $conv(X)\subseteq Frontier(\mathcal{P})$.
\end{myproof}

\begin{myproof}{Lemma~4}
	Let $l\cdot\mathcal{E}=\{\bm{r}_i\mid \bm{r}_i=l\cdot \bm{e}_i,\bm{e}_i\in \mathcal{E}\}$. Suppose $\bm{x}^*\notin conv(l\cdot \mathcal{E})$. Since $\bm{x}^* \in \mathcal{P}=conv(E)$, there exist $\bm{y}_1,\ldots,\bm{y}_t \in E$ such that $\bm{x}^*=\sum_{i=1}^t \theta_i \bm{y}_i$, where $\theta_1,\ldots,\theta_t>0$ and $\sum_{i=1}^t \theta_i =1$. As $\bm{x}^*\notin conv(l\cdot \mathcal{E})$, there exists $ i\in [t]: \bm{y}_i\notin l\cdot \mathcal{E}$. By the construction of $\mathcal{E}$ in line~1 of Algorithm~1, we have $\bm{y}_i\notin Frontier(\mathcal{P})$. Thus, we can find $\bm{y}'_i > \bm{y}_i$ in $\mathcal{P}$. Let $\bm{x}'=\sum_{j=1}^{i-1} \theta_j \bm{y}_j+ \theta_i \bm{y}'_i + \sum_{j=i+1}^{t} \theta_j \bm{y}_j>\bm{x}^*$. As $\bm{x}'$ is a convex combination of points in $\mathcal{P}$, $\bm{x}'$ is still in $\mathcal{P}$. In other words, $\bm{x}' \in \mathcal{P}$ and $\bm{x}'>\bm{x}^*$, making a contradiction with $\bm{x}^*\in Frontier(\mathcal{P})$. Thus, $\bm{x}^*\in conv(l\cdot \mathcal{E})$.
	
	As $\bm{x}^*\in conv(l\cdot \mathcal{E})$, there exist $\theta_1,\ldots,\theta_{m'}>0$ with $\sum_{i=1}^{m'} \theta_i =1$ and $m'\leq m$ such that $\bm{x}^*= \sum_{i=1}^{m'} \theta_i \bm{r}_i$. Note that we have made an assumption that those $\theta_i=0$ correspond to the last $m-m'$ terms. We next show that there exists $\bm{x}'=\sum_{i=1}^{m'} \theta_i' \bm{r}_i$ with $\sum_{i=1}^{m'} \theta'_i =1$ and $\theta'_i=k_i\cdot \frac{1}{l}\, (k_i\in \mathbb{Z}_+)$ such that $\forall i \in [m']: |\theta_i-\theta_i'|\leq 1/l$. We first construct an initial $\bm{x}'$ by randomly assigning $k_i\in \mathbb{Z}_+$ subject to $\sum_{i=1}^{m'} \theta'_i =1$. If $\bm{x}'$ does not satisfy $\forall i \in [m']: |\theta_i-\theta_i'|\leq 1/l$, suppose there are in total $T$ positions violating the constraint. Let $j$ denote one position violating the constraint, implying $\theta_j-\theta_j'>1/l$ or $\theta_j'-\theta_j>1/l$. We consider $\theta_j-\theta_j'>1/l$, and the other case can be analyzed similarly. There must exist another position $k$ with $\theta_k-\theta_k'<0$; otherwise, $\sum_{i=1}^{m'} \theta_i>1/l+\sum_{i=1}^{m'} \theta_i'=1+1/l$, making a contradiction. Then, we make such a change: $\theta_j':=\theta_j'+1/l$ and $\theta_k':=\theta_k'-1/l$. Thus, $\theta_j'$ becomes closer to $\theta_j$. For $\theta_k'$, we have $\theta_k-\theta_k'\leq 1/l$, implying that the change at position $k$ will not increase the number $T$ of violations. By repeating this procedure, we can decrease $T$ to 0, i.e., $\bm{x}'=\sum_{i=1}^{m'} \theta_i' \bm{r}_i$ satisfies that $\sum_{i=1}^{m'} \theta'_i =1$ and $\forall i \in [m']: |\theta_i-\theta_i'|\leq 1/l$. Because $\forall i\in [m']: \theta_i>0$ and $\theta'_i$ must be $1/l$ times an integer, we have $\forall i\in [m']: \theta'_i\geq 0$. Let $\bm{v}'=\bm{x}'$. We have
	\begin{align}\label{eq-3}
	&\norm{\bm{x}^*-\bm{v}'} \leq \sum\nolimits_{i=1}^{m'} |\theta_i-\theta_i'|\cdot \norm{\bm{r}_i} \\
	&\leq \frac{1}{l}\cdot \sum\nolimits_{i=1}^{m'} \norm{\bm{r}_i} \leq \frac{m'D}{l} \leq \frac{mD}{l},
	\end{align}
	where the third inequality holds by Assumption~2.

	Finally, we only need to show that $\bm{x}' \in Frontier(\mathcal{P})$. Note that $\forall i\in [m']: \bm{r}_i\in \mathcal{P}$, and $\bm{x}^*=\sum_{i=1}^{m'} \theta_i \bm{r}_i\in Frontier(\mathcal{P})$ is a convex combination of them with all $\theta_i>0$. According to Lemma~3, $conv(\{\bm{r}_1,\ldots,\bm{r}_{m'}\}) \subseteq Frontier(\mathcal{P})$. As $\bm{x}'\in conv(\{\bm{r}_1,\ldots,\bm{r}_{m'}\}) $, we have $\bm{x}'\in Frontier(\mathcal{P})$. Thus, the lemma holds.
\end{myproof}


\begin{myproof}{Theorem~1}
	Let $\bm{x}^*\in Frontier(\mathcal{P})$ denote a global optimal solution, i.e., $f(\bm{x}^*)=\mathrm{OPT}$. Let $\bm{v}'\in Frontier(\mathcal{P})$ be the solution suggested by Lemma~4. Let $\bm{v}^*$ be the best solution one can achieve using the sum of $l$ vectors in $\mathcal{E}$. By Lemma~4 and Assumption~1, we have
	$$ f(\bm{x}^*)-f(\bm{v}^*)\leq f(\bm{x}^*)-f(\bm{v}') \leq L\cdot \norm{\bm{x}^*-\bm{v}'} \leq \frac{mDL}{l}, $$
	implying $f(\bm{v}^*) \geq f(\bm{x}^*)- mDL/l = \mathrm{OPT}- mDL/l$.
	
	According to the algorithm procedure and Lemma~2, we have, for any $t\in \{0,1,\ldots,l-1\}$,
	\begin{align}
	f(\bm{x}_{t+1})- f(\bm{x}_{t}) &\geq \frac{\beta}{l}(f(\bm{v}^*)-f(\bm{x}_t))\\
	&\geq  \frac{\beta}{l}\left( \mathrm{OPT}- \frac{mDL}{l}-f(\bm{x}_t)\right).
	\end{align}
	By a simple transformation, we can equivalently get
	\begin{align}
	&\mathrm{OPT} - \frac{mDL}{l} - f(\bm{x}_{t+1}) \\
	&\leq \left(1-\frac{\beta}{l}\right)\cdot \left(\mathrm{OPT} - \frac{mDL}{l} - f(\bm{x}_t)\right).
	\end{align}
	Thus, $\mathrm{OPT} - mDL/l - f(\bm{x}_{l}) \leq (1-\beta/l)^l \cdot (\mathrm{OPT} - mDL/l - f(\bm{x}_0)) \leq e^{-\beta} (\mathrm{OPT}- mDL/l)$, leading to $f(\bm{x}_{l}) \geq (1-e^{-\beta})\cdot \mathrm{OPT} - (1-e^{-\beta})mDL/l.$
\end{myproof}

To prove Theorem~2, we need the following lemma, which is about the average gain by adding multiple copies of one vector from $\mathcal{E}$ in each iteration of LDGM-G.

\begin{lemma}\label{lemma:genl}
	Let $\bm{v}^*$ be the best solution one can achieve using $l$ vectors from $\mathcal{E}$ which is an orthogonal set. Let $\bm{x}$ be the solution obtained in one iteration of Algorithm~2, and let $(\bm{e}^*,k)$ denote the combination obtained at line~4 in the next iteration. It holds that
	\begin{align}
	\frac{f(\bm{x}+k\bm{e}^*)-f(\bm{x})}{k}\geq \frac{\alpha}{l}(f(\bm{v}^*)-f(\bm{x})).
	\end{align}
\end{lemma}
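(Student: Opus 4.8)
The plan is to mirror the proof of Lemma~2, but to replace its pointwise DR-argument by a \emph{coordinate-block} argument that exploits the orthogonality of $\mathcal{E}$ and invokes the submodularity ratio $\alpha$ in place of $\beta$. First I would record two structural facts. Since every $\bm{e}\in\mathcal{E}$ has non-negative entries and $\mathcal{E}$ is orthogonal, any two distinct vectors of $\mathcal{E}$ have disjoint supports. Moreover, the running point $\bm{x}$ produced by Algorithm~2 is a non-negative integer combination $\bm{x}=\sum_s c_s\bm{e}_s$ with $c_s\in\mathbb{Z}_+$, and by orthogonality $\langle\bm{x},\bm{e}_s\rangle/\langle\bm{e}_s,\bm{e}_s\rangle=c_s$, so the feasibility bound in line~4 reads $j\leq l-c_s$. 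Writing the lattice optimum as $\bm{v}^*=\sum_s n_s\bm{e}_s$ with $n_s\in\mathbb{Z}_+$ and $\sum_s n_s=l$, monotonicity gives $f(\bm{v}^*)-f(\bm{x})\leq f(\bm{x}\vee\bm{v}^*)-f(\bm{x})$, and the disjoint supports yield $\bm{x}\vee\bm{v}^*=\bm{x}+\sum_s q_s\bm{e}_s$ with $q_s:=\max\{0,\,n_s-c_s\}$.

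The key step is an $\alpha$-version of Lemma~1: if $\bm{x}\leq\bm{y}$ and $x_i=y_i$ for every $i$ in the support of $\bm{v}\in\mathbb{R}^n_+$, then $(f(\bm{x}+\bm{v})-f(\bm{x}))/(f(\bm{y}+\bm{v})-f(\bm{y}))\geq\alpha$. I would prove it exactly as Lemma~1, decomposing $\bm{v}=\sum_i v_i\bm{\chi}_i$ and telescoping coordinate by coordinate; the agreement $x_i=y_i$ on $\mathrm{supp}(\bm{v})$ guarantees that at each telescoped step the two anchor points share the coordinate being incremented, so Definition~4 (the submodularity ratio, equivalently weak DR-submodularity) applies. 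To use this, I telescope $f(\bm{x}\vee\bm{v}^*)-f(\bm{x})$ by adding the blocks $q_s\bm{e}_s$ one $\bm{e}_s$ at a time: set $\bm{x}^{(0)}=\bm{x}$ and $\bm{x}^{(s)}=\bm{x}^{(s-1)}+q_s\bm{e}_s$. Because the supports are disjoint, $\bm{x}^{(s-1)}$ agrees with $\bm{x}$ on $\mathrm{supp}(\bm{e}_s)$ while $\bm{x}^{(s-1)}\geq\bm{x}$, so the $\alpha$-version gives $f(\bm{x}^{(s-1)}+q_s\bm{e}_s)-f(\bm{x}^{(s-1)})\leq\alpha^{-1}(f(\bm{x}+q_s\bm{e}_s)-f(\bm{x}))$, whence $f(\bm{x}\vee\bm{v}^*)-f(\bm{x})\leq\alpha^{-1}\sum_s(f(\bm{x}+q_s\bm{e}_s)-f(\bm{x}))$.

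It then remains to feed in the greedy choice. For each $s$ with $q_s\geq1$ we have $q_s\leq n_s\leq l-c_s$ (using $c_s\geq0$ and $n_s\leq l$), so $(\bm{e}_s,q_s)$ is a feasible pair in line~4 and hence $(f(\bm{x}+q_s\bm{e}_s)-f(\bm{x}))/q_s\leq(f(\bm{x}+k\bm{e}^*)-f(\bm{x}))/k$; the terms with $q_s=0$ vanish. Summing over $s$ and using $\sum_s q_s\leq\sum_s n_s=l$ gives $\sum_s(f(\bm{x}+q_s\bm{e}_s)-f(\bm{x}))\leq l\cdot(f(\bm{x}+k\bm{e}^*)-f(\bm{x}))/k$. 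Chaining the three inequalities yields $f(\bm{v}^*)-f(\bm{x})\leq(l/\alpha)\cdot(f(\bm{x}+k\bm{e}^*)-f(\bm{x}))/k$, which is the claim after rearrangement.

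I expect the main obstacle to be the $\alpha$-version of Lemma~1, and in particular arranging the telescoping around $\bm{x}\vee\bm{v}^*$ rather than $\bm{x}+\bm{v}^*$ (which one might naively copy from Lemma~2) so that every increment is confined to coordinates on which the larger and smaller anchors coincide. This confinement is exactly what the orthogonality hypothesis buys, and it is also what forces the choice $q_s=\max\{0,\,n_s-c_s\}$, which in turn makes the feasibility constraint $q_s\leq l-c_s$ hold automatically; the naive choice would not respect line~4's bound.
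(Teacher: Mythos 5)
Your proof is correct and follows essentially the same route as the paper's: both reduce $f(\bm{v}^*)-f(\bm{x})$ to $f(\bm{x}\vee\bm{v}^*)-f(\bm{x})$, decompose $\bm{x}\vee\bm{v}^*-\bm{x}$ into disjoint-support blocks $q_s\bm{e}_s$ (the paper's $l_i\bm{e}_i$ with $I=\{i\mid l_i>0\}$), apply the submodularity ratio blockwise using orthogonality to guarantee coordinate agreement, and finish with the greedy choice plus $\sum_s q_s\le l$; your explicit $\alpha$-version of Lemma~1 is precisely the step the paper leaves implicit in the phrase ``by the definition of submodularity ratio and the orthogonality of $\mathcal{E}$.'' (One cosmetic nit: the intermediate inequality $n_s\le l-c_s$ in your feasibility chain is not true in general, but the needed endpoint $q_s=n_s-c_s\le l-c_s$ follows directly from $n_s\le l$, which is what you actually cite.)
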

\begin{proof}
	Suppose $\mathcal{E}=\{ \bm{e}_1,..., \bm{e}_m \}$. Let $e_{it}$ denote the $t$-th entry of $\bm{e}_i$. For any $\bm{e}\in \mathcal{E} \subseteq \mathbb{R}^n_+$, we have $\bm{e}\geq \bm{0}$, i.e., any entry of $\bm{e}$ is no less than $0$. Since $\mathcal{E}$ is orthogonal, i.e., $\forall i, j: \langle \bm{e}_i,\bm{e}_j \rangle =0$, we have $\forall t\in [n]$, $e_{it}$ and $e_{jt}$ cannot both be greater than 0; otherwise $\langle \bm{e}_i,\bm{e}_j \rangle = \sum_{t=1}^{n} e_{it}\cdot e_{jt}>0$, making a contradiction. Thus, the coordinate-wise maximum `$\vee$' is equivalent to `$+$' for $\bm{e}_i$ and $\bm{e}_j$, i.e., $\bm{e}_i\vee \bm{e}_j=\bm{e}_i+\bm{e}_j$. Let $\bm{v}=\bm{x}\vee \bm{v}^*-\bm{x}$. As $\bm{x}$ is generated in line~9 of Algorithm~2, we can represent $\bm{v}$ by $\sum_{i =1}^m l_i\bm{e}_i$, where $l_i \in \mathbb{Z}_+$. Let $I=\{ i\mid l_i>0 \}$, then we have $\bm{v}=\sum_{i\in I} l_i\bm{e}_i$. We use $I(1:j)$ to denote the set of the first $j$ elements in $I$, and $I_j$ to denote the $j$-th element in $I$. Thus, we have
	\begin{align}
	&f(\bm{v}^*)-f(\bm{x})\\
	& \leq f(\bm{x}\vee \bm{v}^*)-f(\bm{x})\\
	& =f(\bm{x}+\bm{ v})-f(\bm{x})\\
	& = f\left(\bm{x}+\sum\nolimits_{i\in I} l_i \bm{e}_i\right)-f(\bm{x}) \\
	& =\sum_{j=1}^{|I|} f\left(\bm{x}+\!\sum_{i\in I(1:j)} \!l_i\bm{e}_i\right)-f\left(\bm{x}+\!\sum_{i\in I(1:j-1)} \!l_i\bm{e}_i\right) \\
	& \leq \frac{1}{\alpha} \sum_{j=1}^{|I|} \frac{f(\bm{x}+ l_{I_j}\bm{e}_{I_j})-f(\bm{x})}{l_{I_j}}\cdot l_{I_j}\\
	& \leq \frac{1}{\alpha} \displaystyle\sum_{j=1}^{|I|} \frac{f(\bm{x}+ k\bm{e}^*)-f(\bm{x})}{k}\cdot l_{I_j} \\
	&\leq \frac{l}{\alpha} \cdot \frac{f(\bm{x}+ k\bm{e}^*)-f(\bm{x})}{k},
	\end{align}
	where the first inequality holds by the monotonicity of $f$, the second inequality holds by the definition of submodularity ratio in Definition~4 and the orthogonality of $\mathcal{E}$, the third inequality is derived by $(\bm{e}^*,k):=\arg\max_{(\bm{e},j)} \{\frac{ f(\bm{x}+j\bm{e})-f(\bm{x})}{j} \mid 	\bm{e}\in\mathcal{E}, j\in \mathbb{Z}_+: j\leq l -\frac{\langle \bm{x}, \bm{e} \rangle}{\langle \bm{e}, \bm{e} \rangle} \}$ and $l_{I_j} \leq l -\frac{\langle \bm{x}, \bm{e}_{I_j} \rangle}{\langle \bm{e}_{I_j}, \bm{e}_{I_j} \rangle} $, and the last inequality holds by $\sum_{j=1}^{|I|} l_{I_j}\leq l$ due to $\bm{v} \leq \bm{v}^*$.
\end{proof}

\begin{myproof}{Theorem~2}
	Let $\bm{x}^*\in Frontier(\mathcal{P})$ denote a global optimal solution. Let $\bm{v}'\in Frontier(\mathcal{P})$ be the solution suggested by Lemma~4. Let $\bm{v}^*$ be the best solution one can achieve using the sum of $l$ vectors in $\mathcal{E}$. As the analysis in the proof of Theorem~1, we have $f(\bm{v}^*) \geq \mathrm{OPT}- mDL/l$.
	
	Let $\bm{x}^{(s)}$ denote the solution obtained by Algorithm~2 after $s$ iterations. Thus, $\bm{x}^{(0)}=\bm{0}$. Let $(\bm{e}^{(s)},k^{(s)})$ be the combination returned by line~4 in the $s$-th iteration of Algorithm~2. Assume that the algorithm terminates after $t+1$ iterations. According to Lemma~\ref{lemma:genl}, we have, for any $s \leq t$, $$\frac{f(\bm{x}^{(s+1)})-f(\bm{x}^{(s)})}{k^{(s+1)}}\geq \frac{\alpha}{l}(f(\bm{v}^*)-f(\bm{x}^{(s)})).$$
	By rearranging the above inequality, we get $f(\bm{x}^{(s+1)}) \geq \frac{\alpha k^{(s+1)}}{l}f(\bm{v}^*)+(1-\frac{\alpha k^{(s+1)}}{l})f(\bm{x}^{(s)})$. Next we prove by induction that for all $ 0\leq s \leq t+1: f(\bm{x}^{(s)})\geq (1-(1-\alpha \frac{\sum_{i=1}^{s}k^{(i)}}{sl})^s)f(\bm{v}^*)$. It trivially holds for $s=0$. Assume the inequality holds for $s$. Then, we have
	\begin{equation}
	\begin{aligned}
	& f(\bm{x}^{(s+1)}) \geq \frac{\alpha k^{(s+1)}}{l} f(\bm{v}^*)+\left(1-\frac{\alpha k^{(s+1)}}{l}\right)f(\bm{x}^{(s)}) \\
	&\geq \left(1-\left(1-\alpha \frac{\sum_{i=1}^{s}k^{(i)}}{sl}\right)^s \left(1-\frac{\alpha k^{(s+1)}}{l}\right)\right)f(\bm{v}^*)\\
	&\geq \left(1-\left(1-\alpha \frac{\sum_{i=1}^{s+1}k^{(i)}}{(s+1)l}\right)^{s+1}\right)f(\bm{v}^*),
	\end{aligned}
	\end{equation}
	where the second inequality holds by induction, and the last is derived by applying the AM-GM inequality.
	
	Thus, for $\bm{x}^{(t+1)}$, we have
	$$f(\bm{x}^{(t+1)}) \geq \left(1-\left(1-\alpha \frac{\sum_{i=1}^{t+1}k^{(i)}}{(t+1)l}\right)^{t+1}\right)f(\bm{v}^*).$$
	As Algorithm~2 terminates when line~5 is satisfied, we have $\sum_{i=1}^{t}k^{(i)} \leq l$ and $\sum_{i=1}^{t+1}k^{(i)}>l$. By applying the latter one, we have
	\begin{align}
	f(\bm{x}^{(t+1)}) &> \left(1-\left(1-\alpha \frac{l}{l(t+1)}\right)^{t+1}\right)f(\bm{v}^*) \\
	&\geq (1-e^{-\alpha})f(\bm{v}^*).
	\end{align}
	
	However, $\bm{x}^{(t+1)}$ is not a feasible solution. We can represent $\bm{x}^{(t+1)}$ as
	$$
	f(\bm{x}^{(t+1)})=f(\bm{x}^{(t)})+\left( f(\bm{x}^{(t+1)})-f(\bm{x}^{(t)}) \right).
	$$
	Next we analyze the marginal gain $f(\bm{x}^{(t+1)})-f(\bm{x}^{(t)})=f(\bm{x}^{(t)}+k^{(t+1)}\bm{e}^{(t+1)})-f(\bm{x}^{(t)})$. As $\mathcal{E}$ is orthogonal here, $\bm{x}^{(t)}$ can be represented by one unique combination of $\bm{e}_i\in \mathcal{E}$. Suppose $\bm{x}^{(t)}$ has $k'$ copies of $\bm{e}^{(t+1)}$, i.e., $\langle \bm{x}^{(t)}-k'\bm{e}^{(t+1)}, \bm{e}^{(t+1)}\rangle =0$. By applying the definition of $\alpha$, we have
	\begin{align}
	&f(\bm{x}^{(t)}+k^{(t+1)}\bm{e}^{(t+1)})-f(\bm{x}^{(t)})\\
	&\leq \frac{1}{\alpha}\left( f((k^{(t+1)}+k')\bm{e}^{(t+1)})-f(k'\bm{e}^{(t+1)}) \right) \\
	&\leq \frac{1}{\alpha}\left( f(l\bm{e}^{(t+1)})-f(k'\bm{e}^{(t+1)}) \right) \\
	&\leq \frac{1}{\alpha}\left( f(\hat{\bm{x}})-f(k'\bm{e}^{(t+1)}) \right) \leq \frac{1}{\alpha}f(\hat{\bm{x}}) ,
	\end{align}
	where the second inequality holds by $k^{(t+1)} \leq l-\frac{\langle \bm{x}_t, \bm{e}^{(t+1)} \rangle}{\langle \bm{e}^{(t+1)}, \bm{e}^{(t+1)} \rangle}=l-k'$, and the third inequality holds by $\hat{\bm{x}}=\arg\max_{\bm{e}\in \mathcal{E}}  f(l\bm{e})$. Thus, we have
	$$f(\bm{x}^{(t+1)})\leq f(\bm{x}^{(t)})+ \frac{1}{\alpha}f(\hat{\bm{x}}) \leq \frac{1}{\alpha}(f(\bm{x}^{(t)})+ f(\hat{\bm{x}})),$$
	where the last inequality holds by $\alpha \in [0,1]$. Because $f(\bm{x}^{(t+1)}) \geq (1-e^{-\alpha})f(\bm{v}^*)$, we have $$f(\bm{x}^{(t)})+ f(\hat{\bm{x}}) \geq \alpha (1-e^{-\alpha})f(\bm{v}^*),$$ implying $\max\{ f(\bm{x}^{(t)}), f(\hat{\bm{x}})\} \geq (\alpha/2) (1-e^{-\alpha})f(\bm{v}^*)$. Note that the final output solution of Algorithm~2 is at least as good as $\arg\max_{\bm{x} \in \{\bm{x}^{(t)},\hat{\bm{x}}\}} f(\bm{x})$. By applying $f(\bm{v}^*)\geq \mathrm{OPT}-\frac{mDL}{l}$, the theorem holds.
\end{myproof}

\subsection{Detailed Proofs in Section 3.2}

\textbf{Modification}. Let $\mathcal{P}=\mathcal{Q} \cap \mathcal{C}$, where $\mathcal{Q}=\{\bm{x} \mid \bm{a}^T\bm{x} \leq b, \bm{x} \geq \bm{0}\}$ and $\mathcal{C}=\{\bm{x} \mid \bm{0} \leq \bm{x} \leq \bm{c}\}$. Note that if $\bm{c}\in \mathcal{Q}$, the case is trivial as $\mathcal{P}$ degrades to $\mathcal{C}$. We modify LDGM to run on $\mathcal{Q}$ instead of $\mathcal{P}$ (so the number of vertices is $|E|=n+1$ and $|\mathcal{E}|=|Frontier(E)|=n$), and change line~4 of LDGM from ``$\bm{e}^*:=\arg\max_{\bm{e}\in\mathcal{E}}f(\bm{x}_t+\bm{e})-f(\bm{x}_t)$" to ``$\bm{e}^*:=\arg\max_{\bm{e}\in\mathcal{E}, \bm{x}_t+\bm{e} \in \mathcal{C}}f(\bm{x}_t+\bm{e})-f(\bm{x}_t)$" (so the output solution must belong to $\mathcal{Q} \cap \mathcal{C} =\mathcal{P}$). Note that in this case, each $\bm{e} \in \mathcal{E}$ has only one entry that is not zero but a positive value.

With a stronger Assumption~1, i.e., $f(\cdot)$ is Lipschitz continuous with constant $L$ in $\mathcal{P}$ instead of only in $Frontier(\mathcal{P})$, we can similarly prove the same approximation guarantee as Theorem~1. Because Lemmas~1 and~3 have nothing to do with this modification, we only need to concern Lemmas~2 and~4. We first derive Lemma~\ref{lemma:increase_adapted} adapted from Lemma~2.

\begin{lemma}\label{lemma:increase_adapted}
	Given the above modification, let $\bm{v}^*\in \mathcal{P}=\mathcal{Q}\cap \mathcal{C}$ be the best solution one can achieve using $l$ vectors in $\mathcal{E}$, denoted as $\bm{v}^*=\sum\nolimits_{i=1}^l \bm{e}_i$, where $\bm{e}_i \in \mathcal{E}$. For $t<l$, let $\bm{e}^*= \arg\max_{\bm{e}\in\mathcal{E}, \bm{x}_t+\bm{e}\in \mathcal{C}}f(\bm{x}_t+\bm{e})$, then we have
	\begin{align}
	f(\bm{x}_t+\bm{e}^*)-f(\bm{x}_t)\geq \frac{\beta}{l}\left(f(\bm{v}^*)-f(\bm{x}_t)\right).
	\end{align}
\end{lemma}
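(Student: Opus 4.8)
The plan is to follow the proof of Lemma~2 almost verbatim, the only genuinely new ingredient being a feasibility check: I must certify that every single-coordinate step used in the telescoping decomposition keeps the iterate inside $\mathcal{C}$, so that the $\mathcal{C}$-restricted $\arg\max$ defining $\bm{e}^*$ still dominates each such step. First I would record two structural facts about the modified setting. (i) By the construction of $\mathcal{E}$ from the vertices of $\mathcal{Q}=\{\bm{x}\mid \bm{a}^T\bm{x}\leq b,\bm{x}\geq\bm{0}\}$, each $\bm{e}\in\mathcal{E}$ has exactly one positive entry (as noted in the modification), so $\mathcal{E}$ is indexed by coordinate directions and every step in a fixed direction $j$ adds the same fixed length. (ii) Every iterate satisfies $\bm{x}_t\in\mathcal{C}$, which follows by induction from the modified line~4, since it only accepts steps that keep the iterate in $\mathcal{C}$.

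Instead of decomposing $\bm{v}^*$ directly, I would work with $\bm{w}:=\bm{v}^*\vee\bm{x}_t-\bm{x}_t\geq\bm{0}$. Because $\bm{v}^*,\bm{x}_t\in\mathcal{C}$ and $\mathcal{C}$ is a box, $\bm{v}^*\vee\bm{x}_t\in\mathcal{C}$, hence $\bm{x}_t+\bm{w}\in\mathcal{C}$; and by monotonicity $f(\bm{v}^*)-f(\bm{x}_t)\leq f(\bm{x}_t+\bm{w})-f(\bm{x}_t)$. Using fact~(i) I write $\bm{w}=\sum_{r=1}^{N}\bm{g}_r$ as a sum of $N$ single-coordinate steps, each $\bm{g}_r\in\mathcal{E}$, where $N$ is the total number of ``missing'' copies and satisfies $N\leq l$ because $\bm{v}^*$ is built from only $l$ vectors. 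Telescoping this sum and applying Lemma~1 to each term (with base $\bm{x}_t$ and increment $\bm{g}_r$, exactly as in Lemma~2) gives
$$ f(\bm{v}^*)-f(\bm{x}_t)\leq \frac{1}{\beta}\sum_{r=1}^{N}\bigl(f(\bm{x}_t+\bm{g}_r)-f(\bm{x}_t)\bigr). $$

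The crux, and the single place where the argument departs from Lemma~2, is bounding each summand by $f(\bm{x}_t+\bm{e}^*)-f(\bm{x}_t)$, for which I must verify $\bm{x}_t+\bm{g}_r\in\mathcal{C}$ so that $\bm{g}_r$ is admissible for the restricted $\arg\max$. If $\bm{g}_r$ points in coordinate $j$, it appears in the decomposition of $\bm{w}$ only when $(\bm{v}^*)_j>(\bm{x}_t)_j$; since one more copy in direction $j$ raises the $j$-th entry of $\bm{x}_t$ to at most $(\bm{v}^*)_j\leq c_j$ (the last inequality being precisely $\bm{v}^*\in\mathcal{C}$), and the other coordinates are untouched and already in $\mathcal{C}$, we get $\bm{x}_t+\bm{g}_r\in\mathcal{C}$ and hence $f(\bm{x}_t+\bm{g}_r)\leq f(\bm{x}_t+\bm{e}^*)$. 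Substituting and using $N\leq l$ yields $f(\bm{v}^*)-f(\bm{x}_t)\leq \frac{l}{\beta}\bigl(f(\bm{x}_t+\bm{e}^*)-f(\bm{x}_t)\bigr)$, which rearranges to the claim. I expect this feasibility check to be the main obstacle; it is exactly what forces the use of $\bm{v}^*\vee\bm{x}_t$ in place of $\bm{v}^*$ together with the single-coordinate structure of $\mathcal{E}$, and once both are in place the remaining computation is identical to Lemma~2. The only degenerate case, in which no feasible step exists, can occur only when $\bm{v}^*\leq\bm{x}_t$, where the right-hand side is nonpositive and the bound is trivial by monotonicity.
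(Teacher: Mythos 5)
Your proposal is correct and follows essentially the same route as the paper's proof: both replace $\bm{v}^*$ by $\bm{x}_t\vee\bm{v}^*$, decompose the difference $\bm{x}_t\vee\bm{v}^*-\bm{x}_t$ into at most $l$ single-coordinate vectors from $\mathcal{E}$, verify that each intermediate point $\bm{x}_t+\bm{e}_i'$ stays in $\mathcal{C}$ (the paper deduces this directly from $\bm{x}_t\vee\bm{v}^*\in\mathcal{C}$ and down-closedness, where you argue coordinate-wise), and then telescope using Lemma~1 and the definition of the restricted $\arg\max$. Your extra care about the feasibility of each unit step and the degenerate case $\bm{v}^*\leq\bm{x}_t$ only makes explicit what the paper leaves implicit.
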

\begin{proof}
	By the monotonicity of $f$, we have
	\begin{align}\label{eq-1}
	& f(\bm{v}^*)-f(\bm{x}_t) \leq f(\bm{x}_t\vee \bm{v}^*)-f(\bm{x}_t).
	\end{align}
	Denote $\bm{v}=\bm{x}_t \vee \bm{v}^*-\bm{x}_t=\sum\nolimits_{i=1}^{l'} \bm{e}_i'$, where $\bm{e}_i'\in \mathcal{E}$. As $\bm{x}_t \vee \bm{v}^*\in \mathcal{C}$, we have $\forall i \in [l']: \bm{x}_t+\bm{e}_i'\in \mathcal{C}$. Then, we have
	\begin{align}\label{eq-2}
	& f(\bm{x}_t \vee \bm{v}^*)-f(\bm{x}_t) = f\left(\bm{x}_t+\sum\nolimits_{i=1}^{l'} \bm{e}_i'\right)-f(\bm{x}_t) \\
	& =\sum\nolimits_{k=1}^{l'} f\left(\bm{x}_t+\sum\nolimits_{i=1}^k \bm{e}_i'\right)-f\left(\bm{x}_t+\sum\nolimits_{i=1}^{k-1} \bm{e}_i'\right)\\
	& \leq \frac{1}{\beta} \sum\nolimits_{k=1}^{l'} f(\bm{x}_t+ \bm{e}'_k)-f(\bm{x}_t) \\
	& \leq \frac{l}{\beta}\cdot (f(\bm{x}_t+\bm{e}^*)-f(\bm{x}_t)),
	\end{align}
	where the first inequality holds by Lemma~1, and the second holds by $l'\leq l$ and the definition of $\bm{e}^*$. Combining Eqs.~(\refeq{eq-1}) and~(\refeq{eq-2}), the lemma holds.
\end{proof}

Similarly to Lemma~3, we derive the following lemma to bound the global optimal solution and its closest feasible solution on lattice.

\begin{lemma}\label{lemma:bound_adapted}
	Given the above modification, let $\bm{x}^*\in Frontier(\mathcal{P})$ denote a global optimal solution and $|\mathcal{E}|=m$, which is actually $n$. There exist $\bm{e}_1',\ldots,\bm{e}'_{l'} \in \mathcal{E}$ such that $\bm{v}'=\sum_{i=1}^{l'}\bm{e}_i'\in \mathcal{P}$ and $\norm{\bm{x}^*-\bm{v}'}\leq mD/l$, where $l' \leq l$.
\end{lemma}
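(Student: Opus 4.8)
The plan is to exploit the fact that, in this modified setting, $\mathcal{E}$ is an explicit axis-aligned set, which lets me build $\bm{v}'$ coordinate-by-coordinate and bypass the convex-combination argument (Lemma~3) used for the general Lemma~4. Since the vertices of $\mathcal{Q}$ are $\bm{0}$ together with $\bm{r}_i := (b/a_i)\bm{\chi}_i$ for $i\in[n]$, and $\bm{0}$ is dominated, we have $Frontier(E)=\{\bm{r}_1,\ldots,\bm{r}_n\}$ and hence $\mathcal{E}=\{\bm{e}_i:=\tfrac1l\bm{r}_i : i\in[n]\}$, each $\bm{e}_i$ supported on a single coordinate. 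Thus any lattice point $\sum_i k_i\bm{e}_i$ is specified coordinate-wise, and I can construct $\bm{v}'$ directly.

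First I would express $\bm{x}^*$ in the basis $\{\bm{r}_i\}$: set $\lambda_i := a_i x^*_i / b \ge 0$, so that $\bm{x}^* = \sum_{i=1}^n \lambda_i \bm{r}_i$ and $\sum_i \lambda_i = \bm{a}^T\bm{x}^*/b \le 1$, the last inequality because $\bm{x}^*\in\mathcal{Q}$. Then round each coefficient \emph{down} to the lattice: put $k_i := \lfloor l\lambda_i\rfloor \in \mathbb{Z}_+$ and define $\bm{v}' := \sum_i k_i\bm{e}_i = \sum_i (k_i/l)\bm{r}_i$, i.e.\ take $k_i$ copies of $\bm{e}_i$, so that $l'=\sum_i k_i$.

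The crux is verifying feasibility, and this is exactly where rounding down (rather than to the nearest lattice point) is essential, since it must land $\bm{v}'$ in \emph{both} $\mathcal{Q}$ and the box $\mathcal{C}$ at once. For $\mathcal{Q}$ and the budget $l'\le l$: $l'=\sum_i\lfloor l\lambda_i\rfloor \le l\sum_i\lambda_i \le l$, and $\bm{a}^T\bm{v}' = \sum_i a_i (k_i/l)(b/a_i) = (b/l)\,l' \le b$. For the box: coordinate-wise $v'_i = (k_i/l)(b/a_i) \le \lambda_i (b/a_i) = x^*_i \le c_i$, so $\bm{0}\le\bm{v}'\le\bm{c}$. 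Hence $\bm{v}'\in\mathcal{Q}\cap\mathcal{C}=\mathcal{P}$. Note that the single downward-rounding choice simultaneously keeps $\bm{v}'\le\bm{x}^*$ (giving box feasibility for free) and keeps $\sum_i k_i\le l$ (giving $\bm{v}'\in\mathcal{Q}$ with $l'\le l$). For the distance, since $|\lambda_i - k_i/l| = (l\lambda_i-\lfloor l\lambda_i\rfloor)/l \le 1/l$, the triangle inequality gives
\[
\norm{\bm{x}^* - \bm{v}'} = \norm{\textstyle\sum_i(\lambda_i - k_i/l)\bm{r}_i} \le \sum_i |\lambda_i - k_i/l|\,\norm{\bm{r}_i} \le \frac{1}{l}\sum_i \norm{\bm{r}_i} \le \frac{mD}{l},
\]
using $\norm{\bm{r}_i}\le D$ exactly as in Lemma~4 and $m=n$.

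The main obstacle is the simultaneous feasibility just described: rounding to the nearest lattice point could overshoot either $\bm{a}^T\bm{v}'\le b$ or $\bm{v}'\le\bm{c}$, so the proof hinges on observing that flooring each coefficient resolves both constraints together. A secondary point I would state carefully is that the norm bound invokes $\norm{\bm{r}_i}\le D$ for $\bm{r}_i=l\bm{e}_i$, which here is a vertex of $\mathcal{Q}$ (the body on which the modified algorithm actually runs) rather than of $\mathcal{P}$, so $D$ should be read as the radius bound for that constraint set.
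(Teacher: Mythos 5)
Your proof is correct, and it takes a cleaner route than the paper's. The paper first invokes the rounding construction from Lemma~4 verbatim --- producing coefficients $\theta_i'$ that are multiples of $1/l$ with $\sum_i\theta_i'=1$ and $|\theta_i-\theta_i'|\le 1/l$ --- and then repairs box infeasibility in a second pass: for every $i$ with $\theta_i'\bm{r}_i\notin\mathcal{C}$ it argues $\theta_i'>\theta_i$ (using that $\mathcal{C}$ is down-closed and $\theta_i\bm{r}_i\le\bm{x}^*\in\mathcal{C}$) and decrements $\theta_i'$ by $1/l$, after which $\sum_i\theta_i'\le 1$ and feasibility in $\mathcal{Q}\cap\mathcal{C}$ follows. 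You instead exploit the axis-aligned structure of $\mathcal{E}$ from the start and floor each barycentric coordinate in one shot, which makes $\bm{v}'\le\bm{x}^*$ (hence $\mathcal{C}$-feasibility) and $\sum_i k_i\le l$ (hence $\mathcal{Q}$-feasibility) automatic, with the same $1/l$ per-coordinate error and the same final bound. Your version is also marginally more general: you only use $\bm{a}^T\bm{x}^*\le b$, whereas the paper's reuse of Lemma~4 implicitly relies on $\sum_i\theta_i=1$, i.e.\ on $\bm{x}^*$ lying on the hyperplane $\bm{a}^T\bm{x}=b$, which holds for frontier points only after excluding the trivial case $\bm{c}\in\mathcal{Q}$. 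Your closing caveat that $\|\bm{r}_i\|\le D$ must be read as a radius bound for $\mathcal{Q}$ (whose vertices need not lie in $\mathcal{P}=\mathcal{Q}\cap\mathcal{C}$) is a real subtlety that the paper's own proof glosses over; both proofs need it, and you are right to flag it.
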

\begin{proof}
	We first use the same analysis as in Lemma~4 to construct $\bm{x}'=\sum_{i=1}^{m'} \theta_i' \bm{r}_i$ satisfying $\sum_{i=1}^{m'} \theta'_i =1$ and $\forall i \in [m']: |\theta_i-\theta_i'|\leq 1/l$.  Note that $\bm{x}^*=\sum_{i=1}^{m'} \theta_i \bm{r}_i$, and as we run the algorithm under the constraint $\mathcal{Q}$, $\{\bm{r}_1,\ldots,\bm{r}_{m'}\}$ are actually vertices of $\mathcal{Q}$. Because $\mathcal{C}$ is down-closed and $\bm{x}^*\in \mathcal{C}$, we have $\forall i: \theta_i\bm{r}_i\in \mathcal{C}$. Thus, we have $\theta'_i>\theta_i$, for every $\theta'_i:\theta_i'\bm{r}_i\notin \mathcal{C}$. As $|\theta'_i-\theta_i|\leq 1/l$, we re-assign $\theta'_i:= \theta'_i-1/l$ for every $\theta'_i$ with $\theta'_i\bm{r}_i\notin \mathcal{C}$, so that $\theta'_i\leq \theta_i$ while the property $|\theta'_i-\theta_i|\leq 1/l$ still holds. Now we have a solution  $\bm{x}'=\sum_{i=1}^{m'} \theta_i' \bm{r}_i\in \mathcal{C}$ satisfying $\sum_{i=1}^{m'} \theta'_i \leq1$ and $\forall i \in [m']: |\theta_i-\theta_i'|\leq 1/l$. As $\bm{0}\in \mathcal{Q}$, we can write $\bm{x}'= (1-\sum_{i=1}^{m'}\theta'_i)\bm{0}+ \sum_{i=1}^{m'} \theta_i' \bm{r}_i$, showing that $\bm{x}'$ is a convex combination of points in the convex body $\mathcal{Q}$, i.e., $\bm{x}'\in \mathcal{Q}$. Thus, $\bm{x}'\in \mathcal{Q} \cap \mathcal{C}=\mathcal{P} $. Let $\bm{v}'=\bm{x}'$. As the analysis of Eq.~(\refeq{eq-3}) in the proof of Lemma~4, the lemma holds.
\end{proof}

Following the proof procedure of Theorem~1, we can prove the same approximation guarantee of this modified LDGM for the common convex polytope constraint $\mathcal{P}=\{\bm{x}\mid \bm{a}^T\bm{x} \leq b, \bm{0}\leq \bm{x} \leq \bm{c}\}$. The only difference is that $\bm{v}'$ in Lemma~\ref{lemma:bound_adapted} is not necessarily in $Frontier(\mathcal{P})$, which thus requires a stronger Assumption~1, i.e., $f$ is Lipschitz continuous with constant $L$ in $\mathcal{P}$ instead of only in $Frontier(\mathcal{P})$, to derive $f(\bm{x}^*)-f(\bm{v}')\leq mDL/l$.

\section{Detailed Proofs in Section 4}

This section provides detailed proofs for the analysis of generalized LDGM under noisy environments.

\subsection{Detailed Proofs in Section 4.1}
\begin{myproof}{Theorem~3}
	Let $\bm{x}^*$ denote a global optimal solution, i.e., $f(\bm{x}^*)=\mathrm{OPT}$. We have
	\begin{align}\label{eq-4}
	& f(\bm{x}^*)-f(\bm{x}_t) \leq \langle \bm{v}^*_t, \nabla f(\bm{x}_t)\rangle /\beta\\
	& = \langle \bm{v}^*_t, \nabla f(\bm{x}_t) + \bm{\epsilon}_t \rangle /\beta -\langle \bm{v}^*_t, \bm{\epsilon}_t\rangle  /\beta\\
	&\leq \langle \bm{v}_t, \nabla f(\bm{x}_t) + \bm{\epsilon}_t \rangle /\beta -\langle \bm{v}^*_t, \bm{\epsilon}_t\rangle /\beta \\
	& =\langle \bm{v}_t, \nabla f(\bm{x}_t)  \rangle /\beta + \langle \bm{v}_t-\bm{v}^*_t, \bm{\epsilon}_t\rangle /\beta,
	\end{align}
	where the first inequality holds by the monotonicity of $f$ and the definition of DR-submodularity ratio, and the second one holds by the update policy, i.e., $\bm{v}_t=\arg\max_{\bm{v}\in \mathcal{P}} \langle \bm{v},\nabla f(\bm{x}_t)+\bm{\epsilon}_t\rangle$. As the FW algorithm assumes the Lipschitz continuous property on gradients, i.e., Assumption~3, we have, for any $\bm{x}, \bm{v}$,
	\begin{align}\label{eq-5}
	f(\bm{x})+\frac{1}{l}  \langle \bm{v}, \nabla f(\bm{x})  \rangle -f(\bm{x}+\bm{v}/l) \leq \frac{L_1}{2l^2}.
	\end{align}
	By combining Eqs.~(\refeq{eq-4}) and~(\refeq{eq-5}), and using $\bm{x}_{t+1}=\bm{x}_t+\frac{1}{l}\bm{v}_t$, we have
	\begin{equation}
	\begin{aligned}
	&f(\bm{x}_{t+1}) - f(\bm{x}^*)\\
	&\geq  \left(1-\frac{\beta}{l}\right)( f(\bm{x}_t)-f(\bm{x}^*) )-\frac{L_1}{2l^2} - \frac{1}{l}\langle \bm{v}_t-\bm{v}^*_t, \bm{\epsilon}_t  \rangle.
	\end{aligned}
	\end{equation}
	
	By induction, we can derive that $ f(\bm{x}_{l}) - f(\bm{x}^*)\geq -(1-\beta/l)^l f(\bm{x}^*)- \frac{L_1}{2l} - \frac{1}{l} \sum_{t=0}^{l-1} (1-\beta/l)^{l-1-t} \langle \bm{v}_t-\bm{v}^*_t, \bm{\epsilon}_t  \rangle $. Thus, the theorem holds.
\end{myproof}

\begin{myproof}{Lemma~5}
	The proof is similar to that of Lemma~2. Let $\bm{e}_{i^*_t}= \arg\max_{\bm{e}_i\in\mathcal{E}}f(\bm{x}_t+\gamma\bm{e}_i)+\epsilon_{t,i}$ be the vector chosen in the $t$-th iteration of Algorithm~3, where $i^*_t$ is the order label of the corresponding vector in $\mathcal{E}$. By the monotonicity of $f$ and Lemma~1, we have
	\begin{align}\label{eq-6}
	&f(\bm{v}^*_\gamma)-f(\bm{x}_t) \leq f(\bm{x}_t+\bm{v}^*_\gamma)-f(\bm{x}_t)\\
	& = f\left(\bm{x}_t+\sum\nolimits_{j=1}^{l/\gamma} \gamma\bm{e}_{i_j}\right)-f(\bm{x}_t) \\
	&=\sum_{k=1}^{l/\gamma} f\left(\bm{x}_t+\sum\limits_{j=1}^k \gamma\bm{e}_{i_j}\right)-f\left(\bm{x}_t+\sum\limits_{j=1}^{k-1} \gamma\bm{e}_{i_j}\right) \\
	& \leq \frac{1}{\beta} \sum_{k=1}^{l/\gamma} (f(\bm{x}_t+ \gamma\bm{e}_{i_k})-f(\bm{x}_t) )\\
	&= \frac{1}{\beta}\sum_{k=1}^{l/\gamma} (f(\bm{x}_t+ \gamma\bm{e}_{i_k})+\epsilon_{t,i_k}-\epsilon_{t,i_k}-f(\bm{x}_t))\\
	&  \leq \frac{l}{\gamma \beta} (f(\bm{x}_t+\gamma\bm{e}_{i^*_t})-f(\bm{x}_t))+\frac{1}{\beta} \sum_{k=1}^{l/\gamma} (\epsilon_{t,i^*_t}-\epsilon_{t,i_k}),
	\end{align}
	where the last inequality holds by the definition of $\bm{e}_{i^*_t}$.
	
	Note that we assume $\gamma$ is a positive integer. Thus, we have
	\begin{align}\label{eq-7}
	&\frac{f(\bm{x}_t+\gamma\bm{e}_{i^*_t})-f(\bm{x}_t)}{\gamma} \\
	& =\frac{\sum_{j=1}^{\gamma}f(\bm{x}_t+j\bm{e}_{i^*_t})-f(\bm{x}_t+(j-1)\bm{e}_{i^*_t})}{\gamma}\\
	&\leq \frac{\sum_{j=1}^{\gamma}f(\bm{x}_t+\bm{e}_{i^*_t})-f(\bm{x}_t)}{\gamma\beta} \\
	&= \frac{f(\bm{x}_t+\bm{e}_{i^*_t})-f(\bm{x}_t)}{\beta},
	\end{align}
	where the inequality holds by the definition of DR-submodularity ratio.
	
	By combining Eqs.~(\refeq{eq-6}) and~(\refeq{eq-7}), the lemma holds.	
\end{myproof}

By Lemmas~4 and~5, the proof of Theorem~4 can be accomplished in the same way as Theorem~1. The only difference is that we need to consider the error term due to noise.

\begin{myproof}{Theorem~4}
	Let $\bm{x}^*\in Frontier(\mathcal{P})$ denote a global optimal solution, i.e., $f(\bm{x}^*)=\mathrm{OPT}$. Let $\bm{v}'_\gamma\in Frontier(\mathcal{P})$ be the solution suggested by Lemma~4 with $\mathcal{E}$ and $l$ replaced by $\gamma \cdot \mathcal{E}$ and $l/\gamma$, respectively. Let $\bm{v}^*_\gamma$ be the best solution one can achieve using the sum of $l/\gamma$ vectors in $\gamma\cdot\mathcal{E}$. According to Lemma~4 and Assumption~1, we have
	\begin{align}
	f(\bm{x}^*)-f(\bm{v}^*_\gamma)&\leq f(\bm{x}^*)-f(\bm{v}'_\gamma)\\
	&\leq L\cdot \norm{\bm{x}^*-\bm{v}'_\gamma} \leq mDL\gamma/l,
	\end{align}
	implying $f(\bm{v}^*_\gamma) \geq \mathrm{OPT}- mDL\gamma/l$.
	
	By the algorithm procedure and Lemma~5, we have
	\begin{equation}
	\begin{aligned}
	& f(\bm{x}_{t+1})- f(\bm{x}_{t}) \\
	& \geq \frac{\beta^2}{l}(f(\bm{v}^*_\gamma)-f(\bm{x}_t))-\frac{\beta}{l} \sum_{j=1}^{l/\gamma} (\epsilon_{t,i^*_t}-\epsilon_{t,i_j}) \\
	& \geq  \frac{\beta^2}{l}\left( \mathrm{OPT}\!-\! \frac{mDL\gamma}{l}\!-\!f(\bm{x}_t)\right) -\frac{\beta}{l} \sum_{j=1}^{l/\gamma} (\epsilon_{t,i^*_t}-\epsilon_{t,i_j}).
	\end{aligned}
	\end{equation}
	By a simple transformation, we can equivalently get
	\begin{align}
	&\mathrm{OPT} - \frac{mDL\gamma}{l} - f(\bm{x}_{t+1}) \\
	&\leq \left(1-\frac{\beta^2}{l}\right)\cdot \left(\mathrm{OPT} - \frac{mDL\gamma}{l} - f(\bm{x}_t)\right)\\
	&\quad +\frac{1}{\beta} \sum_{j=1}^{l/\gamma} (\epsilon_{t,i^*_t}-\epsilon_{t,i_j}).
	\end{align}
	By induction on $t$, the theorem holds.
\end{myproof}

\textbf{Comparison between FW and LDGM}. To compare FW with LDGM under additive noise, we compare the last term of their approximation guarantees in Theorems~3 and~4, which are incurred by noise. In the comparison, we consider DR-submodular objective functions, i.e., $\beta=1$. Let $\mathrm{Err}^{FW}_t:=\frac{1}{l}\langle \bm{v}_t-\bm{v}^*_t, \bm{\epsilon}_t   \rangle $ and $\mathrm{Err}^{LDGM}_t:=\frac{1}{l}\sum_{k=1}^{l/\gamma} (\epsilon_{t,i^*_t}-\epsilon_{t,i_k})$. That is, we are to compare $\mathrm{Err}^{FW}_t$ with $\mathrm{Err}^{LDGM}_t$. As LDGM uses the function value and FW uses the gradient, they are not directly comparable. Thus, we consider the case where only noisy function values can be obtained, and use forward difference to estimate the gradient while keeping the two algorithms run with a similar time complexity.

Let $f(\bm{x})$ and $\tilde{f}(\bm{x})$ denote the exact and noisy function values, respectively. Assume the additive noise model: $f(\bm{x})-\epsilon \leq \tilde{f}(\bm{x})\leq f(\bm{x})+\epsilon$. For LDGM, we have $$|\mathrm{Err}^{LDGM}_t|\leq 2\epsilon/\gamma.$$
For FW, we use forward difference as the estimate of gradient: for each $i\in [n]$,
$$\frac{\partial f(\bm{x})}{\partial x_i} \approx \frac{f(\bm{x}+a\bm{\chi}_i)-f(\bm{x})}{a} \approx \frac{\tilde{f}(\bm{x}+a\bm{\chi}_i)-\tilde{f}(\bm{x})}{a}.$$
We have
$$ \left| \frac{\tilde{f}(\bm{x}+a\bm{\chi}_i)-\tilde{f}(\bm{x})}{a}-\frac{f(\bm{x}+a\bm{\chi}_i)-f(\bm{x})}{a}\right| \leq \frac{2\epsilon}{a}, $$
and
\begin{align}
& \left| \frac{f(\bm{x}+a\bm{\chi}_i)-f(\bm{x})}{a}-\frac{\partial f(\bm{x})}{\partial x_i}\right| \\
& = \left| \frac{1}{a}\int_{0}^{a}\frac{\partial f(\bm{x}+t\bm{\chi}_i)}{\partial x_i}\mathrm{d}t-\frac{\partial f(\bm{x})}{\partial x_i}  \right| \\
& \leq \left| \frac{1}{a}\int_{0}^{a}\frac{\partial f(\bm{x}+a\bm{\chi}_i)}{\partial x_i}\mathrm{d}t-\frac{\partial f(\bm{x})}{\partial x_i}  \right| \\
& = \left| \frac{\partial f(\bm{x}+a\bm{\chi}_i)}{\partial x_i}-\frac{\partial f(\bm{x})}{\partial x_i}  \right| \\
& \leq L_1 ||\bm{x}+a\bm{\chi}_i-\bm{x}|| = L_1 a,
\end{align}
where the first inequality holds by the DR-submodularity of $f$ and $\frac{\partial f(\bm{x}+t\bm{\chi}_i)}{\partial x_i}\leq \frac{\partial f(\bm{x})}{\partial x_i}$, and the last inequality holds by the Lipschitz condition on gradient, i.e., Assumption~3. Thus, $|(\bm{\epsilon}_t)_i|=|\frac{\partial f(\bm{x})}{\partial x_i}- \frac{\tilde{f}(\bm{x}+a\bm{\chi}_i)-\tilde{f}(\bm{x})}{a}| \leq \frac{2\epsilon}{a}+L_1a$. By setting the best parameter $a=\sqrt{2\epsilon/L_1}$, we have $|(\bm{\epsilon}_t)_i|\leq 2\sqrt{2\epsilon L_1}$. Thus,
$$ |\mathrm{Err}^{FW}_t| =
\left| \frac{1}{l} \sum_{i=1}^{n} (\bm{v}_t-\bm{v}_t^*)_i\cdot (\bm{\epsilon}_t)_i\right| \leq \frac{4D\sqrt{2\epsilon nL_1}}{l}. $$
We can see that as $\epsilon$ decreases, $\mathrm{Err}^{FW}_t$ shrinks with $O(\sqrt{\epsilon})$ speed, while $\mathrm{Err}^{LDGM}_t$ shrinks with $O(\epsilon)$ speed. Also, by setting large $\gamma$, we may reduce influences from noise to generalized LDGM. In all iterations, we can assume $\epsilon_{t,i}=0$ for all $i$ with $f(\bm{x}_t+\gamma\bm{e}_i)<f(\bm{x}_t+\gamma\bm{e}_{i_t^*})+\epsilon_{t,i^*_t}$, without making any difference to the output of the algorithm. As $\gamma$ becomes greater, the difference among all options $f(\bm{x}_t+\gamma \bm{e}_i)$ becomes more differentiable, implying that less noise terms have actual influences.

\subsection{Detailed Proofs in Section 4.2}

\begin{myproof}{Lemma~6}
	\begin{align}
	&d_{t}^{(\bm{e})}-\Delta_t^{(\bm{e})} =  (1-\rho_t)d_{t-1}^{(\bm{e})} + \rho_t \tilde{\Delta}_t^{(\bm{e})} -\Delta_t^{(\bm{e})} \\
	& = (1-\rho_t)(d_{t-1}^{(\bm{e})}-\Delta_{t-1}^{(\bm{e})}) \\
	& \quad + (1-\rho_t)\Delta_{t-1}^{(\bm{e})} + \rho_t \tilde{\Delta}_t^{(\bm{e})} - \Delta_t^{(\bm{e})}  \\
	& = (1-\rho_t)(d_{t-1}^{(\bm{e})}-\Delta_{t-1}^{(\bm{e})}) \\
	& \quad + (1-\rho_t)(\Delta_{t-1}^{(\bm{e})}- \Delta_t^{(\bm{e})}) + \rho_t (\tilde{\Delta}_t^{(\bm{e})} - \Delta_t^{(\bm{e})}).
	\end{align}
	Taking the square on both sides, and then taking the expectation conditioned on the history $\mathcal{F}_t$ up to iteration $t$, we have
	\begin{align}
	& \mathbb{E}[(d_{t}^{(\bm{e})}-\Delta_t^{(\bm{e})})^2\mid \mathcal{F}_t] \\
	&= (1-\rho_t)^2 \mathbb{E}[(d_{t-1}^{(\bm{e})}-\Delta_{t-1}^{(\bm{e})})^2\mid \mathcal{F}_t] \\
	& \quad + (1-\rho_t)^2 \mathbb{E}[(\Delta_{t-1}^{(\bm{e})}- \Delta_t^{(\bm{e})})^2\mid \mathcal{F}_t] \\
	& \quad + \rho_t^2 \mathbb{E}[(\tilde{\Delta}_t^{(\bm{e})} - \Delta_t^{(\bm{e})})^2 \mid \mathcal{F}_t] \\
	& \quad + 2(1-\rho_t)^2 \mathbb{E}[(d_{t-1}^{(\bm{e})}-\Delta_{t-1}^{(\bm{e})})(\Delta_{t-1}^{(\bm{e})}- \Delta_t^{(\bm{e})})\mid \mathcal{F}_t].
	\end{align}
	Applying the AM-GM inequality to the last term above, i.e., $2(d_{t-1}^{(\bm{e})}-\Delta_{t-1}^{(\bm{e})})(\Delta_{t-1}^{(\bm{e})}- \Delta_t^{(\bm{e})}) \leq \frac{\rho_t}{2}(d_{t-1}^{(\bm{e})}-\Delta_{t-1}^{(\bm{e})})^2 + \frac{2}{\rho_t}(\Delta_{t-1}^{(\bm{e})}- \Delta_t^{(\bm{e})})^2$, we have
	\begin{align}
	& \mathbb{E}[(d_{t}^{(\bm{e})}-\Delta_t^{(\bm{e})})^2\mid \mathcal{F}_t] \\
	& \leq (1-\rho_t)^2 \left(1+\rho_t/2\right) \mathbb{E}[(d_{t-1}^{(\bm{e})}-\Delta_{t-1}^{(\bm{e})})^2\mid \mathcal{F}_t] \\
	& \quad + (1-\rho_t)^2 \left(1+2/\rho_t\right) \mathbb{E}[(\Delta_{t-1}^{(\bm{e})}- \Delta_t^{(\bm{e})})^2\mid \mathcal{F}_t] \\
	& \quad + \rho_t^2 \mathbb{E}[(\tilde{\Delta}_t^{(\bm{e})} - \Delta_t^{(\bm{e})})^2\mid \mathcal{F}_t].
	\end{align}
	Because $(1-\rho_t)^2 \leq 1-\rho_t\leq 1$, we have
	\begin{align}
	& \mathbb{E}[(d_{t}^{(\bm{e})}-\Delta_t^{(\bm{e})})^2\mid \mathcal{F}_t] \\
	& \leq (1-\rho_t/2)\mathbb{E}[(d_{t-1}^{(\bm{e})}-\Delta_{t-1}^{(\bm{e})})^2\mid\mathcal{F}_t] \\
	& \quad + (1+2/\rho_t) \mathbb{E}[(\Delta_{t-1}^{(\bm{e})}- \Delta_t^{(\bm{e})})^2\mid \mathcal{F}_t] \\
	& \quad + \rho_t^2 \mathbb{E}[(\tilde{\Delta}_t^{(\bm{e})} - \Delta_t^{(\bm{e})})^2\mid \mathcal{F}_t].
	\end{align}
	Thus, $\mathbb{E}[(d_{t}^{(\bm{e})}-\Delta_t^{(\bm{e})})^2\mid \mathcal{F}_t]$ is upper bounded by three terms. We leave the first term $(d_{t-1}^{(\bm{e})}-\Delta_{t-1}^{(\bm{e})} )^2$ for induction. For the second term, we have
	\begin{align}
	& \Delta_{t-1}^{(\bm{e})}- \Delta_t^{(\bm{e})}\\
	& = (f(\bm{x}_{t-1}+\bm{e})-f(\bm{x}_{t-1}))-(f(\bm{x}_{t}+\bm{e})-f(\bm{x}_{t}))\\
	& = \langle \nabla f(\bm{x}_{t-1}+\xi_1 \bm{e}), \bm{e} \rangle - \langle \nabla f(\bm{x}_{t}+\xi_2 \bm{e}), \bm{e} \rangle \\
	& = \langle \nabla f(\bm{x}_{t-1}+\xi_1 \bm{e})-\nabla f(\bm{x}_{t}+\xi_2 \bm{e}), \bm{e} \rangle \\
	& \leq || \nabla f(\bm{x}_{t-1}+\xi_1 \bm{e})-\nabla f(\bm{x}_{t}+\xi_2 \bm{e})||\cdot || \bm{e}|| \\
	& \leq L_1 || \bm{x}_{t-1}-\bm{x}_{t}+(\xi_1-\xi_2) \bm{e}||\cdot || \bm{e}||,
	\end{align}
	where the second equality is derived by applying the mean-value theorem with $\xi_1, \xi_2 \in (0,1)$, and the last inequality holds by the Lipschitz condition on gradients, i.e., Assumption~3. Note that $ ||\bm{x}_{t}-\bm{x}_{t-1} || $ and $|| \bm{e}|| $ are upper bounded by $D/l$ according to Assumption~2. Thus, we continue the inequality by
	\begin{align}
	\leq L_1 (D/l+|\xi_1-\xi_2|D/l) \cdot D/l  	\leq 2L_1D^2/l^2,
	\end{align}
	implying $\mathbb{E}[(\Delta_{t-1}^{(\bm{e})}- \Delta_t^{(\bm{e})})^2\mid \mathcal{F}_t] \leq 4L_1^2D^4/l^4$. Next we consider the third term, i.e., $\tilde{\Delta}_t^{(\bm{e})} - \Delta_t^{(\bm{e})}$, introduced by the stochastic process. Applying the mean-value theorem with $\xi_1, \xi_2 \in (0,1)$, we have
	\begin{equation}
	\begin{aligned}
	& \tilde{\Delta}_t^{(\bm{e})} - \Delta_t^{(\bm{e})} = \langle \nabla \tilde{f}(\bm{x}_t+\xi_1\bm{e}) - \nabla f(\bm{x}_t+\xi_2\bm{e}),\bm{e} \rangle \\
	& \leq || \nabla \tilde{f}(\bm{x}_t+\xi_1\bm{e}) - \nabla f(\bm{x}_t+\xi_2\bm{e}) ||\cdot ||\bm{e}|| \\
	&  = || \nabla \tilde{f}(\bm{x}_t+\xi_1\bm{e}) -\nabla f(\bm{x}_t+\xi_1\bm{e}) \\
	& \quad + \nabla f(\bm{x}_t+\xi_1\bm{e}) - \nabla f(\bm{x}_t+\xi_2\bm{e}) ||\cdot ||\bm{e}|| \\
	& \leq || \nabla \tilde{f}(\bm{x}_t+\xi_1\bm{e}) -\nabla f(\bm{x}_t+\xi_1\bm{e}) || \cdot D/l \\
	& \quad + || \nabla f(\bm{x}_t+\xi_1\bm{e}) - \nabla f(\bm{x}_t+\xi_2\bm{e}) ||\cdot D/l \\
	& \leq || \nabla \tilde{f}(\bm{x}_t+\xi_1\bm{e}) -\nabla f(\bm{x}_t+\xi_1\bm{e}) || \cdot D/l + L_1 D^2/l^2,
	\end{aligned}
	\end{equation}
	Taking the square, we have
	\begin{equation}
	\begin{aligned}
	&(\tilde{\Delta}_t^{(\bm{e})} - \Delta_t^{(\bm{e})})^2\\
	&  \leq || \nabla \tilde{f}(\bm{x}_t+\xi_1\bm{e}) -\nabla f(\bm{x}_t+\xi_1\bm{e}) ||^2 \cdot \frac{D^2}{l^2} \\
	& +\frac{L_1^2D^4}{l^4} + \frac{2L_1D^3}{l^3}|| \nabla \tilde{f}(\bm{x}_t+\xi_1\bm{e}) -\nabla f(\bm{x}_t+\xi_1\bm{e}) ||.
	\end{aligned}
	\end{equation}
	Applying Assumption~4 and Jensen's inequality, i.e., $\mathbb{E}[|| \nabla \tilde{f}(\bm{x}) -\nabla f(\bm{x})||]^2 \leq \mathbb{E}[|| \nabla \tilde{f}(\bm{x}) -\nabla f(\bm{x})||^2]$, we have
	\begin{align}
	\mathbb{E}[(\tilde{\Delta}_t^{(\bm{e})} - \Delta_t^{(\bm{e})})^2\mid \mathcal{F}_t] &\leq \frac{\sigma^2D^2}{l^2} +\frac{L_1^2D^4}{l^4} + \frac{2L_1\sigma D^3}{l^3}\\
	& \leq \frac{\sigma^2D^2+L_1^2D^4+2L_1\sigma D^3 }{l^2}.
	\end{align}
	For notational convenience, we denote the numerator as $\hat{\sigma}^2$. Now putting all together, we have
	\begin{align}
	&\mathbb{E}[(d_{t}^{(\bm{e})}-\Delta_t^{(\bm{e})})^2\mid \mathcal{F}_t] \\
	&\leq (1-\rho_t/2)\mathbb{E}[(d_{t-1}^{(\bm{e})}-\Delta_{t-1}^{(\bm{e})})^2\mid \mathcal{F}_t] \\
	& \quad + (1+2/\rho_t) \frac{4L_1^2D^4}{l^4} + \rho_t^2 \frac{\hat{\sigma}^2}{l^2}.
	\end{align}
	Taking the expectation over $\mathcal{F}_t$, and multiplying both sides by $l^2$, we have
	\begin{align}
	& a_t \leq \left(1-\frac{\rho_t}{2}\right)a_{t-1} + \left(1+\frac{2}{\rho_t}\right) \frac{4L_1^2D^4}{l^2} + \rho_t^2\hat{\sigma}^2,
	\end{align}
	where $a_t=l^2\mathbb{E}[(d_{t}^{(\bm{e})}-\Delta_t^{(\bm{e})})^2]$. As $\rho_t=\frac{4}{(t+8)^{2/3}}$, applying Lemma 2 in~\cite{mokhtari2017conditional} leads to
	\begin{align}
	& a_t \leq \frac{Q}{(t+9)^{2/3}},
	\end{align}
	where $Q=\max\{ l^2 (d_{0}^{(\bm{e})}-\Delta_0^{(\bm{e})})^2 9^{2/3}, 16\hat{\sigma}^2+12L_1^2D^4\}$. Note that $(d_{0}^{(\bm{e})}-\Delta_0^{(\bm{e})})^2\leq (\Delta_0^{(\bm{e})})^2 \leq L^2D^2/l^2$ by Assumptions~1 and~2. Thus, we have
	$$Q\leq \max\{L^2D^29^{2/3}, 16\hat{\sigma}^2+12L_1^2D^4\},$$
	implying that the lemma holds.
\end{myproof}

\begin{myproof}{Lemma~7}
	In the $t$-th iteration of Algorithm~3, let $\bm{e}^*= \arg\max_{\bm{e}\in\mathcal{E}} d_{t}^{(\bm{e})}$ be the vector chosen in line~7. Then, we have
	\begin{align}
	& f(\bm{v}^*)-f(\bm{x}_t) \leq f(\bm{x}_t+\bm{v}^*)-f(\bm{x}_t) \\
	& = f\left(\bm{x}_t+\sum\nolimits_{i=1}^l \bm{e}_i\right)-f(\bm{x}_t) \\
	& =\sum\nolimits_{k=1}^l f\left(\bm{x}_t+\sum\nolimits_{i=1}^k \bm{e}_i\right)-f\left(\bm{x}_t+\sum\nolimits_{i=1}^{k-1} \bm{e}_i\right) \\
	& \leq \frac{1}{\beta} \sum\nolimits_{k=1}^l f(\bm{x}_t+ \bm{e}_k)-f(\bm{x}_t) = \frac{1}{\beta} \sum\nolimits_{k=1}^l \Delta_t^{(\bm{e}_k)} \\
	& = \frac{1}{\beta} \sum\nolimits_{k=1}^l d_t^{(\bm{e}_k)} + \frac{1}{\beta} \sum\nolimits_{k=1}^l ( \Delta_t^{(\bm{e}_k)} -d_t^{(\bm{e}_k)}) \\
	& \leq \frac{l}{\beta} d_t^{(\bm{e}^*)} + \frac{1}{\beta} \sum\nolimits_{k=1}^l ( \Delta_t^{(\bm{e}_k)} -d_t^{(\bm{e}_k)}) \\
	& = \frac{l}{\beta} (f(\bm{x}_{t+1})-f(\bm{x}_t))  + \frac{l}{\beta} (d_t^{(\bm{e}^*)}- \Delta_t^{(\bm{e}^*)}) \\
	&  \quad + \frac{1}{\beta} \sum\nolimits_{k=1}^l ( \Delta_t^{(\bm{e}_k)} -d_t^{(\bm{e}_k)}),
	\end{align}
	where the first inequality holds by the monotonicity of $f$, the second holds by Lemma~1, and the third is due to the selection of $\bm{e}^*$. Taking the expectation on both sides,
	\begin{align}
	&\mathbb{E}[f(\bm{v}^*)-f(\bm{x}_t)] \leq \frac{l}{\beta} \mathbb{E}[f(\bm{x}_{t+1})-f(\bm{x}_t)] \\
	& \quad+ \frac{l}{\beta} \mathbb{E} [d_t^{(\bm{e}^*)}- \Delta_t^{(\bm{e}^*)}] + \frac{1}{\beta} \sum\nolimits_{k=1}^l \mathbb{E} [ \Delta_t^{(\bm{e}_k)} -d_t^{(\bm{e}_k)}] \\
	& \leq \frac{l}{\beta} \mathbb{E}[f(\bm{x}_{t+1})-f(\bm{x}_t)] \\
	& \quad + \frac{l}{\beta} \frac{Q^{1/2}}{l(t+9)^{1/3}} + \frac{1}{\beta} \sum\nolimits_{k=1}^l \frac{Q^{1/2}}{l(t+9)^{1/3}} \\
	& = \frac{l}{\beta} \mathbb{E}[f(\bm{x}_{t+1})-f(\bm{x}_t)] + \frac{2Q^{1/2}}{\beta (t+9)^{1/3}},
	\end{align}
	where the second inequality holds by applying Jensen's inequality and Lemma~6. Thus, the lemma holds.
\end{myproof}

\begin{myproof}{Theorem~5}
	Let $\bm{x}^*\in Frontier(\mathcal{P})$ denote a global optimal solution, i.e., $f(\bm{x}^*)=\mathrm{OPT}$. Let $\bm{v}'\in Frontier(\mathcal{P})$ be the solution suggested by Lemma~4. Let $\bm{v}^*$ be the best solution one can achieve using the sum of $l$ vectors in $\mathcal{E}$. As the analysis in the proof of Theorem~1, we have $f(\bm{v}^*) \geq \mathrm{OPT}- mDL/l$.
	
	By Lemma~7, we have
	\begin{align}
	&\mathbb{E}[f(\bm{x}_{t+1})- f(\bm{x}_{t})]\\
	& \geq \frac{\beta}{l} \mathbb{E}[(f(\bm{v}^*)-f(\bm{x}_t))] - \frac{2Q^{1/2}}{l (t+9)^{1/3}}\\
	&\geq  \frac{\beta}{l}\mathbb{E}[ \mathrm{OPT}- mDL/l-f(\bm{x}_t)] - \frac{2Q^{1/2}}{l (t+9)^{1/3}}.
	\end{align}
	By a simple transformation, we can equivalently get
	\begin{align}
	& \mathbb{E}[\mathrm{OPT} -  mDL/l - f(\bm{x}_{t+1})] \\
	& \leq (1-\beta/l) \mathbb{E}[\mathrm{OPT} -  mDL/l - f(\bm{x}_t)]+\frac{2Q^{\frac{1}{2}}}{l (t+9)^{\frac{1}{3}}}.
	\end{align}
	By induction, we have	
	\begin{equation}
	\begin{aligned}
	& \mathbb{E}[\mathrm{OPT} -  mDL/l - f(\bm{x}_{l})]\\
	& \leq (1-\beta/l)^l\mathbb{E}[\mathrm{OPT} - mDL/l- f(\bm{x}_0)]+ \sum_{t=0}^{l-1} \frac{2Q^{\frac{1}{2}}}{l (t+9)^{\frac{1}{3}}} \\
	& \leq e^{-\beta}\mathbb{E}[\mathrm{OPT} - mDL/l]+ \frac{2Q^{\frac{1}{2}}}{l^{\frac{1}{3}}},
	\end{aligned}
	\end{equation}
	leading to
	$$\mathbb{E}[f(\bm{x}_{l})] \geq (1-e^{-\beta})\left(\mathrm{OPT}-\frac{mDL}{l}\right) - \frac{2Q^{\frac{1}{2}}}{l^{\frac{1}{3}}}.$$
\end{myproof}

\end{document}